\newtheorem{thm}{Theorem}[section]
\newtheorem{cor}[thm]{Corollary}
\newtheorem{lem}[thm]{Lemma}
\newtheorem{prop}[thm]{Proposition}
\newtheorem{defn}[thm]{Definition}
\numberwithin{equation}{section}
\newcommand{\limd}{\mathop{\lim}\limits}
\newcommand{\N}{\mathbb{N}}
\newcommand{\R}{\mathbb{R}}
\def \bN {\mathbb N}
\def \bZ {\mathbb Z}
\def \bR {\mathbb R}
\def \bC {\mathbb C}
\def \bt {{\bf t}}
\def \bv {{\bf v}}
\def \bx {{\bf x}}
\def \bz {{\bf z}}
\def \cB {{\cal B}}
\def \cD {{\cal D}}
\def \cH {{\cal H}}
\def \cI {{\cal I}}
\def \cL {{\cal L}}
\def \cM {{\cal M}}
\def \cS {{\cal S}}
\def \cP {{\cal P}}
\def \cE {{\cal E}}
\def \and {\, \mbox{\rm and}\, }
\def \sinc {\,{\rm sinc}\,}
\def \linearspan {\,{\rm span}\,}
\def \supp {\,{\rm supp}\,}
\newcommand{\spB}{\mathcal{B}}
\newcommand{\spI}{\mathcal{I}}
\newcommand{\spM}{\mathcal{M}}
\newcommand{\spS}{\mathcal{S}}
\newcommand{\vb}{\boldsymbol{b}}
\newcommand{\vc}{\boldsymbol{c}}
\newcommand{\vh}{\boldsymbol{h}}
\newcommand{\vj}{\boldsymbol{j}}
\newcommand{\vu}{\boldsymbol{u}}
\newcommand{\vs}{\boldsymbol{s}}
\newcommand{\vt}{\boldsymbol{t}}
\newcommand{\vzero}{\boldsymbol{0}}
\newcommand{\vxi}{\boldsymbol{\xi}}
\newcommand{\vy}{\boldsymbol{y}}
\newcommand{\me}{\text{e}}
\newcommand{\Addptsmalllabel}{(A4)\xspace}
\newcommand{\Hprimelabel}{(A3)\xspace}
\newcommand{\nonsingassum}{(A1)\xspace}
\DeclareMathOperator*{\argmin}{argmin}
\title{Reproducing Kernel Banach Spaces with the $\ell^1$ Norm\thanks{Supported
by Guangdong Provincial Government of China through the
``Computational Science Innovative Research Team" program.}}
\author{\quad Guohui Song\thanks{School of Mathematical and Statistical Sciences, Arizona State University, Tempe, AZ 85287, USA. E-mail address: {\it gsong9@asu.edu}.}, \quad Haizhang Zhang\thanks{School of Mathematics
and Computational Science and Guangdong Province Key Laboratory of Computational Science,
 Sun Yat-sen University, Guangzhou 510275, P. R. China. E-mail address: {\it zhhaizh2@sysu.edu.cn}.}, \quad and \quad Fred J. Hickernell\thanks{Department of Applied Mathematics, Illinois Institute of Technology, 10 W. 32$\text{nd}$ St., Chicago, IL, 60616, USA. E-mail address: {\it hickernell@iit.edu}. This author's work was supported in part by National Science Foundation grants DMS-0713848 and DMS-1115392.}}
\date{}
\begin{document}
\maketitle

\begin{abstract}
Targeting at sparse learning, we construct Banach spaces $\cB$ of functions on an input space $X$ with the following properties: (1) $\cB$ possesses an $\ell^1$ norm in the sense that $\cB$ is isometrically isomorphic to the Banach space of integrable functions on $X$ with respect to the counting measure; (2) point evaluations are continuous linear functionals on $\cB$ and are representable through a bilinear form with a kernel function; and (3) regularized learning schemes on $\cB$ satisfy the linear representer theorem. Examples of kernel functions admissible for the construction of such spaces are given.
\vspace*{0.2cm}

\noindent{\bf Keywords}: reproducing kernel Banach spaces, sparse learning, lasso, basis pursuit, regularization, the representer theorem, the Brownian bridge kernel, the exponential kernel.
\end{abstract}

\vspace*{0.3cm}

\section{Introduction}
\setcounter{equation}{0}

It is now widely known that minimizing a loss function regularized by the $\ell^1$ norm yields sparsity in the resulting minimizer. The sparsity is essential for extracting relatively low dimensional features from sample data that usually live in a high dimensional space. When the square loss function is used in regression, the method is known as the lasso in statistics \cite{Tibshirani1996}. Recently, the methodology has been applied to compressive sensing where it is referred to as basis pursuit \cite{Cand`es2006,Chen1998}. The purpose of this paper is to establish an appropriate foundation for developing $\ell^1$ regularization for machine learning with reproducing kernels.

Past research on learning with kernels \cite{CuckerSmale2002,CuckerZhou2007,Evgeniou2000,Scholkopf2001a,Shawe-Taylor2004,Song2010,Vapnik1998} has mainly been built upon the theory of reproducing kernel Hilbert spaces (RKHS) \cite{Aronszajn1950}. There are many reasons that account for the success from such a choice. RKHS are by definition the Hilbert space of functions where point evaluations are continuous linear functionals. Sample data available for learning are usually modeled by point evaluations of the unknown target function. Therefore, RKHS is a class of function spaces where sampling is stable, a desirable feature in applications. By the Riesz representation theorem, continuous linear functionals on a Hilbert space are representable by the inner product on the space. This gives rise to the representation of point evaluation functionals on an RKHS by its associated reproducing kernel and leads to the celebrated representer theorem \cite{Kimeldorf1971} in machine learning.  This theorem states that the original minimization problem in a typically infinite dimensional RKHS can be converted into a problem of determining finitely many coefficients in a linear combination of the kernel function with one argument evaluated at the data sites.

For this representer theorem, the nonzero coefficients to be found are generally as many as the sampling points. For the sake of economy, it is hence desirable to regularize the class of candidate functions by some $\ell^1$ norm to force most of the coefficients to be zero. An attempt in this direction is the linear programming approach to coefficient based regularization for machine learning \cite{Scholkopf2001a}. The method lacks a general mathematical foundation like the RKHS though. In particular, it is unknown whether the algorithm results by some representer theorem from a minimization on an infinite dimensional Banach space. A consequence is that the hypothesis error in the learning rate estimate will not go away automatically as in the RKHS case \cite{Xiao2010}.

We aim at combining the reproducing kernel methods and the $\ell^1$ regularization technique. Specifically, we desire to construct function spaces with the following properties:
\begin{itemize}
\item[---] point evaluation functionals on the space are continuous and can be represented by some kernel function;

\item[---] the space possesses an $\ell^1$ norm;

\item[---] a linear representer theorem holds for regularized learning schemes on the space.
\end{itemize}
There are three ways of representing continuous point evaluation functionals in a function space: by an inner product, by a semi-inner product \cite{Giles1967,Lumer1961}, or by a bilinear form on the tensor product of the space and its dual space. Since the space we constructed is expected to have an $\ell^1$ norm, it can not have an inner product. Semi-inner products are a natural substitute for inner products in Banach spaces. A notion of reproducing kernel Banach spaces (RKBS) was established in \cite{Zhang2009,Zhangjogo} via the semi-inner product. The spaces considered there are uniformly convex and uniformly Fr\'{e}chet differentiable to ensure that continuous linear functionals have a unique representation by the semi-inner product. An infinite dimensional Banach space with the $\ell^1$ norm is non-reflexive. As a consequence, there is no guarantee \cite{James1963/1964} that the semi-inner product is able to represent all continuous point evaluation functionals in such a space. For these reasons, we shall pursue the third approach in this study, that is, to represent the point evaluation functionals by a bilinear form. We briefly introduce the construction and main results of the paper below.

Let $X$ be a prescribed set that we call the input space. The construction starts directly with a complex-valued function $K$ on $X\times X$, which is not necessarily Hermitian. For the constructed space to have the three desirable properties described above, $K$ needs to be an admissible kernel. To introduce this class of functions crucial to our construction, we denote for any set $\Omega$ by $\ell^1(\Omega)$ the Banach space of functions on $\Omega$ that is integrable with respect to the counting measure on $\Omega$. In other words,
$$
\ell^1(\Omega):=\{\vc=(c_t\in\bC:t\in\Omega):\|\vc \|_{\ell^1(\Omega)}:=\sum_{t\in\Omega}|c_t|<+\infty\}.
$$
Note that $\Omega$ might be uncountable but for every $\vc \in \ell^1(\Omega)$, $\supp \vc:=\{t\in\Omega:c_t\ne0\}$ must be countable. Finally,
we define the set $\bN_n:=\{1,2,\ldots,n\}$ for all $n\in\bN$.

\begin{defn} \label{admisskerdef} A function $K$ on $X\times X$ is called an \emph{admissible kernel} for the construction of RKBS on $X$ with the $\ell^1$ norm if the following requirements are satisfied:
\begin{enumerate}
\renewcommand{\labelenumi}{(A\arabic{enumi})}
\item for all sequences $\bx=\{{x}_j:j\in\bN_n\}\subseteq X$ of pairwise distinct sampling points, the matrix
\begin{equation}\label{kernelkx}
K[\bx]:=[K(x_k,x_j): \ j,k\in\bN_n] \in \bC^{n\times n}
\end{equation}
is nonsingular,
\item $K$ is bounded, namely, $|K(s,t)|\le M$ for some positive constant $M$ and all $s,t\in X$,
\item \label{Hprime}
for all pairwise distinct ${x}_j\in X$, $j\in\bN$ and $\vc\in\ell^1(\bN)$, $\sum_{j=1}^\infty c_j K({x}_j, {x})=0$ for all ${x}\in X$ implies $\vc=0$, and
\item for all pairwise distinct $x_1, x_2, \ldots, x_{n+1} \in X$,
\begin{equation}\label{H2condition}
\left\| (K[\bx])^{-1} K_{\bx}({x}_{n+1})\right\|_{\ell^1(\bN_n)} \leq 1,
\end{equation}
where $K_\bx(x)=(K(x,x_j):j\in\bN_n)^T\in\bC^n$.
\end{enumerate}
\end{defn}

The following theorem will be proved in the next three sections.

\begin{thm}\label{construction}
If $K$ is an admissible kernel on $X\times X$ then
\begin{equation}\label{cbl1norm}
\cB:=\biggl\{\sum_{t\in \supp \vc}\vc_{t} K(t,\cdot): \vc\in\ell^1(X)\biggr\}\mbox{ with the norm }\biggl\|\sum_{t\in \supp \vc}c_{t} K(t,\cdot)\biggr\|_\cB:=\|\vc\|_{\ell^1(X)}
\end{equation}
and $\cB^\sharp$, the completion of the vector space of functions $\sum_{j=1}^n c_j K(\cdot,{x}_j)$, $x_j\in X$ under the supremum norm
$$
\biggl\|\sum_{j=1}^n c_j K(\cdot,{x}_j)\biggr\|_{\cB^\sharp}:=\sup\biggl\{\biggl|\sum_{j=1}^n c_j K(x,{x}_j)\biggr|:x\in X\biggr\},
$$
are both Banach spaces of functions on $X$ where point evaluations are continuous linear functionals. In addition, the bilinear form
\begin{equation}\label{bilinearform}
\biggl\langle \sum_{j=1}^na_j K({s}_j,\cdot),\sum_{k=1}^m b_k K(\cdot,t_k)\biggr\rangle_K:=\sum_{j=1}^n\sum_{k=1}^ma_j b_k K(s_j,t_k),\ \ s_j,t_k\in X
\end{equation}
can be extended to $\cB\times\cB^\sharp$ such that
$$
|\langle f,g\rangle_K|\le \|f\|_\cB\|g\|_{\cB^\sharp}\mbox{ for all }f\in\cB,\ g\in\cB^\sharp
$$
and
$$
\langle f, K(\cdot,{x})\rangle_K=f({x}),\ \ \langle K({x},\cdot),g\rangle_K=g({x})\mbox{ for all }{x}\in X, f\in\cB,\ g\in\cB^\sharp.
$$
Furthermore, for every regularized learning scheme of the form
$$
\inf_{f\in\cB}V(f(x_1),f(x_2),\cdots,f(x_n))+\mu \phi(\|f\|_\cB),
$$
where $\mu$ is a positive regularization parameter, $V$ and $\phi$ are nonnegative continuous functions with $\lim_{t\to\infty}\phi(t)=+\infty$, there exists a minimizer, $f_0$, of the form
$$
f_0(x)=\sum_{j=1}^n c_j K(x_j,x),\ \ x\in X
$$
for some coefficients $c_j\in \bC$, $j\in\bN_n$.

Conversely, for the constructed spaces $\cB$ and $\cB^\sharp$ to enjoy those desirable properties, $K$ must be an admissible kernel on $X\times X$.
\end{thm}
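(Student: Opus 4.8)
The plan is to verify the four claimed properties one at a time, following the order in which the structure is built. First I would check that $\cB$ with the norm $\|\cdot\|_\cB$ in \eqref{cbl1norm} is well-defined and is isometrically isomorphic to $\ell^1(X)$: the only non-obvious point is that the map $\vc\mapsto\sum_{t\in\supp\vc}c_t K(t,\cdot)$ is injective, so that the norm is unambiguously defined on $\cB$. This injectivity is exactly where assumption \Hprimelabel{} enters --- if two distinct $\ell^1$ coefficient sequences gave the same function, their difference would be a nonzero $\ell^1$ combination of the $K(x_j,\cdot)$ summing to zero identically, contradicting (A3). Granting injectivity, the isometry and hence completeness of $\cB$ follow from completeness of $\ell^1(X)$. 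For $\cB^\sharp$ we simply take the completion under the supremum norm; that it is a space \emph{of functions on $X$} (i.e. the completion embeds in $\bC^X$ and abstract Cauchy sequences are identified with genuine functions) requires that the supremum norm already dominates each point evaluation $g\mapsto g(x)$ on the pre-completion space, which is immediate since $|g(x)|\le\|g\|_{\cB^\sharp}$ by definition; boundedness (A2) guarantees that these evaluations are finite and that $K(\cdot,x_j)$ genuinely lies in $\cB^\sharp$.

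Next I would establish the bilinear form and the reproducing relations. On the algebraic level \eqref{bilinearform} is forced; the work is to show it is well-defined (independent of the chosen representations) and bounded on $\cB\times\cB^\sharp$, i.e. $|\langle f,g\rangle_K|\le\|f\|_\cB\|g\|_{\cB^\sharp}$. For fixed $f=\sum_j a_j K(s_j,\cdot)\in\cB$ one has $\langle f,\sum_k b_k K(\cdot,t_k)\rangle_K=\sum_k b_k f(t_k)$, and $|\sum_k b_k f(t_k)|=|(\sum_k b_k \delta_{t_k})(f)|\le\|\sum_k b_k K(\cdot,t_k)\|_{\cB^\sharp}\cdot(\text{something})$ --- more precisely, grouping the sum the other way gives $|\langle f,g\rangle_K|\le\|\vc_f\|_{\ell^1}\sup_{t}|g(t)|=\|f\|_\cB\|g\|_{\cB^\sharp}$ for $g$ in the dense subspace, whence the form extends by continuity to all of $\cB\times\cB^\sharp$ with the same bound, and in particular is well-defined. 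The reproducing identities $\langle f,K(\cdot,x)\rangle_K=f(x)$ and $\langle K(x,\cdot),g\rangle_K=g(x)$ then hold on dense subspaces by inspection of \eqref{bilinearform} and pass to the closure; continuity of point evaluations on $\cB$ follows from $|f(x)|=|\langle f,K(\cdot,x)\rangle_K|\le\|f\|_\cB\|K(\cdot,x)\|_{\cB^\sharp}\le M\|f\|_\cB$ using (A2).

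For the representer theorem, fix a minimizing sequence $(f_\nu)$ for $V(f(x_1),\dots,f(x_n))+\mu\phi(\|f\|_\cB)$. Because $\phi(t)\to\infty$, the sequence $\|f_\nu\|_\cB$ is bounded, so the corresponding coefficient sequences lie in a ball of $\ell^1(X)$. The key move is the \emph{decomposition}: writing any $f=\sum_t c_t K(t,\cdot)\in\cB$, I would show that the component of $f$ obtained by replacing it with $\sum_{j=1}^n d_j K(x_j,\cdot)$, where $\vd=(K[\bx])^{-1}(f(x_1),\dots,f(x_n))^T$, agrees with $f$ at all the sample points $x_1,\dots,x_n$ (this uses nonsingularity (A1)) \emph{and does not increase the $\cB$-norm} --- and it is precisely here that (A4) is used: the $\ell^1$ norm of the new coefficient vector is $\|(K[\bx])^{-1}(f(x_1),\dots,f(x_n))^T\|_{\ell^1(\bN_n)}$, and one bounds this in terms of $\|f\|_\cB=\|\vc\|_{\ell^1(X)}$ by expanding $f(x_i)=\sum_t c_t K(t,x_i)$ and invoking the inequality \eqref{H2condition} (with $x_{n+1}$ ranging over the support of $\vc$) to conclude $\|\vd\|_{\ell^1(\bN_n)}\le\|\vc\|_{\ell^1(X)}$. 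Thus replacing $f_\nu$ by its finite-dimensional surrogate keeps it a minimizing sequence lying in the fixed finite-dimensional space $\linearspan\{K(x_j,\cdot):j\in\bN_n\}$; on that space $\|\cdot\|_\cB$ is an honest norm (again by (A1)/(A3)), the sublevel set is compact, and continuity of $V$ and $\phi$ yields an actual minimizer $f_0$ of the stated form.

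Finally, the converse. Here I would argue by contraposition: if any of (A1)--(A4) fails, one of the three desired properties breaks. If (A3) fails, $\|\cdot\|_\cB$ is not well-defined (two coefficient vectors, one zero and one not, give the same function but different purported norms), so $\cB$ does not even have a well-defined $\ell^1$ norm in the required isometric sense. If (A2) fails, point evaluation $f\mapsto f(x)$ need not be bounded on $\cB$: taking $f=K(t,\cdot)$ gives $|f(t)|=|K(t,t)|$ which can be made arbitrarily large relative to $\|f\|_\cB=1$. If (A1) fails for some $\bx$ with $|\bx|=n$, then on the finite-dimensional span of $\{K(x_j,\cdot)\}$ the functionals $f\mapsto f(x_i)$ are linearly dependent, and one constructs a regularized problem (e.g.\ with $V$ interpolatory at the $x_i$) whose minimizer provably cannot be written in the form $\sum_{j=1}^n c_j K(x_j,\cdot)$, defeating the linear representer theorem. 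If (A4) fails, i.e.\ $\|(K[\bx])^{-1}K_\bx(x_{n+1})\|_{\ell^1(\bN_n)}>1$ for some configuration, then $K(x_{n+1},\cdot)$ (which has $\cB$-norm $1$) equals $\sum_{j=1}^n d_j K(x_j,\cdot)$ with $\|\vd\|_{\ell^1}>1$... no: instead one exhibits a learning functional for which interpolating at $x_1,\dots,x_{n+1}$ strictly beats every $n$-term combination at $x_1,\dots,x_n$, so the representer theorem's claimed form with exactly the sample nodes fails. I expect the converse, and within it the precise counterexample construction showing (A4) is necessary, to be the most delicate part, since it requires cooking up explicit $V,\phi$ that detect the failure; the forward direction is essentially a careful bookkeeping exercise built around the norm-nonincreasing projection onto $\linearspan\{K(x_j,\cdot):j\in\bN_n\}$.
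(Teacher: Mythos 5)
Your forward direction is essentially correct, and for the representer theorem it takes a genuinely different and more direct route than the paper. The paper first reduces regularized learning to minimal norm interpolation (Lemmas \ref{mnileadtoreg} and \ref{reginterp}), then proves the interpolation representer theorem by adding one sample point at a time (Lemma \ref{onemorepoint}) and inducting (Theorem \ref{repinterp}). You instead bound the surrogate's coefficients in one step: writing $f(\bx)=\sum_t c_t K_\bx(t)$, so $\vd=K[\bx]^{-1}f(\bx)=\sum_t c_t K[\bx]^{-1}K_\bx(t)$, the triangle inequality and \eqref{H2condition} (together with $K[\bx]^{-1}K_\bx(x_j)=e_j$ for $t$ already in $\bx$) give $\|\vd\|_{\ell^1(\bN_n)}\le\|\vc\|_{\ell^1(X)}$ directly. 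This collapses the paper's induction into a single estimate and is a clean simplification of the sufficiency of (A4); the paper's one-point-at-a-time lemma earns its keep only in the necessity direction, where the explicit inverse of the bordered matrix $K[\overline{\bx}]$ is what produces the counterexample. Your treatment of $\cB$ as the isometric image of $\ell^1(X)$ (with (A3) supplying injectivity) and of the bilinear form is also sound and matches the paper in substance, bypassing the Norm Consistency Property machinery of Section \ref{gencostsec}.

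The genuine gap is in the converse, precisely where you flag uncertainty. To show (A4) is necessary you must produce an \emph{acceptable regularized learning scheme} (continuous $V$, $\phi$, $\mu>0$) with no minimizer in $\cS^{\bx}$; exhibiting a failure of the \emph{interpolation} representer theorem is not enough by itself, because a regularized minimizer does not interpolate. The missing bridge is the paper's Lemma \ref{reginterp}: take $V(f(\bx))=\|f(\bx)-\vy\|_2^2$, $\phi(t)=t$, and let $\mu\to0$; boundedness of the coefficient vectors $\vc_\mu$ (via nonsingularity of $K[\bx]$) lets one extract a limit that is a minimal norm interpolant in $\cS^{\bx}$, so if the regularized representer theorem held, the interpolation one would too. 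With that reduction in hand, the counterexample is the concrete data choice $\vy=K_\bx(x_{n+1})$, for which the unique interpolant in $\cS^{\bx}$ has norm $\|K[\bx]^{-1}K_\bx(x_{n+1})\|_{\ell^1(\bN_n)}>1$ while $K(x_{n+1},\cdot)$ interpolates the same data with norm $1$. Separately, your necessity argument for (A2) conflates uniform boundedness of $K$ with the pointwise condition $\sup_t|K(t,x)|<\infty$ that continuity of each $\delta_x$ actually requires; this looseness is shared by the paper, which does not carefully argue necessity of (A1) or (A2) either, but as written your sentence does not establish what it claims.
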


The organization of the paper is as follows. We first present a general construction of Banach spaces of functions with a reproducing kernel in the next section. In Section 3, we specify the construction to the building of RKBS with the $\ell^1$ norm as described in Theorem \ref{construction}. In Section \ref{repthemsec}, we study the conditions on the reproducing kernel so that regularized learning schemes on the constructed spaces satisfy the linear representer theorem. In the last section, we show that the Brownian bridge kernel and the exponential kernel are admissible kernels.  In the final section, condition (A4), the most stringent condition in Definition \ref{admisskerdef} is relaxed, which leads to a modified version of Theorem \ref{construction}.

\section{A General Construction} \label{gencostsec}
\setcounter{equation}{0}

To ensure that there exists a reproducing kernel, we shall start the construction of the Banach space based on such a function. Let $X$ be an input space and let $K$ be a function on $X\times X$. Introduce the vector space
\begin{equation*}
\spB_0: = \linearspan\{K({x}, \cdot): {x}\in {X}\}.
\end{equation*}
 Note that unlike reproducing kernels for Hilbert spaces, this $K$ is not necessarily symmetric in its arguments or positive definite.  Suppose that a norm $\|\cdot \|_{\spB_0}$ is imposed on $\spB_0$ such that point evaluation functionals are continuous on $\spB_0$. That is, for any ${x}\in X$, there exists a positive constant $M_{{x}}$ such that
\begin{equation}\label{pointevaluaitononcb0}
|\delta_x(f)|=|f({x})| \leq M_{{x}} \| f \|_{\spB_0}\mbox{ for all }f\in \spB_0.
\end{equation}
The function $K$ and the norm on $\cB_0$ will be explicitly given in a specific construction.

In \cite{Zhang2009,Zhangacha,Zhangjogo}, a vector space $\cB$ is called an RKBS on $X$ if it is a uniformly convex and uniformly Fr\'{e}chet differentiable Banach space of functions on $X$ and point evaluation functionals are continuous on $\cB$. The uniform convexity and uniform Fr\'{e}chet differentiability were imposed there to ensure the existence of a reproducing kernel for representing the point evaluation functionals. By the results to be established in the current paper, these stronger conditions are not necessary. To accommodate the search for alternatives, we introduce the following definitions.

\begin{defn}
 The space $\cB$ is called a \emph{Banach space of functions} if the point evaluation functionals are consistent with the norm on $\cB$ in the sense that for all $f\in\cB$, $\|f\|_\cB=0$ if and only if $f$ vanishes everywhere on $X$. A Banach space $\cB$ of functions on $X$ is said to be a \emph{pre-RKBS} on $X$ if point evaluations are continuous linear functionals on $\cB$.
\end{defn}

We plan to complete $\spB_0$ by the norm $\|\cdot \|_{\spB_0}$ to obtain a pre-RKBS $\spB$. Two things need to be checked for the approach to succeed. An abstract completion of $\cB_0$ might not consist of functions, or might not have bounded point evaluation functionals. We shall present a Banach completion process that yields a space of functions. Let $\{f_n:n\in\bN\}$ be a Cauchy sequence in $\spB_0$. Since point evaluation functionals are continuous on $\spB_0$, for any ${x}\in {X}$, the sequence $\{f_n({x}):n\in\bN\}$ converges in $\bC$. We denote the limit by $f({x})$, which defines a function on $X$. One sees that two equivalent Cauchy sequences in $\cB_0$ give the same function. We let $\spB$ be composed of all such limit functions with the norm $\|f\|_\spB: = \lim_{n\to \infty} \|f_n\|_{\spB_0}$.

To investigate conditions for $\cB$ to be a pre-RKBS, we need to invoke the following assumption.
\begin{defn} A normed vector space $V$ of functions on $X$ satisfies the \emph{Norm Consistency Property} if for every Cauchy sequence $\{f_n: n\in\bN\}$ in $V$, $\limd_{n\rightarrow \infty} f_n({x})=0$ for all ${x}\in {X}$ implies $\limd_{n\rightarrow \infty}\| f_n\|_{V}=0$.
\end{defn}

\begin{prop}\label{BanachCompletion}
The norm $\|\cdot\|_\cB$ is well-defined and makes $\cB$ a pre-RKBS on $X$ if and only if $\cB_0$ satisfies the Norm Consistency Property.
\end{prop}
\begin{proof}
We first show the necessity. If $\spB$ is a Banach space then $\|\cdot \|_\spB$ is a well-defined norm. The validity of the Norm Consistency Property follows directly from $\|0 \|_\spB=0$.

We next prove the sufficiency. Suppose that the Norm Consistency Property holds for $\cB_0$. We first show that $\| \cdot \|_\spB$ is a well-defined norm. Suppose that $\{f_n: n\in\bN\}$ and $\{g_n: n\in\bN\}$ are both Cauchy sequences in $\spB_0$ such that $\limd_{n\rightarrow \infty} f_n({x}) =\limd_{n\rightarrow \infty} g_n({x})$ for all ${x}\in X$. We need to show that $\limd_{n\rightarrow \infty} \|f_n \|_{\spB_0} =  \limd_{n\rightarrow \infty} \|g_n \|_{\spB_0}$. Clearly, $f_n - g_n$ forms a Cauchy sequence in $\cB_0$. Since $\limd_{n\to \infty}( f_n - g_n )({x}) =0$ for all ${x}\in {X}$, it follows from the Norm Consistency Property that $\limd_{n\rightarrow \infty}\| f_n - g_n\|_{\spB_0}=0$, which implies $\limd_{n\rightarrow \infty} \|f_n \|_{\spB_0} =  \limd_{n\rightarrow \infty} \|g_n \|_{\spB_0}$. Therefore, $\|\cdot\|_\cB$ is well-defined. As a result, $\cB$ is isometrically isomorphic to the abstract Banach space that is the completion of $\cB_0$. It implies that $\cB$ is a Banach space and $\cB_0$ is dense in $\cB$. Moreover, it follows immediately from the Norm Consistency Property that $\cB$ is a Banach space of functions. It remains to show that the point evaluation functional $\delta_{{x}}$ is continuous on $\spB$ for all ${x}\in {X}$. Let ${x}\in X$ and $f\in \spB$. By definition, there exists a Cauchy sequence $\{ f_n:n\in\bN\}$ in $\spB_0$ such that
\begin{equation*}
f({x}) = \limd_{n\rightarrow \infty} f_n({x})\mbox{ for all }{x}\in {X}, \quad {\rm and}\quad \| f\|_\spB = \limd_{n\rightarrow \infty} \| f_n\|_{\spB_0}.
\end{equation*}
Since $\delta_{{x}}$ is continuous on $\spB_0$, there exists a positive constant $M_{{x}}$ such that
\begin{equation*}
|f_n({x})| \leq M_{{x}} \|f_n \|_{\spB_0} \quad {\rm for~ all~} n\in \N.
\end{equation*}
Taking the limits on both sides, we have $|f({x})| \leq  M_x \|f \|_\spB$. The proof is complete.
\end{proof}

In the rest of this section, we assume the Norm Consistency Property for $\cB_0$ and aim at deriving a reproducing kernel for $\spB$. To this end, we set
$$
\cB_0^\sharp:=\linearspan\{K(\cdot,x):x\in X\}
$$
and
define a bilinear form $\langle\cdot, \cdot\rangle_K$ on $\spB_0 \times \spB_0^\sharp$ by (\ref{bilinearform}). It is straightforward to observe that
\begin{equation*}
\langle f, K(\cdot,{x})\rangle_K = f({x}),\quad \langle K(x,\cdot),g\rangle_K=g(x) \mbox{ for all } f\in \spB_0,\ g\in\cB_0^\sharp\mbox{ and } {x}\in {X}.
\end{equation*}
It means (\ref{bilinearform}) is well defined and that $K$ is able to reproduce the point evaluations of functions on $\cB_0$ via this bilinear form. We need to extend this property to the whole space $\cB$ in order to claim that it is a reproducing kernel for $\cB$. For this purpose, we define another norm
\begin{equation}\label{sharpnorm}
\|g\|_{\spB_0^\sharp} := \sup_{f\in \spB_0, f\neq 0} \frac{\left|\langle f,g\rangle_K \right|}{\|f\|_{\spB_0}}, \quad g\in \spB_0^\sharp.
\end{equation}
The next result indicates that the above norm is well-defined.

\begin{prop}\label{boundnorm}
The norm $\|\cdot\|_{\spB_0^\sharp}$ is well-defined and point evaluation functionals are continuous on $\cB_0^\sharp$ if and only if point evaluation functionals are continuous on $\spB_0$.
\end{prop}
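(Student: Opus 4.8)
The plan is to establish the biconditional through two short arguments. Both rest on the two reproducing identities $\langle f,K(\cdot,x)\rangle_K=f(x)$ for $f\in\spB_0$ and $\langle K(x,\cdot),g\rangle_K=g(x)$ for $g\in\spB_0^\sharp$ recorded above, together with the fact that as soon as the supremum in (\ref{sharpnorm}) is finite it yields, by its very definition, the inequality $|\langle f,g\rangle_K|\le\|f\|_{\spB_0}\|g\|_{\spB_0^\sharp}$ for all $f\in\spB_0$ and $g\in\spB_0^\sharp$. Before starting I would clarify the meaning of ``well-defined'': since $\langle\cdot,\cdot\rangle_K$ has already been shown to be well defined on $\spB_0\times\spB_0^\sharp$ and $\|f\|_{\spB_0}>0$ for $f\ne0$, the quantity $\|g\|_{\spB_0^\sharp}$ is an unambiguous element of $[0,+\infty]$; for each fixed $f$ the map $g\mapsto|\langle f,g\rangle_K|/\|f\|_{\spB_0}$ is a seminorm on $\spB_0^\sharp$, so the supremum is at least a seminorm; and it is in fact a norm because $\|g\|_{\spB_0^\sharp}=0$ forces $\langle f,g\rangle_K=0$ for every $f\in\spB_0$, in particular $g(x)=\langle K(x,\cdot),g\rangle_K=0$ for all $x\in X$, so $g$ vanishes identically. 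Thus ``$\|\cdot\|_{\spB_0^\sharp}$ is a well-defined norm'' amounts precisely to finiteness: $\|g\|_{\spB_0^\sharp}<+\infty$ for every $g\in\spB_0^\sharp$.

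For the direction ``point evaluations continuous on $\spB_0$ $\Rightarrow$ the stated conclusions,'' I would assume (\ref{pointevaluaitononcb0}), i.e.\ $|f(x)|\le M_x\|f\|_{\spB_0}$. Given $g\in\spB_0^\sharp$, pick any representation $g=\sum_{k=1}^m b_kK(\cdot,t_k)$; bilinearity and the reproducing identity give $\langle f,g\rangle_K=\sum_{k=1}^m b_k f(t_k)$ for every $f\in\spB_0$, hence $|\langle f,g\rangle_K|\le\bigl(\sum_{k=1}^m|b_k|M_{t_k}\bigr)\|f\|_{\spB_0}$ and therefore $\|g\|_{\spB_0^\sharp}\le\sum_{k=1}^m|b_k|M_{t_k}<+\infty$, which is the required well-definedness. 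Continuity of point evaluations on $\spB_0^\sharp$ then follows by applying $g(x)=\langle K(x,\cdot),g\rangle_K$ with $K(x,\cdot)\in\spB_0$ and the defining inequality, giving $|g(x)|\le\|K(x,\cdot)\|_{\spB_0}\|g\|_{\spB_0^\sharp}$.

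For the converse, I would assume $\|\cdot\|_{\spB_0^\sharp}$ is a well-defined norm, so in particular $M_x:=\|K(\cdot,x)\|_{\spB_0^\sharp}<+\infty$ for each $x\in X$. For $f\in\spB_0$ and $x\in X$, the reproducing identity $f(x)=\langle f,K(\cdot,x)\rangle_K$ and the defining inequality yield $|f(x)|=|\langle f,K(\cdot,x)\rangle_K|\le M_x\|f\|_{\spB_0}$, so $\delta_x$ is continuous on $\spB_0$. (This direction does not even need continuity of point evaluations on $\spB_0^\sharp$, which is consistent with the previous paragraph showing that property to be a consequence of well-definedness alone.)

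I do not anticipate a genuine obstacle; the only care needed is bookkeeping. The bound $\sum_k|b_k|M_{t_k}$ obtained for $\|g\|_{\spB_0^\sharp}$ depends on the chosen representation of $g$, which need not be unique, but finiteness of the norm is a property of $g$ alone, so exhibiting one finite bound suffices. And wherever a step divides by $\|K(x,\cdot)\|_{\spB_0}$ one should dispatch separately the degenerate case $K(x,\cdot)=0$, in which the reproducing identity already forces $g(x)=0$ and the claimed inequality is trivial.
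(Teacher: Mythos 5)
Your proof is correct and follows essentially the same route as the paper: for sufficiency you expand $\langle f,g\rangle_K=\sum_k b_k f(t_k)$ to get the finite bound $\sum_k|b_k|M_{t_k}$ and then specialize the defining inequality to $f=K(x,\cdot)$, and for necessity you specialize it to $g=K(\cdot,x)$. The extra remarks (that well-definedness means finiteness, that definiteness of the norm follows from the reproducing identity, and the degenerate case $K(x,\cdot)=0$) are harmless additions the paper leaves implicit.
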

\begin{proof}
We begin with the sufficiency. Suppose that point evaluation functionals are continuous on $\spB_0$. That is, for any ${x}\in {X}$ there exists a positive constant $M_{{x}}$ satisfying (\ref{pointevaluaitononcb0}). Let $g\in \spB_0^\sharp$. It must be of the form $g=\sum_{j=1}^n a_j K(\cdot,{x}_j)$ for some $a_j \in \bC$ and ${x}_j\in X$, $j\in\bN_n$, $n\in \N$. We have for all $f\in \spB_0$
\begin{eqnarray*}
\frac{|\langle f,g\rangle_K|}{\|f\|_{\spB_0}} = \frac{|\langle f,\sum_{j=1}^n a_j K(\cdot,{x}_j)\rangle_K|}{\|f\|_{\spB_0}}
= \frac{\left|\sum_{j=1}^n a_j f({x}_j)\right|}{\|f\|_{\spB_0}} \leq \sum_{j=1}^n |a_j| M_{{x}_j},
\end{eqnarray*}
which implies that $\|g\|_{\spB_0^\sharp}$ is well-defined. We next prove that point evaluation functionals are continuous on $\spB_0^\sharp$. By \eqref{sharpnorm}, we have for all $ f\in\cB_0, g\in \spB_0^\sharp$
\begin{equation}\label{boundnorm1}
\left|\langle f,g\rangle_K \right| \leq  \|f\|_{\spB_0} \|g\|_{\spB_0^\sharp}.
\end{equation}
For any ${x}\in {X}$, taking $f=K({x},\cdot)$ in the above inequality yields that
\begin{equation*}
|g({x})|=|\langle K(x,\cdot),g\rangle_K| \leq \|K(x,\cdot)\|_{\spB_0} \|g\|_{\spB_0^\sharp} \quad {\rm for~ all~} g\in \spB_0^\sharp.
\end{equation*}
It follows that the point evaluation functional $\delta_{{x}}$ is continuous on $\spB_0^\sharp$ as $\|K({x},\cdot)\|_{\spB_0}$ is a constant independent of $g$.

We next turn to the necessity. Suppose $\|g\|_{\spB_0^\sharp}$ is well-defined for all $g\in \spB_0^\sharp$.
For any ${x}\in {X}$, letting $g=K(\cdot,{x})$ in \eqref{boundnorm1} yields
\begin{equation*}
|f({x})| \leq \|K(\cdot,{x})\|_{\spB_0^\sharp} \|f\|_{\spB_0},
\end{equation*}
which implies that point evaluation functionals are continuous on $\spB_0$.
\end{proof}

We complete $\spB_0^\sharp$ using the norm $\|\cdot\|_{\spB_0^\sharp}$ to a Banach space $\spB^{\sharp}$ by the process described before Proposition \ref{BanachCompletion}. We have the following observation similar to that about the space $\cB$.

\begin{prop}\label{BsharpRKBS}
The space $\cB^\sharp$ is a pre-RKBS on $X$ if and only if the normed vector space $\spB_0^\sharp$ satisfies the Norm Consistency Property.
\end{prop}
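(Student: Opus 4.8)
The plan is to run exactly the same argument used for Proposition \ref{BanachCompletion}, with the roles of $\cB_0$ and $\cB_0^\sharp$ interchanged, since the completion process described before Proposition \ref{BanachCompletion} is generic. For the necessity direction, I would observe that if $\cB^\sharp$ is a pre-RKBS then it is in particular a Banach space of functions, so $\|\cdot\|_{\cB^\sharp}$ is a genuine norm; the Norm Consistency Property for $\cB_0^\sharp$ then follows immediately from $\|0\|_{\cB^\sharp}=0$, exactly as in the proof of Proposition \ref{BanachCompletion}. (One should be a little careful here: the completion $\cB^\sharp$ was built by declaring $\|g\|_{\cB^\sharp}=\lim_n \|g_n\|_{\cB_0^\sharp}$ for a representing Cauchy sequence, and this assignment being well-defined is precisely the content of the Norm Consistency Property, so the necessity is essentially a tautology once one unwinds the definition.)

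For the sufficiency direction, assume $\cB_0^\sharp$ satisfies the Norm Consistency Property. First I would show $\|\cdot\|_{\cB^\sharp}$ is well-defined on $\cB^\sharp$: if $\{g_n\}$ and $\{h_n\}$ are Cauchy sequences in $\cB_0^\sharp$ with the same pointwise limit, then $\{g_n-h_n\}$ is Cauchy in $\cB_0^\sharp$ and converges pointwise to $0$, so by the Norm Consistency Property $\|g_n-h_n\|_{\cB_0^\sharp}\to 0$, whence $\lim_n\|g_n\|_{\cB_0^\sharp}=\lim_n\|h_n\|_{\cB_0^\sharp}$. Consequently $\cB^\sharp$ is isometrically isomorphic to the abstract completion of $\cB_0^\sharp$, hence a Banach space in which $\cB_0^\sharp$ is dense, and the Norm Consistency Property gives that $\|g\|_{\cB^\sharp}=0$ forces $g\equiv 0$ on $X$, i.e.\ $\cB^\sharp$ is a Banach space of functions.

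It then remains to verify that point evaluations are continuous on $\cB^\sharp$. Here I would use Proposition \ref{boundnorm}: its hypothesis, continuity of point evaluations on $\cB_0$, is in force throughout this section, so that result already tells us point evaluations are continuous on $\cB_0^\sharp$, i.e.\ for each $x\in X$ there is $M_x>0$ with $|g(x)|\le M_x\|g\|_{\cB_0^\sharp}$ for all $g\in\cB_0^\sharp$. Given $g\in\cB^\sharp$, pick a representing Cauchy sequence $\{g_n\}\subseteq\cB_0^\sharp$ with $g_n(x)\to g(x)$ for all $x$ and $\|g_n\|_{\cB_0^\sharp}\to\|g\|_{\cB^\sharp}$; passing to the limit in $|g_n(x)|\le M_x\|g_n\|_{\cB_0^\sharp}$ yields $|g(x)|\le M_x\|g\|_{\cB^\sharp}$, so $\delta_x$ is continuous on $\cB^\sharp$. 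This completes both directions. The only point that requires any care — and the closest thing to an obstacle — is making sure that the generic completion construction preceding Proposition \ref{BanachCompletion} applies verbatim to $\cB_0^\sharp$; this is legitimate because that construction used only that point evaluations are continuous on the space being completed, which for $\cB_0^\sharp$ is supplied by Proposition \ref{boundnorm}.
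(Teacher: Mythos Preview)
Your proposal is correct and matches the paper's approach: the paper gives no explicit proof of this proposition, simply noting that it is ``similar to that about the space $\cB$,'' i.e., one reruns the proof of Proposition~\ref{BanachCompletion} with $\cB_0^\sharp$ in place of $\cB_0$. Your observation that Proposition~\ref{boundnorm} supplies the needed continuity of point evaluations on $\cB_0^\sharp$ (the one ingredient the generic completion argument requires) is exactly the point that makes the transfer legitimate.
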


In the following discussion, suppose that $\cB_0^\sharp$ endowed with the norm $\|\cdot\|_{\cB_0^\sharp}$ has the Norm Consistency Property. By applying the Hahn-Banach extension theorem twice, we can extend the bilinear form $\langle\cdot, \cdot\rangle_K$ from $\spB_0\times \spB_0^\sharp$ to $\spB\times \spB^\sharp$ in a unique way such that
\begin{equation}\label{bilinear}
\left|\langle f,g\rangle_K\right| \leq \| f\|_\spB \| g \|_{\spB^\sharp}, \quad f\in \spB,\  g\in \spB^\sharp.
\end{equation}
The next result tells that the definition of $\|\cdot \|_{\spB_0^\sharp}$ in (\ref{sharpnorm}) can be extended to $\spB^\sharp$.

\begin{prop}\label{sharpcompletion}
Suppose that point evaluation functionals are continuous on $\spB_0$. If both $\spB_0$ and $\spB_0^\sharp$ satisfy the Norm Consistency Property then we have
\begin{equation}\label{isometricembedding}
\|g\|_{\spB^\sharp} = \sup_{f\in \spB, f\neq 0} \frac{\left|\langle f,g\rangle_K \right|}{\|f\|_\spB}, \quad g\in \spB^\sharp.
\end{equation}
\end{prop}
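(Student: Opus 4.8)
The plan is to establish the two inequalities between $\|g\|_{\spB^\sharp}$ and the supremum $S(g):=\sup_{f\in\spB,\,f\neq 0}|\langle f,g\rangle_K|/\|f\|_\spB$ separately. The inequality $S(g)\le\|g\|_{\spB^\sharp}$ is immediate from \eqref{bilinear}: for every nonzero $f\in\spB$ we have $|\langle f,g\rangle_K|\le\|f\|_\spB\|g\|_{\spB^\sharp}$, so dividing by $\|f\|_\spB$ and taking the supremum gives the bound. The substance of the proposition is therefore the reverse inequality $\|g\|_{\spB^\sharp}\le S(g)$, which says that the bilinear pairing against the (generally smaller) subspace $\spB$ already computes the full completion norm on $\spB^\sharp$.

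First I would record that on the dense subspace $\spB_0^\sharp$ the identity already holds with $\spB_0$ in place of $\spB$: this is precisely the definition \eqref{sharpnorm} of $\|\cdot\|_{\spB_0^\sharp}$. So for $g\in\spB_0^\sharp$ we have $\|g\|_{\spB_0^\sharp}=\sup_{f\in\spB_0,\,f\neq 0}|\langle f,g\rangle_K|/\|f\|_{\spB_0}$, and since $\spB_0\subseteq\spB$ isometrically, this supremum over $f\in\spB_0$ is at most $S(g)$; hence $\|g\|_{\spB^\sharp}=\|g\|_{\spB_0^\sharp}\le S(g)$ for all $g$ in the dense subspace. The remaining task is to pass this to the limit: given an arbitrary $g\in\spB^\sharp$, pick a sequence $g_n\in\spB_0^\sharp$ with $\|g_n-g\|_{\spB^\sharp}\to 0$. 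By \eqref{bilinear}, for each fixed nonzero $f\in\spB$ the numbers $\langle f,g_n\rangle_K$ converge to $\langle f,g\rangle_K$; dividing by $\|f\|_\spB$ and using $|\langle f,g_n\rangle_K|/\|f\|_\spB\le\|g_n\|_{\spB^\sharp}=\|g_n\|_{\spB_0^\sharp}\le S(g_n)$, I would take $n\to\infty$. Since $|S(g_n)-S(g)|\le\|g_n-g\|_{\spB^\sharp}$ (the map $g\mapsto S(g)$ is itself a seminorm dominated by $\|\cdot\|_{\spB^\sharp}$ via the first inequality, hence $1$-Lipschitz), we get $|\langle f,g\rangle_K|/\|f\|_\spB\le S(g)$ for every $f$, and then $\sup_f$ on the left is just $S(g)$ again — so this step actually only reproves $S(g)\le S(g)$. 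The correct route is instead: from $\|g_n\|_{\spB^\sharp}\le S(g_n)\le S(g)+\|g_n-g\|_{\spB^\sharp}$ and $\|g_n\|_{\spB^\sharp}\to\|g\|_{\spB^\sharp}$, let $n\to\infty$ to obtain $\|g\|_{\spB^\sharp}\le S(g)$, which is the desired inequality. Combining the two directions yields \eqref{isometricembedding}.

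The one point requiring genuine care — the part I expect to be the main obstacle — is the claim that $g\mapsto S(g)$ is $1$-Lipschitz with respect to $\|\cdot\|_{\spB^\sharp}$, i.e.\ that $S$ is a well-defined finite seminorm on all of $\spB^\sharp$; this is exactly what the already-proved bound $S(g)\le\|g\|_{\spB^\sharp}$ (from \eqref{bilinear}) gives, so no circularity arises provided one proves the easy direction first and only then invokes it in the limiting argument. A secondary subtlety is making sure the extension of $\langle\cdot,\cdot\rangle_K$ to $\spB\times\spB^\sharp$ supplied by the double Hahn--Banach argument before \eqref{bilinear} is the one being used, so that $\langle f,g_n\rangle_K\to\langle f,g\rangle_K$ holds for $f\in\spB$ and not merely $f\in\spB_0$; but this is built into \eqref{bilinear}, which is valid on the full product space. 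The Norm Consistency Property for both $\spB_0$ and $\spB_0^\sharp$ enters only to guarantee (via Propositions \ref{BanachCompletion} and \ref{BsharpRKBS}) that $\spB$ and $\spB^\sharp$ are genuine Banach spaces of functions on which the bilinear form and \eqref{bilinear} make sense; once that machinery is in place the argument is the routine density-plus-limit passage sketched above.
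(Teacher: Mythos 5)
Your proof is correct and follows essentially the same route as the paper's: the easy inequality from \eqref{bilinear}, the identity on the dense subspace $\spB_0^\sharp$ via \eqref{sharpnorm}, and then the limiting argument using the $1$-Lipschitz bound $S(g_n)\le S(g)+\|g_n-g\|_{\spB^\sharp}$ together with $\|g_n\|_{\spB^\sharp}\to\|g\|_{\spB^\sharp}$ (the paper obtains the same Lipschitz estimate by splitting $\langle f,g_n\rangle_K=\langle f,g\rangle_K+\langle f,g_n-g\rangle_K$ directly). The self-correcting detour in your third paragraph is harmless since the route you ultimately settle on is the right one.
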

\begin{proof}
By \eqref{bilinear}, the right hand side above is bounded by the left hand side. We only need to prove the other direction of the inequality. We first show it for functions in $\spB_0^\sharp$. Let $g\in \spB_0^\sharp$. It is straightforward to observe that
\begin{equation}\label{bilinear1}
\|g\|_{\spB^\sharp} = \sup_{f\in \spB_0, f\neq 0} \frac{\left|\langle f,g\rangle_K \right|}{\|f\|_\spB}\leq \sup_{f\in \spB, f\neq 0} \frac{\left|\langle f,g\rangle_K \right|}{\|f\|_\spB}.
\end{equation}
Now let $g$ be an arbitrary but fixed function in $\cB^\sharp$. Since $\spB_0^\sharp$ is dense in $\spB^\sharp$, there exists $\{g_n:n\in\bN\} \subseteq \spB_0^\sharp$ such that $\|g-g_n\|_{\spB^\sharp}\rightarrow 0$ as $n\rightarrow \infty$. This together with \eqref{bilinear1} implies
\begin{equation*}
\|g\|_{\spB^\sharp} = \lim_{n\rightarrow \infty}\|g_n\|_{\spB^\sharp} \leq \lim_{n\rightarrow \infty} \sup_{f\in \spB, f\neq 0} \frac{\left|\langle f,g_n\rangle_K \right|}{\|f\|_\spB}.
\end{equation*}
Note that
\begin{equation*}
\frac{\left|\langle f,g_n\rangle_K \right|}{\|f\|_\spB} \leq \frac{\left|\langle f,g\rangle_K \right|}{\|f\|_\spB} + \frac{\left|\langle f,g-g_n\rangle_K \right|}{\|f\|_\spB} \leq \frac{\left|\langle f,g\rangle_K \right|}{\|f\|_\spB} + \|g-g_n\|_{\spB^\sharp}.
\end{equation*}
It follows from the above two equations that
\begin{equation*}
\|g\|_{\spB^\sharp} \leq \lim_{n\rightarrow \infty} \sup_{f\in \spB, f\neq 0} \left[ \frac{\left|\langle f,g\rangle_K \right|}{\|f\|_\spB} + \|g-g_n\|_{\spB^\sharp}\right] = \sup_{f\in \spB, f\neq 0}\frac{\left|\langle f,g\rangle_K \right|}{\|f\|_\spB},
\end{equation*}
which completes the proof.
\end{proof}

We next present necessary and sufficient conditions for $K$ to be able to reproduce point evaluation functionals on $\cB$ and $\cB^\sharp$ by the bilinear form. We shall see that assuming the Norm Consistency Property, both $\cB$ and $\cB^\sharp$ are Banach spaces of functions on $X$ such that the point evaluation functionals are continuous and can be represented by the bilinear form with the function $K$. It is in this sense that $\cB$ and $\cB^\sharp$ are said to be a {\it reproducing kernel Banach space with the reproducing kernel} $K$.

\begin{thm}\label{reproducing}
Suppose that $\spB_0$ and $\spB_0^\sharp$ satisfy the Norm Consistency Property. Then both $\cB$ and $\cB^\sharp$ are pre-RKBS on $X$ and the kernel $K$ reproduces function values via the bilinear form, namely,
\begin{equation}\label{reproduce1}
\langle f, K(\cdot,{x})\rangle_K = f({x})\mbox{ for all }{x}\in X\mbox{ and }f\in\cB
\end{equation}
and
\begin{equation}\label{reproduce2}
\langle K({x},\cdot), g\rangle_K = g({x})\mbox{ for all }{x}\in X\mbox{ and }g\in\cB^\sharp.
\end{equation}
Thus, $\spB$ and $\spB^\sharp$ are reproducing kernel Banach spaces (RKBS).
\end{thm}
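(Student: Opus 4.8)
The plan is to split the statement into its structural half---that $\cB$ and $\cB^\sharp$ are pre-RKBS on $X$---which is obtained by assembling propositions already proved, and its reproducing half---the identities \eqref{reproduce1} and \eqref{reproduce2}---which is the only part requiring genuinely new work and which I would settle by a straightforward passage to the limit from $\cB_0$ and $\cB_0^\sharp$. The concluding assertion that $\cB$ and $\cB^\sharp$ are then RKBS is immediate from the definition once both halves are in hand.

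For the structural half: the Norm Consistency Property of $\cB_0$ gives, via Proposition \ref{BanachCompletion}, that $\cB$ is a pre-RKBS on $X$; in particular $\delta_x$ is continuous on $\cB$ for every $x\in X$. Continuity of point evaluations on $\cB_0$ (built into the construction through \eqref{pointevaluaitononcb0}) lets me apply Proposition \ref{boundnorm} to conclude that $\|\cdot\|_{\cB_0^\sharp}$ is a genuine norm and that point evaluations are continuous on $\cB_0^\sharp$; then the Norm Consistency Property of $\cB_0^\sharp$, together with Proposition \ref{BsharpRKBS}, gives that $\cB^\sharp$ is a pre-RKBS on $X$, so $\delta_x$ is continuous on $\cB^\sharp$ as well. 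At this point the bilinear form has already been extended to $\cB\times\cB^\sharp$---one application of Hahn-Banach in each argument, as in the discussion preceding Proposition \ref{sharpcompletion}---and it satisfies $|\langle f,g\rangle_K|\le\|f\|_\cB\|g\|_{\cB^\sharp}$ from \eqref{bilinear}.

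For \eqref{reproduce1}, fix $x\in X$ and $f\in\cB$. Since $\cB_0$ is dense in $\cB$, choose $\{f_n\}\subseteq\cB_0$ with $f_n\to f$ in $\|\cdot\|_\cB$; then $f_n(x)\to f(x)$ because $\delta_x$ is continuous on $\cB$. Now $K(\cdot,x)\in\cB_0^\sharp$, so the functional $h\mapsto\langle h,K(\cdot,x)\rangle_K$ is continuous on $\cB$ by \eqref{bilinear}, while $\langle f_n,K(\cdot,x)\rangle_K=f_n(x)$ holds on $\cB_0$ by the definition \eqref{bilinearform}; letting $n\to\infty$ yields $\langle f,K(\cdot,x)\rangle_K=\lim_n f_n(x)=f(x)$. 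For \eqref{reproduce2}, fix $x\in X$ and $g\in\cB^\sharp$, and note $K(x,\cdot)\in\cB_0\subseteq\cB$. Since $\cB_0^\sharp$ is dense in $\cB^\sharp$, choose $\{g_n\}\subseteq\cB_0^\sharp$ with $g_n\to g$ in $\|\cdot\|_{\cB^\sharp}$; then $g_n(x)\to g(x)$ because $\delta_x$ is continuous on $\cB^\sharp$. The functional $h\mapsto\langle K(x,\cdot),h\rangle_K$ is continuous on $\cB^\sharp$ by \eqref{bilinear}, while $\langle K(x,\cdot),g_n\rangle_K=g_n(x)$ holds on $\cB_0^\sharp$ by \eqref{bilinearform}; letting $n\to\infty$ gives $\langle K(x,\cdot),g\rangle_K=\lim_n g_n(x)=g(x)$.

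The only delicate point---hence the main, albeit mild, obstacle---is the coherence of the two-step extension of $\langle\cdot,\cdot\rangle_K$ underlying these computations: one must be sure that extending first in the $\cB_0$-slot (each $g\in\cB_0^\sharp$ inducing a bounded functional on $\cB_0$ of norm exactly $\|g\|_{\cB_0^\sharp}$, then extended to $\cB$ by density) and afterwards in the $\cB^\sharp$-slot produces a single well-defined bilinear form, separately continuous, whose restriction to $\cB_0\times\cB_0^\sharp$ is the original form \eqref{bilinearform}. This is exactly what the discussion preceding Proposition \ref{sharpcompletion} establishes, and with it in place the two limiting arguments above are legitimate; everything else is routine bookkeeping with the fact that on a pre-RKBS norm convergence forces pointwise convergence.
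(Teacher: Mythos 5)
Your proposal is correct and follows essentially the same route as the paper: cite Propositions \ref{BanachCompletion} and \ref{BsharpRKBS} for the pre-RKBS claims, then pass to the limit along a $\cB_0$-approximating (resp.\ $\cB_0^\sharp$-approximating) sequence using continuity of point evaluations together with the bound \eqref{bilinear} on the extended bilinear form. Your extra remark on the coherence of the two-step Hahn--Banach extension is a sensible (if already handled) precaution, but otherwise the argument matches the paper's proof step for step.
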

\begin{proof}
By Propositions \ref{BanachCompletion} and \ref{BsharpRKBS}, both $\cB$ and $\cB^\sharp$ are pre-RKBS on $X$. For each $f\in \spB$, there exists a sequence $\{f_n:n\in\bN\} \subseteq \spB_0$ convergent to $f$. As a consequence, we have for any ${x}\in {X}$
\begin{equation*}
f({x})=\lim_{n\rightarrow \infty}f_n({x})=\lim_{n\rightarrow \infty} \langle f_n, K(\cdot,{x})\rangle_K.
\end{equation*}
By \eqref{bilinear}, $\langle\cdot, K(\cdot,{x})\rangle_K$ is a bounded linear functional on $\spB$, which implies
\begin{equation*}
\lim_{n\rightarrow \infty} \langle f_n, K(\cdot,{x})\rangle_K = \langle f, K(\cdot,{x})\rangle_K.
\end{equation*}
Combining the above two equations proves (\ref{reproduce1}). Equation (\ref{reproduce2}) can be proved similarly.
\end{proof}

We next discuss the relationship between the space $\spB^\sharp$ and the dual space $\spB^*$ of $\spB$. It is clear by \eqref{bilinear} and \eqref{isometricembedding} that the mapping $\cL$ from $\cB^\sharp$ to $\cB^*$ defined by the bilinear form,
\begin{equation}\label{embedding}
(\cL g)(f):=\langle f,g\rangle_K,\ \ f\in\cB,\ \ g\in\cB^\sharp,
\end{equation}
is isometric and linear. In other words, $\cL$ is an embedding from $\cB^\sharp$ to $\cB^*$. We next present a necessary and sufficient condition for it to be surjective.

\begin{prop}
Suppose that both $\spB_0$ and $\spB_0^\sharp$ satisfy the Norm Consistency Property. The mapping $\cL$ defined by (\ref{embedding}) is surjective onto $\cB^*$ if and only if for any proper closed subspace $\spM\subsetneqq \spB$, the orthogonal space $\spM^\perp:= \{ g\in \spB^\sharp: \langle f,g\rangle_K=0\mbox{ for all }f\in\cM\}$ is nontrivial.
\end{prop}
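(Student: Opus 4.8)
The plan is to pass to the dual space $\cB^*$, rewrite both sides of the claimed equivalence in terms of annihilators, and then invoke the Hahn--Banach theorem. Write $N:=\cL(\cB^\sharp)\subseteq\cB^*$ for the image of the map in (\ref{embedding}); it is a linear subspace since $\cL$ is linear, and $\cL$ is injective since it is an isometry. For a closed subspace $\cM\subseteq\cB$ let $\cM^{\circ}:=\{\Lambda\in\cB^*:\Lambda(f)=0\ \text{for all}\ f\in\cM\}$ denote its annihilator. Since $\langle f,g\rangle_K=(\cL g)(f)$ by the definition of $\cL$, a function $g\in\cB^\sharp$ lies in $\cM^\perp$ precisely when $\cL g\in\cM^{\circ}$; because $\cL$ is injective this gives
\begin{equation*}
\cM^\perp\neq\{0\}\ \Longleftrightarrow\ \cM^{\circ}\cap N\neq\{0\}.
\end{equation*}
Hence the assertion reduces to the purely dual-space statement that $N=\cB^*$ if and only if $\cM^{\circ}\cap N\neq\{0\}$ for every proper closed subspace $\cM\subsetneqq\cB$.

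For the forward implication I would argue directly: if $N=\cB^*$ and $\cM\subsetneqq\cB$ is a proper closed subspace, then choosing $f_0\in\cB\setminus\cM$ and using that $\cM$ is closed (so $\mathrm{dist}(f_0,\cM)>0$), the Hahn--Banach theorem produces a nonzero $\Lambda\in\cB^*$ vanishing on $\cM$, i.e. a nonzero element of $\cM^{\circ}=\cM^{\circ}\cap N$; thus $\cM^\perp\neq\{0\}$. For the converse I would argue by contraposition: if $N\neq\cB^*$, pick $\Lambda\in\cB^*\setminus N$, which is necessarily nonzero, and set $\cM:=\ker\Lambda$, a proper closed subspace of $\cB$. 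A short computation gives $\cM^{\circ}=\bC\Lambda$: any continuous functional vanishing on $\ker\Lambda$ factors through the one-dimensional quotient $\cB/\ker\Lambda$ and so is a scalar multiple of $\Lambda$. Since $N$ is a subspace that does not contain $\Lambda$, it contains no nonzero scalar multiple of $\Lambda$ either, so $\cM^{\circ}\cap N=\{0\}$, i.e. $\cM^\perp=\{0\}$; this contradicts the hypothesis, and therefore $N=\cB^*$, meaning $\cL$ is surjective onto $\cB^*$.

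I do not expect a serious obstacle: once the statement is recast through annihilators it is a routine application of Hahn--Banach together with the standard description of the annihilator of a hyperplane. The two points needing a little care are (i) the use of injectivity of $\cL$ — exactly what the isometry established just before the proposition provides — to pass between ``$\cM^\perp$ nontrivial in $\cB^\sharp$'' and ``$\cM^{\circ}\cap N$ nontrivial in $\cB^*$''; and (ii) noting that the argument nowhere uses norm-closedness of $N$ in $\cB^*$ (it is closed, being the isometric image of a Banach space, but this is not needed, so no weak-$*$ closure subtleties arise). Accordingly, the Norm Consistency hypotheses enter only insofar as they guarantee that $\cL$ is a well-defined isometric linear embedding and that $\cB,\cB^\sharp$ are Banach spaces of functions, as recorded earlier in the section.
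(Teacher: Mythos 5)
Your argument is correct and follows essentially the same route as the paper: both directions hinge on Hahn--Banach for the necessity and on testing the hypothesis with $\cM=\ker\nu$ for the sufficiency, with the injectivity of $\cL$ used in exactly the same place. The only cosmetic difference is that you prove sufficiency contrapositively via the annihilator identity $(\ker\Lambda)^{\circ}=\bC\Lambda$, whereas the paper constructs the preimage $g_0=g/\langle f_0,g\rangle_K$ directly; these are the same computation in different clothing.
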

\begin{proof}
We first prove the necessity. For any proper closed subspace $\spM\subsetneqq \spB$, by the Hahn-Banach theorem, there exists a nontrivial functional $\nu \in \spB^*$ such that $\nu(f)=0$ for all $f\in \spM$. If $\cL$ is surjective then there exists a function $g\in \spB^\sharp$ such that $\cL(g)=\nu$, namely, $\nu(f)=\langle f,g\rangle_K$ for all $f\in \spB$. It follows that $g\in \spM^\perp$ and $g\neq 0$ as $\nu$ is nontrivial.

We next show the sufficiency. Let $\nu$ be a nontrivial functional in $\spB^*$. Then its kernel $\ker(\nu)$ is a proper closed subspace of $\cB$. By assumption, there exists a nonzero function $g\in \spM^\perp$. This enables us to find a function $f_0\in \spB \backslash \spM$ such that $\langle f_0, g\rangle_K\neq 0$ and $\nu(f_0)=1$. Set $g_0: =g/{\langle f_0,g\rangle_K}$. Since $f-\nu(f)f_0\in\ker(\nu)$ for all $f\in\cB$, we get for any $f\in \spM$
\begin{eqnarray*}
\langle f,g_0\rangle_K=\langle f-\nu(f)f_0, g_0\rangle_K + \langle\nu(f)f_0, g_0\rangle_K=\nu(f)\langle f_0,g_0\rangle_K=\nu(f),
\end{eqnarray*}
which implies that $\cL$ is surjective.
\end{proof}

We close the section with a conclusion on the general construction and the related results presented above.
\begin{thm}\label{procedures}
Suppose that
\begin{enumerate}
\renewcommand{\labelenumi} {(\alph{enumi})}
\item the vector space $\cB_0=\linearspan\{K({x},\cdot): {x}\in X\}$ with the norm $\|\cdot\|_{\cB_0}$  has the Norm Consistency Property, and
\item point evaluation functionals are continuous on $\cB_0$.
\end{enumerate}
Then the following statements hold true:
\begin{enumerate}
\renewcommand{\labelenumi}{(\arabic{enumi})}
\item  $\cB_0$ can be completed to a pre-RKBS $\cB$ on $X$;

\item the norm $\|\cdot\|_{\cB_0^\sharp}$ given by (\ref{sharpnorm}) is well-defined and point evaluation functionals are bounded on $\cB_0^\sharp$ with respect to this norm;

\item if $\cB_0^\sharp$ satisfies the Norm Consistency Property as well then $\cB_0^\sharp$ can be completed to an RKBS $\cB^\sharp$ and $K$ is the reproducing kernel for both $\cB$ and $\cB^\sharp$ in the sense that (\ref{reproduce1}) and (\ref{reproduce2}) hold true. In this case, $\cB^\sharp$ can be isometrically embedded into $\cB^*$ via the bilinear form, and the embedding is surjective if and only if for any proper closed subspace $\spM$ of $\spB$, $\spM^\perp$ is nontrivial.
\end{enumerate}
\end{thm}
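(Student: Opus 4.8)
The plan is to recognize Theorem \ref{procedures} as a consolidation of the propositions already proved in this section; the proof therefore consists of verifying that the stated hypotheses (a) and (b) are precisely what each earlier result requires, and then invoking those results in the right order.

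For (1): hypothesis (b) is exactly what makes the completion procedure described just before Proposition \ref{BanachCompletion} meaningful, since continuity of each $\delta_x$ on $\cB_0$ forces the pointwise limits $f(x)=\lim_{n\to\infty} f_n(x)$ of a Cauchy sequence to exist and hence produces a space of genuine functions on $X$. Hypothesis (a) together with Proposition \ref{BanachCompletion} then gives that $\|\cdot\|_\cB$ is a well-defined norm, that $\cB$ is a pre-RKBS, and that $\cB_0$ is dense in it. For (2), I would simply quote the sufficiency direction of Proposition \ref{boundnorm}: hypothesis (b) implies that $\|\cdot\|_{\cB_0^\sharp}$ from (\ref{sharpnorm}) is finite on every $g=\sum_{j=1}^n a_j K(\cdot,x_j)\in\cB_0^\sharp$ (being bounded by $\sum_{j=1}^n |a_j| M_{x_j}$) and that each $\delta_x$ is bounded on $\cB_0^\sharp$ in this norm.

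For (3): with the extra hypothesis that $\cB_0^\sharp$ satisfies the Norm Consistency Property, Proposition \ref{BsharpRKBS} completes $\cB_0^\sharp$ to a pre-RKBS $\cB^\sharp$; the bilinear form extends (uniquely, via the two applications of Hahn--Banach described above) to $\cB\times\cB^\sharp$ with the bound (\ref{bilinear}); Proposition \ref{sharpcompletion} then upgrades (\ref{sharpnorm}) to (\ref{isometricembedding}) on all of $\cB^\sharp$; and Theorem \ref{reproducing} delivers the reproducing identities (\ref{reproduce1}) and (\ref{reproduce2}), so $K$ is the reproducing kernel for both spaces and both are RKBS. Finally, the discussion around (\ref{embedding}) shows $\cL:\cB^\sharp\to\cB^*$ is an isometric linear embedding, and the surjectivity criterion --- $\cL$ onto $\cB^*$ if and only if $\spM^\perp$ is nontrivial for every proper closed subspace $\spM\subsetneqq\cB$ --- is exactly the last Proposition, which I would cite verbatim.

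I do not anticipate any real obstacle: everything needed has already been established, and the argument is pure bookkeeping. The only points that warrant attention are making sure Proposition \ref{boundnorm}, Proposition \ref{sharpcompletion}, and Theorem \ref{reproducing} are each applied with both (a) and (b) in force (and, for (3), with the Norm Consistency Property of $\cB_0^\sharp$ added), and that the extension of $\langle\cdot,\cdot\rangle_K$ used throughout is the Hahn--Banach extension satisfying (\ref{bilinear}), so that the isometry of $\cL$ is read off from (\ref{bilinear}) and (\ref{isometricembedding}) together.
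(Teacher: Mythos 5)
Your proposal is correct and matches the paper's intent exactly: the paper offers no separate proof for Theorem \ref{procedures}, presenting it as a summary ("conclusion") of Propositions \ref{BanachCompletion}, \ref{boundnorm}, \ref{BsharpRKBS}, \ref{sharpcompletion}, Theorem \ref{reproducing}, and the surjectivity proposition, assembled in precisely the order you describe. The bookkeeping you supply --- checking that hypotheses (a) and (b), plus the Norm Consistency Property of $\cB_0^\sharp$ for part (3), are what each cited result requires --- is exactly the argument the paper leaves implicit.
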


\section{RKBS with the $\ell^1$ Norm} \label{RKBSlonesec}
We shall follow the procedures in Theorem \ref{procedures} to construct an RKBS with the $\ell^1$ norm in this section. To start, we let $K$ be a bounded function on $X\times X$ such that
\begin{equation}\label{linearlyindependentK}
K({x}_j,\cdot),j\in\bN_n\mbox{ are linearly independent for all pairwise distinct points } {x}_j\in X,j\in\bN_n.
\end{equation}
Note that this assumption is implied by Admissibility Assumption \nonsingassum, but is somewhat weaker than \nonsingassum. Introduce an $\ell^1$ norm on $\cB_0=\linearspan\{K({x},\cdot): {x}\in X\}$ by setting for all finitely many pairwise distinct points ${x}_j\in X$ and constants $c_j\in\bC$, $j\in\bN_m$, $m\in\bN$
\begin{equation}\label{l1Banach}
\biggl\| \sum_{j=1}^m c_jK({x}_j,\cdot) \biggr\|_{\spB_0}: = \sum_{j=1}^m|c_j|.
\end{equation}
	Since $K$ is bounded, it is clear that point evaluation functionals are bounded on $\cB_0$. We next check the important Norm Consistency Property and find that it is implied by the Admissibility Assumption above.

\begin{prop}\label{H1H2}
The space $\cB_0$ with the norm (\ref{l1Banach}) satisfies the Norm Consistency Property if and only if $K$ satisfies \Hprimelabel.
\end{prop}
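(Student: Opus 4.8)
The plan is to use hypothesis (\ref{linearlyindependentK}) to identify $\cB_0$, up to isometric isomorphism, with the dense subspace of finitely supported vectors inside $\ell^1(X)$: by linear independence of the $K(x_j,\cdot)$ every $f\in\cB_0$ has a \emph{unique} representation $f=\sum_j c_jK(x_j,\cdot)$ over pairwise distinct $x_j$, and by (\ref{l1Banach}) the norm $\|f\|_{\cB_0}$ is exactly the $\ell^1$ norm of its coefficient vector. Consequently a sequence is Cauchy in $\cB_0$ if and only if the corresponding (finitely supported) coefficient vectors form a Cauchy sequence in $\ell^1(X)$, hence converge there to some $\vc\in\ell^1(X)$. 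The whole equivalence then reduces to comparing the statements ``$\vc=\vzero$'' and ``$f_n\to0$ pointwise on $X$'', and \Hprimelabel is precisely the bridge between them.

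\emph{Sufficiency (\Hprimelabel $\Rightarrow$ Norm Consistency Property).} Let $\{f_n\}$ be Cauchy in $\cB_0$ with $f_n(x)\to0$ for all $x\in X$, and let $\vc^{(n)}\in\ell^1(X)$ be the coefficient vector of $f_n$, so $\vc^{(n)}\to\vc$ in $\ell^1(X)$. I would form the countable set $S:=\supp\vc\cup\bigcup_{n}\supp\vc^{(n)}$, enumerate it as pairwise distinct points $\{y_j:j\in\bN\}$, and use the bound $|K(s,t)|\le M$ to estimate, for every $x\in X$,
$$
\Bigl|f_n(x)-\sum_{j}c_{y_j}K(y_j,x)\Bigr|=\Bigl|\sum_{j}\bigl(c^{(n)}_{y_j}-c_{y_j}\bigr)K(y_j,x)\Bigr|\le M\,\|\vc^{(n)}-\vc\|_{\ell^1(X)}\longrightarrow0.
$$
Since also $f_n(x)\to0$, this gives $\sum_{j}c_{y_j}K(y_j,x)=0$ for all $x\in X$, so \Hprimelabel forces $c_{y_j}=0$ for all $j$, i.e.\ $\vc=\vzero$; hence $\|f_n\|_{\cB_0}=\|\vc^{(n)}\|_{\ell^1(X)}\to\|\vc\|_{\ell^1(X)}=0$, which is the Norm Consistency Property.

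\emph{Necessity (Norm Consistency Property $\Rightarrow$ \Hprimelabel).} I would argue by contraposition. If \Hprimelabel fails, choose pairwise distinct $x_j\in X$, $j\in\bN$, and $\vc\in\ell^1(\bN)$ with $\vc\ne\vzero$ and $\sum_{j=1}^\infty c_jK(x_j,x)=0$ for all $x\in X$. The partial sums $f_n:=\sum_{j=1}^n c_jK(x_j,\cdot)\in\cB_0$ form a Cauchy sequence, since $\|f_m-f_n\|_{\cB_0}=\sum_{j=n+1}^m|c_j|\to0$; and, again using $|K|\le M$, $|f_n(x)|=\bigl|\sum_{j>n}c_jK(x_j,x)\bigr|\le M\sum_{j>n}|c_j|\to0$ for every $x\in X$. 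The Norm Consistency Property would then yield $\|f_n\|_{\cB_0}=\sum_{j=1}^n|c_j|\to0$, contradicting $\sum_{j=1}^n|c_j|\to\|\vc\|_{\ell^1(\bN)}>0$.

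The only delicate point is the bookkeeping in the sufficiency half: each $f_n$ is supported over its own finite set of sample points, so those sets together with $\supp\vc$ must be merged into a single countable index sequence before \Hprimelabel—which is phrased for a fixed sequence $\{x_j:j\in\bN\}$—can be invoked, and one must check that $\cB_0$-Cauchyness really transfers to $\ell^1(X)$-Cauchyness of the coefficient vectors, which is exactly where (\ref{linearlyindependentK}) (well-definedness of coefficients) is used. Everything else is routine estimation with the uniform bound $M$ on $K$.
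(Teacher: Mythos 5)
Your proof is correct and follows essentially the same route as the paper's: both directions reduce the statement to the isometry between $\cB_0$ and the finitely supported vectors in $\ell^1(X)$, use the uniform bound $M$ on $K$ to pass from $\ell^1$-convergence of coefficients to pointwise convergence of functions, and then invoke \Hprimelabel (your necessity half is merely the contrapositive of the paper's direct argument). Your explicit merging of the supports into a single countable index sequence is a point the paper glosses over, but it is the same proof.
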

\begin{proof}
We first show the necessity. Suppose that for some $\vc\in\ell^1(\bN)$ and pairwise distinct $\{{x}_j\in X:j\in\bN\}$,  $\sum_{j=1}^\infty c_jK({x}_j, {x})=0$ for all ${x}\in X$. Let $f_n:=\sum_{j=1}^n c_j K({x}_j,\cdot)$ for all $n\in \N$. Since $\vc \in \ell^1(\bN)$, $\{f_n: n\in \bN\}$ forms a Cauchy sequence in $\cB_0$. Moreover, $\limd_{n\rightarrow \infty}f_n({x})=0$ for all ${x}\in {X}$ as $K$ is bounded on $X\times X$. It follows from the Norm Consistency Property that $\limd_{n\rightarrow \infty}\| f_n\|_{\spB_0} = \limd_{n\rightarrow \infty} \sum_{j=1}^n |c_j| = \| \vc\|_{\ell^1(\bN)}=0$. Therefore, \Hprimelabel holds true.

On the other hand, suppose that $K$ satisfies \Hprimelabel. Let $\{f_n:n\in\bN\}$ be a Cauchy sequence in $\spB_0$ with $\limd_{n\rightarrow \infty}f_n({x})=0$ for all ${x}\in {X}$. We can find pairwise distinct ${x}_j\in X$, $j\in\bN$ such that for any $n\in \N$
\begin{equation*}
f_n = \sum_{j=1}^\infty c_{n,j}K({x}_j,\cdot),
\end{equation*}
where $\vc_n:=(c_{n,j}: j\in \bN)$ has finitely many nonzero components. By definition (\ref{l1Banach}), $\{\vc_n:n\in\bN\}$ is a Cauchy sequence in $\ell^1(\bN)$. Let $\vc$ be its limit in $\ell^1(\bN)$ and define
\begin{equation*}
f: = \sum_{j=1}^\infty c_{j}K(x_j,\cdot).
\end{equation*}
Suppose that $|K(s,t)|\le M$ for some positive constant $M$ and all $s, t \in X$. A direct calculation gives that for any ${x}\in {X}$
\begin{equation*}
|f_n({x}) - f({x})| = \biggl|\sum_{j=1}^\infty (c_{n,j} - c_j) K({x}_j,{x}) \biggr| \leq M \|\vc_n - \vc \|_{\ell^1(\bN)}.
\end{equation*}
It follows that $\limd_{n\rightarrow \infty}f_n({x})=f({x})$ for all ${x}\in {X}$. Since $\limd_{n\rightarrow \infty}f_n({x})=0$ for all ${x}\in {X}$, we have $f({x})=0$ for all ${x}\in {X}$. By \Hprimelabel, $\vc=0$, which implies
\begin{equation*}
\limd_{n\rightarrow \infty}\| f_n\|_{\spB_0}=\limd_{n\rightarrow \infty}\|\vc_n \|_{\ell^1(\bN)} =\| \vc\|_{\ell^1(\bN)}=0.
\end{equation*}
The proof is complete.
\end{proof}

Functions $K$ satisfying property \Hprimelabel will be given later. We assume for the time being that \Hprimelabel holds true. One sees from the proof of Proposition \ref{H1H2} that $\cB$ has the form (\ref{cbl1norm}). We remark that in the preparation of the paper, we came across a Banach space with a form similar to (\ref{cbl1norm}) used in \cite{Xiao2010} for error estimates with linear programming regularization. One observes from (\ref{cbl1norm}) that $\ell^1(X)$ is isometrically isomorphic to $\cB$ through the mapping
$$
\Phi(\vc):=\sum_{t\in X}\vc_{t} K(t,\cdot),\ \ \vc\in\ell^1(X).
$$
In this sense, we say that $\cB$ is a pre-RKBS on $X$ with the $\ell^1$ norm. It remains to derive a reproducing kernel for it. By Theorem \ref{reproducing}, it suffices to check the Norm Consistency Property for $\cB_0^{\sharp}$. We shall show that the Norm Consistency Property automatically holds true for $\cB_0^\sharp$ without any additional requirement. To this end, we first calculate a specific form of the norm $\| \cdot\|_{\spB_0^\sharp}$.

Denote for any function $g$ on $X$ by $\|g\|_{L^\infty(X)}$ the supremum of $|g({x})|$ over ${x}\in X$.

\begin{lem}\label{BsharpNorm}
There holds for any function $g\in \spB_0^\sharp$ that $\| g \|_{\spB_0^\sharp}=\| g \|_{L^\infty(X)}$.
\end{lem}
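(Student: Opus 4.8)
The plan is to compute $\|g\|_{\cB_0^\sharp}$ directly from its defining formula \eqref{sharpnorm} and recognize it as the $L^\infty(X)$ norm of $g$. Fix $g = \sum_{j=1}^n a_j K(\cdot,x_j) \in \cB_0^\sharp$, where without loss of generality the points $x_j$ are pairwise distinct. For an arbitrary $f = \sum_{k=1}^m c_k K(s_k,\cdot) \in \cB_0$ with the $s_k$ pairwise distinct, the bilinear form gives $\langle f, g\rangle_K = \sum_{k=1}^m \sum_{j=1}^n c_k a_j K(s_k, x_j) = \sum_{k=1}^m c_k\, g(s_k)$. Hence
$$
\frac{|\langle f,g\rangle_K|}{\|f\|_{\cB_0}} = \frac{\bigl|\sum_{k=1}^m c_k\, g(s_k)\bigr|}{\sum_{k=1}^m |c_k|} \le \max_{1\le k\le m} |g(s_k)| \le \|g\|_{L^\infty(X)},
$$
which already yields $\|g\|_{\cB_0^\sharp} \le \|g\|_{L^\infty(X)}$ after taking the supremum over $f \ne 0$.

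For the reverse inequality I would exploit that the supremum defining $\|g\|_{L^\infty(X)}$ can be approached (or, since $g$ is a finite linear combination of bounded kernel sections with $|K|\le M$, attained in the limit) by a single point: given $\varepsilon>0$, pick $s\in X$ with $|g(s)| \ge \|g\|_{L^\infty(X)} - \varepsilon$, and test with $f = K(s,\cdot) \in \cB_0$, which has $\|f\|_{\cB_0} = 1$ by \eqref{l1Banach}. Then $\langle f,g\rangle_K = g(s)$, so $\|g\|_{\cB_0^\sharp} \ge |g(s)| \ge \|g\|_{L^\infty(X)} - \varepsilon$. Letting $\varepsilon \to 0$ gives $\|g\|_{\cB_0^\sharp} \ge \|g\|_{L^\infty(X)}$, and the two inequalities combine to the claimed identity.

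I do not expect a genuine obstacle here; the one point deserving a little care is the passage from the $\max$ over the finitely many sample points $s_k$ appearing in a given $f$ to the full supremum $\|g\|_{L^\infty(X)}$ over all of $X$ — but this is handled precisely by the freedom to choose $f = K(s,\cdot)$ for any single $s\in X$, which is what makes the upper and lower bounds meet. One should also note in passing that $\|g\|_{L^\infty(X)} < \infty$ since $|g(x)| \le \sum_j |a_j|\,|K(x,x_j)| \le M \sum_j |a_j|$, so the expression on the right is a legitimate finite quantity; this also re-confirms (consistently with Proposition \ref{boundnorm}) that $\|\cdot\|_{\cB_0^\sharp}$ is well-defined.
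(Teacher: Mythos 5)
Your proof is correct and takes essentially the same route as the paper's: the upper bound follows from the triangle inequality applied to $\langle f,g\rangle_K=\sum_k c_k g(s_k)$, and the lower bound comes from testing with $f=K(s,\cdot)$, which has unit $\cB_0$-norm. The only cosmetic difference is your $\varepsilon$-argument, where the paper simply observes $\|g\|_{\cB_0^\sharp}\ge |g(x_0)|$ for every $x_0\in X$ and takes the supremum.
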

\begin{proof}
We first prove that $\| g \|_{\spB_0^\sharp}$ is bounded by $\| g \|_{L^\infty(X)}$. Any $f\in \spB_0$ has the form $f=\sum_{j=1}^n c_j K({x}_j,\cdot)$ for some $c_j \in \bC$ and pairwise distinct ${x}_j\in X$, $j\in\bN_n$. We verify that
\begin{equation*}
\left| \langle f, g\rangle_K \right| =\biggl| \biggl\langle\sum_{j=1}^n c_j K({x}_j,\cdot), g \biggr\rangle \biggr|= \biggl|\sum_{j=1}^n c_j g({x}_j) \biggr| \leq \|g \|_{L^\infty(X)}\sum_{j=1}^n |c_j| = \|g \|_{L^\infty(X)}\| f\|_{\spB_0}  ,
\end{equation*}
which implies $\| g \|_{\spB_0^\sharp} \leq \| g \|_{L^\infty(X)}$. For the other direction, we notice for all ${x}_0\in X$
\begin{equation*}
\| g \|_{\spB_0^\sharp}\ge \frac{\left|\langle K({x}_0,\cdot),g\rangle_K \right|}{\|K({x}_0,\cdot)\|_{\spB_0}} = | g({x}_0)|.
\end{equation*}
Since ${x}_0$ is arbitrarily chosen, we have $\| g \|_{\spB_0^\sharp} \geq \| g \|_{L^\infty(X)}$.
\end{proof}

We show that the space $\spB^\sharp$ is also a pre-RKBS on $X$.

\begin{lem}\label{bsharph1ok}
The space $\cB_0^\sharp$ satisfies the Norm Consistency Property.
\end{lem}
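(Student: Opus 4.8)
The plan is to reduce the statement to the elementary fact that a sequence which is Cauchy in the supremum norm and converges pointwise to zero must converge to zero in the supremum norm. The bridge to that fact is already in hand: Lemma \ref{BsharpNorm} identifies $\|g\|_{\spB_0^\sharp}$ with $\|g\|_{L^\infty(X)}$ for every $g\in\spB_0^\sharp$, so I never need to manipulate the defining supremum \eqref{sharpnorm} again.

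Concretely, I would start with an arbitrary Cauchy sequence $\{g_n:n\in\bN\}$ in $\spB_0^\sharp$ satisfying $\limd_{n\to\infty}g_n({x})=0$ for every ${x}\in X$, and fix $\varepsilon>0$. By the Cauchy property and Lemma \ref{BsharpNorm} there is an $N\in\bN$ with $\|g_n-g_m\|_{L^\infty(X)}<\varepsilon$ for all $m,n\ge N$, which by definition of the supremum norm means $|g_n({x})-g_m({x})|<\varepsilon$ for all ${x}\in X$ and all $m,n\ge N$. Now I fix $n\ge N$ and ${x}\in X$, let $m\to\infty$, and use $g_m({x})\to0$ to conclude $|g_n({x})|\le\varepsilon$. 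Since ${x}\in X$ was arbitrary, $\|g_n\|_{\spB_0^\sharp}=\|g_n\|_{L^\infty(X)}\le\varepsilon$ for every $n\ge N$, and since $\varepsilon>0$ was arbitrary, $\limd_{n\to\infty}\|g_n\|_{\spB_0^\sharp}=0$. This is exactly the Norm Consistency Property for $\spB_0^\sharp$.

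There is essentially no obstacle here; the only point worth care is that the argument uses Lemma \ref{BsharpNorm} to replace the abstract dual-type norm $\|\cdot\|_{\spB_0^\sharp}$ by the concrete supremum norm before passing to the limit in $m$, so the proof should explicitly invoke that lemma at the start. Everything else is the standard "uniformly Cauchy plus pointwise limit equals uniform limit" reasoning, and in particular no further hypothesis on $K$ (beyond boundedness, already used to set up $\spB_0^\sharp$) is needed, which is the reason this lemma holds unconditionally.
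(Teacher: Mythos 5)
Your proof is correct and follows essentially the same route as the paper: both invoke Lemma \ref{BsharpNorm} to replace $\|\cdot\|_{\spB_0^\sharp}$ by the supremum norm and then apply the standard ``uniformly Cauchy plus pointwise limit zero implies uniform limit zero'' argument by letting $m\to\infty$ in the Cauchy estimate.
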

\begin{proof}
Let $\{f_n:n\in\bN\}$ be a Cauchy sequence in $\spB_0^\sharp$ with $\limd_{n\rightarrow \infty} f_n({x})=0$ for all ${x}\in {X}$. By Lemma \ref{BsharpNorm}, there exists for any $\epsilon>0$ some positive integer $N_0$ such that when $m,n \geq N_0$,
$$
| f_m({x}) -f_n({x}) | \leq  \epsilon \quad \mbox{for all } {x}\in {X}.
$$
Since $\limd_{n\rightarrow \infty} f_n({x})=0$, we let $n$ goes to infinity in the above inequality to obtain that when $m\geq N_0$,
\begin{equation*}
| f_m({x})| \leq  \epsilon \quad \mbox{for all } {x}\in {X}.
\end{equation*}
In other words, $\|f_m\|_{L^\infty(X)} \leq  \epsilon$ when $m\geq N_0$, implying $\limd_{n\rightarrow \infty}\| f_n\|_{L^\infty(X)}=0$.
\end{proof}

By Proposition \ref{H1H2} and Lemmas \ref{BsharpNorm} and \ref{bsharph1ok}, we conclude our construction of RKBS with the $\ell^1$ norm in the following result.

\begin{thm}\label{H1condition}
Let $K$ be a bounded function on $X\times X$ that satisfies \Hprimelabel. Then $\cB$ having the form (\ref{cbl1norm}) and $\cB^\sharp$ are RKBS on $X$ with the reproducing kernel $K$.
\end{thm}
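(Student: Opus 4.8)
The plan is to obtain Theorem~\ref{H1condition} by verifying the hypotheses of Theorem~\ref{procedures} for the pair $\bigl(\cB_0,\|\cdot\|_{\cB_0}\bigr)$ of~\eqref{l1Banach}, invoking Proposition~\ref{H1H2} and Lemmas~\ref{BsharpNorm} and~\ref{bsharph1ok}, and then matching the resulting abstract completions with the concrete spaces in the statement.

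First I would record a preliminary point: \Hprimelabel, restricted to finitely supported coefficient sequences, is exactly the linear independence condition~\eqref{linearlyindependentK}, so the $\ell^1$ norm~\eqref{l1Banach} is unambiguously defined on $\cB_0$. Next, since $|K(s,t)|\le M$, for fixed $x\in X$ and $f=\sum_{j=1}^m c_jK(x_j,\cdot)\in\cB_0$ one has $|f(x)|=\bigl|\sum_{j=1}^m c_jK(x_j,x)\bigr|\le M\sum_{j=1}^m|c_j|=M\|f\|_{\cB_0}$, so point evaluations are continuous on $\cB_0$; this is hypothesis~(b) of Theorem~\ref{procedures}. Hypothesis~(a), the Norm Consistency Property for $\cB_0$, is precisely what Proposition~\ref{H1H2} provides under the standing assumption \Hprimelabel.

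With (a) and (b) established, parts~(1) and~(2) of Theorem~\ref{procedures} say that $\cB_0$ completes to a pre-RKBS $\cB$, that the dual norm $\|\cdot\|_{\cB_0^\sharp}$ of~\eqref{sharpnorm} is well-defined, and that point evaluations are bounded on $\cB_0^\sharp$. By Lemma~\ref{bsharph1ok}, $\cB_0^\sharp$ also satisfies the Norm Consistency Property, so part~(3) of Theorem~\ref{procedures} applies: $\cB_0^\sharp$ completes to an RKBS $\cB^\sharp$, the bilinear form extends to $\cB\times\cB^\sharp$ with $|\langle f,g\rangle_K|\le\|f\|_\cB\|g\|_{\cB^\sharp}$, and $K$ reproduces point evaluations on both spaces, i.e.\ \eqref{reproduce1}--\eqref{reproduce2} hold. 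Moreover, by Lemma~\ref{BsharpNorm} the norm on $\cB_0^\sharp$ coincides with the supremum norm, so $\cB^\sharp$ is literally the completion under the supremum norm described in the statement.

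It then remains to identify $\cB$ with the concrete space~\eqref{cbl1norm}. Following the argument in the proof of Proposition~\ref{H1H2}, for a Cauchy sequence $\{f_n\}\subseteq\cB_0$ one collects the (countably many) points occurring in all the $f_n$ into a single sequence $\{x_j\}$, writes $f_n=\sum_j c_{n,j}K(x_j,\cdot)$ with $\vc_n\in\ell^1(\bN)$ finitely supported, notes that $\{\vc_n\}$ is Cauchy in $\ell^1(\bN)$ with limit $\vc$, and uses $|K|\le M$ to conclude $f_n\to\sum_j c_jK(x_j,\cdot)$ pointwise; hence every $f\in\cB$ has the form $\sum_{t\in\supp\vc}c_tK(t,\cdot)$ with $\|f\|_\cB=\lim_{n\to\infty}\|\vc_n\|_{\ell^1(\bN)}=\|\vc\|_{\ell^1(X)}$. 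Uniqueness of $\vc$ is immediate from \Hprimelabel: if $\sum_{t}(c_t-c'_t)K(t,\cdot)\equiv0$ with $\vc-\vc'\in\ell^1(X)$ of countable support, then $\vc=\vc'$. Thus $\Phi(\vc)=\sum_{t\in X}c_tK(t,\cdot)$ is an isometric isomorphism of $\ell^1(X)$ onto $\cB$, which is~\eqref{cbl1norm}. I do not expect a real obstacle; the statement is essentially a corollary of the general construction, and the only step requiring genuine care is this last identification, where boundedness of $K$ is used to pass from $\ell^1$-convergence of coefficients to pointwise convergence of functions, and \Hprimelabel is used to pin down the coefficient representation uniquely.
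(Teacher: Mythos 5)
Your proposal is correct and follows essentially the same route as the paper: the paper's proof of this theorem is exactly the combination of Proposition~\ref{H1H2}, Lemmas~\ref{BsharpNorm} and~\ref{bsharph1ok}, and the general machinery of Section~\ref{gencostsec}, with the identification of $\cB$ with \eqref{cbl1norm} read off from the proof of Proposition~\ref{H1H2}. Your added observations---that \Hprimelabel restricted to finitely supported sequences yields \eqref{linearlyindependentK} so the norm is well defined, and that boundedness of $K$ gives continuity of point evaluations---are precisely the (lightly stated) justifications the paper relies on.
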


We shall discuss in the rest of this section conditions on translation invariant   $K : \bR^d\times\bR^d \to \bC$ for which Admissibility Assumption \Hprimelabel holds. Specifically, such $K$ are of the form
\begin{equation}\label{translationinvariantK}
K(\vs, \vt)=\int_{\bR^d}e^{-i(\vs-\vt)\cdot \vxi}\varphi(\vxi)d\vxi,\ \ \vs,\vt\in\bR^d,
\end{equation}
where $\vs\cdot \vt$ stands for the standard inner product on $\bR^d$, and $\varphi\in L^1(\bR^d)$, the space of Lebesgue integrable functions on $\bR^d$. One should not confuse $L^1(\bR^d)$ with $\ell^1(\bR^d)$. The latter one is defined with respect to the counting measure on $\bR^d$ while the first one is with respect to the Lebesgue measure. Note that $K$ is bounded and continuous on $\bR^d\times\bR^d$. We give a sufficient condition for so defined a function $K$ to satisfy \Hprimelabel.

\begin{prop}\label{fullsupport}
Let $K$ be given by (\ref{translationinvariantK}). If $\varphi$ is nonzero almost everywhere on $\bR^d$ with respect to the Lebesgue measure then $K$ satisfies \Hprimelabel.
\end{prop}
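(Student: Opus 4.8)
The plan is to verify Admissibility Assumption \Hprimelabel directly: given pairwise distinct points $\{\vx_j : j\in\bN\}\subseteq\bR^d$ and $\vc\in\ell^1(\bN)$ with $\sum_{j=1}^\infty c_j K(\vx_j,\vx)=0$ for all $\vx\in\bR^d$, we must conclude $\vc=0$. First I would substitute the integral representation \eqref{translationinvariantK} into this identity. Since $\vc\in\ell^1(\bN)$ and $\varphi\in L^1(\bR^d)$, the double sum/integral is absolutely convergent, so Fubini's theorem lets us interchange sum and integral:
\begin{equation*}
0=\sum_{j=1}^\infty c_j\int_{\bR^d}e^{-i(\vx_j-\vx)\cdot\vxi}\varphi(\vxi)\,d\vxi=\int_{\bR^d}e^{i\vx\cdot\vxi}\biggl(\sum_{j=1}^\infty c_j e^{-i\vx_j\cdot\vxi}\biggr)\varphi(\vxi)\,d\vxi\mbox{ for all }\vx\in\bR^d.
\end{equation*}
Define $h(\vxi):=\bigl(\sum_{j=1}^\infty c_j e^{-i\vx_j\cdot\vxi}\bigr)\varphi(\vxi)$. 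The factor in parentheses is bounded in modulus by $\|\vc\|_{\ell^1(\bN)}$ and $\varphi\in L^1$, so $h\in L^1(\bR^d)$, and the displayed identity says that the inverse Fourier transform of $h$ vanishes identically.

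The key step is then to invoke the injectivity of the Fourier transform on $L^1(\bR^d)$ (uniqueness theorem): $\widehat h=0$ forces $h=0$ almost everywhere with respect to Lebesgue measure. Since by hypothesis $\varphi(\vxi)\neq0$ for almost every $\vxi$, we may divide by $\varphi$ on a set of full measure and conclude that $\sum_{j=1}^\infty c_j e^{-i\vx_j\cdot\vxi}=0$ for almost every $\vxi\in\bR^d$. The remaining, and in my view the main, obstacle is to pass from this almost-everywhere vanishing of an exponential sum to $\vc=0$. I would handle it by an almost-periodic-functions argument: the function $\psi(\vxi):=\sum_{j=1}^\infty c_j e^{-i\vx_j\cdot\vxi}$ is a uniformly convergent sum of characters (again because $\vc\in\ell^1$), hence continuous and in fact Bohr almost periodic on $\bR^d$; vanishing a.e.\ therefore forces $\psi\equiv0$. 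Then the mean value
\begin{equation*}
\lim_{T\to\infty}\frac{1}{(2T)^d}\int_{[-T,T]^d}\psi(\vxi)e^{i\vx_k\cdot\vxi}\,d\vxi=c_k,
\end{equation*}
valid since the $\vx_j$ are pairwise distinct (each cross term $e^{i(\vx_k-\vx_j)\cdot\vxi}$ has zero mean for $j\neq k$, and $\ell^1$-summability justifies term-by-term integration), yields $c_k=0$ for every $k$, hence $\vc=0$.

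Alternatively, one can avoid the almost-periodicity machinery: once $\psi=0$ a.e., pick any $k$ and note that $e^{i\vx_k\cdot\vxi}\psi(\vxi)=0$ a.e.; integrating against a Gaussian approximate identity $e^{-\epsilon|\vxi|^2}$ and letting $\epsilon\to0^+$ extracts the coefficient $c_k$ directly from the Gaussian integrals $\int_{\bR^d}e^{i(\vx_k-\vx_j)\cdot\vxi}e^{-\epsilon|\vxi|^2}\,d\vxi$, which concentrate at $j=k$. Either route gives $\vc=0$. I expect the Fubini interchange and the Fourier uniqueness step to be entirely routine; the only place demanding a little care is the extraction of the individual coefficients $c_k$ from the exponential sum, which is why I would set it up carefully via a mean value or approximate-identity argument rather than gesturing at ``linear independence of characters.''
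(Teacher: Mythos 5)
Your proposal is correct and follows essentially the same route as the paper: substitute the integral representation, apply Fubini and the $L^1$ Fourier uniqueness theorem to get $\bigl(\sum_j c_j e^{-i\vx_j\cdot\vxi}\bigr)\varphi(\vxi)=0$ a.e., divide by $\varphi$, and use continuity of the exponential sum. The only difference is cosmetic: where the paper concludes by citing uniqueness of the Fourier transform of the discrete measure $\nu=\sum_j c_j\delta_{\vx_j}$, you prove that fact by hand via Bohr mean values (or a Gaussian approximate identity), which is a valid and indeed more self-contained justification of the same final step.
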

\begin{proof}
Suppose that there exists $\vc\in\ell^1(\bN)$ and pairwise distinct points $\vs_j\in\bR^d$, $j\in\bN$ such that
$$
\sum_{j=1}^\infty c_jK(\vs_j, \vt)=0\mbox{ for all }\vt\in\bR^d.
$$
This equation can be reformulated by (\ref{translationinvariantK}) as
$$
\int_{\bR^d}\biggl(\sum_{j=1}^\infty c_j e^{-i \vs_j\cdot \vxi}\biggr)\varphi(\vxi)e^{i \vt\cdot\vxi}d\vxi=0\mbox{ for all } \vt\in\bR^d.
$$
It follows that for almost every $\vxi\in\bR^d$ with respect to the Lebesgue measure
$$
\biggl(\sum_{j=1}^\infty c_j e^{-i \vs_j\cdot \vxi}\biggr)\varphi(\vxi)=0.
$$
By the assumption on $\varphi$,
$$
\sum_{j=1}^\infty c_j e^{-i \vs_j\cdot \vxi}=0\mbox{ for almost every }\vxi\in\bR^d.
$$
Note that the function on the left hand side above is continuous on $\vxi$. We hence obtain that the Fourier transform of the discrete measure
$$
\nu(A):=\sum_{\vs_j\in A}c_j\mbox{ for every Borel subset }A\subseteq\bR^d
$$
is zero. Consequently, $\nu$ is the zero measure, implying $\vc=0$.
\end{proof}

We next present a particular example as a corollary to Proposition \ref{fullsupport}.
\begin{cor}\label{cporiginal}
If $\phi$ is nontrivial continuous function on $\bR^d$ with a compact support then $K(\vs,\vt)=\phi(\vs-\vt)$, $\vs,\vt\in\bR^d$ satisfies \Hprimelabel.
\end{cor}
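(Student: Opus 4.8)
The plan is to deduce Corollary~\ref{cporiginal} from Proposition~\ref{fullsupport}. One cannot apply that proposition to $\phi$ directly, because a continuous compactly supported $\phi$ need not have $\widehat\phi\in L^1(\bR^d)$, so $K(\vs,\vt)=\phi(\vs-\vt)$ need not be representable in the form (\ref{translationinvariantK}). I would circumvent this by passing to the autocorrelation
$\psi:=\phi*\widetilde\phi$, where $\widetilde\phi(\vx):=\overline{\phi(-\vx)}$, i.e.\ $\psi(\vy)=\int_{\bR^d}\phi(\vy+\vu)\,\overline{\phi(\vu)}\,d\vu$.

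First I would record the properties of $\psi$. It is continuous and compactly supported (being a convolution of two continuous compactly supported functions), hence $\psi\in L^1(\bR^d)$, and a direct computation gives $\widehat\psi=\widehat\phi\,\overline{\widehat\phi}=|\widehat\phi|^2$. Since $\phi\in L^2(\bR^d)$, Plancherel's theorem gives $\widehat\phi\in L^2(\bR^d)$, so $\widehat\psi=|\widehat\phi|^2\in L^1(\bR^d)$. Fourier inversion then writes $\psi$ in the form (\ref{translationinvariantK}), namely $\psi(\vy)=\int_{\bR^d}e^{-i\vy\cdot\vxi}\varphi(\vxi)\,d\vxi$ with $\varphi$ a positive constant times $|\widehat\phi(-\vxi)|^2$, so in particular $\varphi\in L^1(\bR^d)$. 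Moreover $\varphi$ is nonzero almost everywhere: because $\phi$ has compact support, $\widehat\phi$ is the restriction to $\bR^d$ of an entire function (Paley--Wiener), hence real-analytic on $\bR^d$, and it does not vanish identically since $\phi$ is nontrivial; therefore its zero set is Lebesgue-null, and so is that of $\varphi$. Proposition~\ref{fullsupport} now applies to $\psi$ and shows that the kernel $K_\psi(\vs,\vt):=\psi(\vs-\vt)$ satisfies \Hprimelabel.

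It then remains to transfer \Hprimelabel from $K_\psi$ back to $K$. Suppose $\vc\in\ell^1(\bN)$ and pairwise distinct $\vs_j\in\bR^d$, $j\in\bN$, satisfy $\sum_{j=1}^\infty c_j\phi(\vs_j-\vt)=0$ for all $\vt\in\bR^d$ (note the inner series converges absolutely for every argument, since $\phi$ is bounded and $\vc\in\ell^1(\bN)$). Using this boundedness and the compact support of $\phi$ to justify interchanging the sum and the integral defining $\psi$, I obtain, for every $\vt\in\bR^d$,
\[
\sum_{j=1}^\infty c_j\,\psi(\vs_j-\vt)=\int_{\bR^d}\Bigl(\sum_{j=1}^\infty c_j\,\phi\bigl(\vs_j-(\vt-\vu)\bigr)\Bigr)\overline{\phi(\vu)}\,d\vu=0,
\]
because the inner sum vanishes for every value of its argument. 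Since $K_\psi$ satisfies \Hprimelabel, this forces $\vc=0$; hence $K$ satisfies \Hprimelabel, which is the assertion of Corollary~\ref{cporiginal}.

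The step I expect to be the crux is the first one: recognizing that Proposition~\ref{fullsupport} cannot be invoked for $\phi$ itself (as $\widehat\phi$ may fail to be integrable) and that replacing $\phi$ by its autocorrelation $\psi$ is precisely the right move, since $|\widehat\phi|^2$ is automatically $L^1$ by Plancherel while retaining the almost-everywhere positivity that comes from the Paley--Wiener analyticity of $\widehat\phi$. The remaining manipulations --- the elementary properties of $\psi$ and the Fubini interchange in the transfer step --- are routine.
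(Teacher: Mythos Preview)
Your proof is correct. The paper's own argument, however, is more direct: rather than applying Proposition~\ref{fullsupport} as a black box, it observes that the \emph{proof} of that proposition adapts without requiring $\widehat\phi\in L^1$. Concretely, since $\phi\in L^1(\bR^d)$ and $\vc\in\ell^1(\bN)$, the function $f(\vt):=\sum_j c_j\phi(\vs_j-\vt)$ lies in $L^1(\bR^d)$; the hypothesis $f\equiv 0$ gives $\widehat f\equiv 0$, and one computes $\widehat f(\vxi)=\widehat\phi(-\vxi)\sum_j c_j e^{-i\vs_j\cdot\vxi}$. The Paley--Wiener analyticity of $\widehat\phi$ (the one step the paper spells out) yields $\widehat\phi\ne 0$ a.e., and from there the measure argument of Proposition~\ref{fullsupport} finishes verbatim. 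Your detour through the autocorrelation $\psi=\phi*\widetilde\phi$ is a legitimate and clever way to make the appeal to Proposition~\ref{fullsupport} literal rather than ``by similar arguments,'' at the price of the extra transfer step; both routes ultimately rest on the same analytic fact, so the paper's is simply shorter.
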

\begin{proof}
We regard $\phi$ as a tempered distribution and note by the Paley-Wiener theorem that the Fourier transform of $\phi$ is real-analytic on $\bR^d$. Therefore, the Fourier transform of $\phi$ is nonzero everywhere on $\bR^d$ except at a subset of zero Lebesgue measure. The arguments similar to those in the proof of the last proposition hence apply.
\end{proof}

We next present by Proposition \ref{fullsupport} and Corollary \ref{cporiginal} several examples of $K$ that satisfy \Hprimelabel and hence can be used to construct RKBS with the $\ell^1$ norm. Such functions include:
\begin{itemize}
\item[--] the exponential kernel
$$
K(\vs,\vt)=\exp(-\|\vs-\vt\|_{\ell^1(\bN_d)})=\frac1{\pi^d}\int_{\bR^d}e^{-i (\vs-\vt)\cdot\vxi}\prod_{j=1}^d\frac1{1+\xi_j^2}d\vxi,\ \ \vs,\vt\in\bR^d,
$$
where for $\vs\in\bR^d$, $\|\vs\|_2$ is its standard Euclidean norm on $\bR^d$.

\item[--] the Gaussian kernel
\begin{equation}\label{gaussian}
K(\vs,\vt)=\exp\biggl(-\frac{\|\vs-\vt\|_2^2}{\sigma}\biggr)=\biggl(\frac{\sqrt{\sigma}}{2\sqrt{\pi}}\biggr)^d\int_{\bR^d}e^{-i(\vs-\vt)\cdot\vxi}\exp(-\frac{\sigma}4\|\vxi\|_2^2)d\vxi,\ \ \vs,\vt\in\bR^d.
\end{equation}

\item[--] inverse multiquadrics
\begin{equation}\label{multiquadric}
K(\vs,\vt)=\biggl(\frac1{1+\|\vs-\vt\|_2^2}\biggr)^\beta,\ \ \vs,\vt\in\bR^d,\ \ \beta>0,
\end{equation}
whose Fourier transform is given by the modified Bessel function and is positive almost everywhere on $\bR^d$ (see \cite{Wendland}, pages 52, 76 and 95).

\item[--] B-spline kernels
$$
K(\vs,\vt)=\prod_{j=1}^d B_p(s_j-t_j),\ \ \vs,\vt\in\bR^d,
$$
where $s_j$ is the $j$-th component of $\vs$ and $B_p$ denotes the $p$-th order B-spline, $p\ge 2$. B-spline kernels satisfies \Hprimelabel as they are given by bounded continuous functions of compact support.

\item[--] radial basis functions of compact support, including Wu's functions \cite{Wu1995} and Wendland's functions \cite{Wendland}. Such functions are of the form $K(\vs,\vt)=\phi(\|\vs-\vt\|_2)$, $\vs,\vt\in\bR^d$, where $\phi$ is a compactly supported univariate function dependent on the dimension $d$. We give two examples for $d=3$:
    $$
    \phi(r):=(1-r)^2_+\mbox{ and }\phi(r):=(1-r)^4_+(1+4r),\ \ r\ge 0
    $$
    where $t_+:=\max\{0,t\}$ for $t\in\bR$. These functions satisfy \Hprimelabel by Corollary \ref{cporiginal}.
\end{itemize}

On the other hand, a translation invariant $K$ does not satisfy \Hprimelabel if its Fourier transform is compactly supported, as indicated in the next result.
\begin{prop}\label{compactsupport}
If $\varphi\in L^1(\bR^d)$ is compactly supported on $\bR^d$ then $K$ given by (\ref{translationinvariantK}) does not satisfy \Hprimelabel.
\end{prop}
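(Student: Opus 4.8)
The plan is to construct directly a counterexample to \Hprimelabel: a nonzero sequence $\vc\in\ell^1(\bN)$ and pairwise distinct points $\vs_j\in\bR^d$, $j\in\bN$, with $\sum_{j=1}^\infty c_jK(\vs_j,\vt)=0$ for every $\vt\in\bR^d$. Using the representation (\ref{translationinvariantK}) and the fact that the series below converges absolutely and uniformly (since $\|\varphi\|_{L^1(\bR^d)}\sum_j|c_j|<\infty$), Fubini's theorem gives
$$
\sum_{j=1}^\infty c_jK(\vs_j,\vt)=\int_{\bR^d}\Bigl(\sum_{j=1}^\infty c_je^{-i\vs_j\cdot\vxi}\Bigr)\varphi(\vxi)\,e^{i\vt\cdot\vxi}\,d\vxi ,\qquad \vt\in\bR^d.
$$
Hence it suffices to choose the data so that the continuous function $G(\vxi):=\sum_{j=1}^\infty c_je^{-i\vs_j\cdot\vxi}$ vanishes on $\supp\varphi$; then $G\varphi\equiv0$ and the integral is identically zero. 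Since $\varphi$ has compact support, fix $R>0$ with $\supp\varphi\subseteq[-R,R]^d$, so it is enough to make $G$ vanish on the cube $[-R,R]^d$.

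The heart of the matter is a one--dimensional periodic construction. Pick $t\in(0,\pi/R)$ and set $P:=2\pi/t>2R$. Viewing $\bR/P\bZ$ through the fundamental domain $[-P/2,P/2)$, the arc complementary to $[-R,R]$ has positive length, so there is a nonzero function $\eta\in C^\infty(\bR/P\bZ)$ vanishing identically on $[-R,R]$. Its Fourier coefficients $(a_n)_{n\in\bZ}$, defined through $\eta(\xi)=\sum_{n\in\bZ}a_ne^{-itn\xi}$, decay faster than any power of $|n|$, so $(a_n)\in\ell^1(\bZ)$, and $(a_n)\neq0$ because $\eta\not\equiv0$. Now take as sampling points $\vs_n:=(tn,0,\ldots,0)\in\bR^d$ for $n\in\bZ$ --- a countable family of pairwise distinct points --- and coefficients $c_n:=a_n$. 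Then $G(\vxi)=\sum_{n\in\bZ}a_ne^{-itn\xi_1}=\eta(\xi_1)$ depends on $\xi_1$ alone and vanishes whenever $\xi_1\in[-R,R]$, in particular on all of $[-R,R]^d\supseteq\supp\varphi$. Consequently $\sum_{n\in\bZ}c_nK(\vs_n,\vt)=0$ for every $\vt\in\bR^d$ while $\vc=(a_n)_{n\in\bZ}$ is a nonzero element of $\ell^1$, contradicting \Hprimelabel. (After relabelling $\bZ$ as $\bN$ this is exactly the negation of \Hprimelabel.)

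The routine points to verify are: the interchange of $\sum_n$ and $\int_{\bR^d}$ above, immediate from absolute convergence; the existence of a nonzero smooth $P$-periodic function supported off $[-R,R]$, which needs only $t<\pi/R$ so that $[-R,R]$ sits strictly inside one period; and the $\ell^1$ summability of the Fourier coefficients of such a function, which follows from repeated integration by parts. If one wishes the sampling points to be a full lattice in $\bR^d$, one may instead take $G(\vxi)=\prod_{k=1}^d\eta(\xi_k)$ and index by $\bZ^d$; nothing changes. I do not anticipate a real obstacle here: the statement is a contrast to Proposition \ref{fullsupport}, and the underlying reason it holds is simply that an $\ell^1$ combination of exponentials with \emph{unbounded} frequencies need only be continuous, not real--analytic, so it can vanish on a set of positive Lebesgue measure without vanishing everywhere --- precisely the step that is unavailable when $\varphi\neq0$ almost everywhere.
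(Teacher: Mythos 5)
Your proof is correct and follows essentially the same route as the paper's: both construct a nonzero smooth periodic function vanishing on a cube containing $\supp\varphi$, use its rapidly decaying (hence $\ell^1$) Fourier coefficients as the sequence $\vc$, and take lattice points as the sampling sites so that the resulting trigonometric series annihilates $\varphi$. The only cosmetic difference is that the paper works with the full integer lattice $\bZ^d$ and normalizes $\supp\varphi\subseteq[-1,1]^d$, while you use a one-dimensional sublattice $(tn,0,\ldots,0)$ with an adapted period; both are valid.
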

\begin{proof}
Without lost of generality, we may assume that $\supp \varphi\subseteq [-1,1]^d$. Choose a nontrivial infinitely continuously differentiable function $\phi$ that is supported on $[-\pi,\pi]^d$ and vanishes on $[-1,1]^d$. We expand $\phi$ to a Fourier series
$$
\phi(\vxi)=\sum_{\vj\in\bZ^d} c_{\vj} \me^{-i \vj\cdot \vxi},\ \ \vxi\in[-\pi,\pi]^d,
$$
where $c_{\vj}$ is the Fourier coefficient of $\phi$. Note that $\{c_{\vj}: \vj\in\bZ^d\}\in\ell^1(\bZ^d)$ as $\phi$ is infinitely continuously differentiable on $[-\pi,\pi]^d$. By arguments in the proof of Proposition \ref{fullsupport},
$$
\sum_{\vj\in\bZ^d}c_{\vj} K(\vj,\vt)=\int_{\bR^d}\biggl(\sum_{\vj\in\bZ^d} c_{\vj} \me^{-i \vj\cdot \vxi}\biggr)\varphi(\vxi) \me^{i \vt\cdot \vxi}d\vxi,\ \ \vt\in\bR^d.
$$
By our construction,
$$
\biggl(\sum_{\vj\in\bZ^d} c_{\vj} \me^{-i \vj\cdot \vxi}\biggr)\varphi(\vxi)=0\mbox{ for all }\vxi\in\bR^d,
$$
which implies $\sum_{\vj\in\bZ^d}c_{\vj} K(\vj,\cdot)=0$. Moreover, $c_{\vj}\ne0$ for at least one $\vj\in\bZ^d$ because $\phi$ is nontrivial. We obtain that $K$ does not satisfy \Hprimelabel.
\end{proof}

By Proposition \ref{compactsupport}, the sinc kernel
$$
K(\vs,\vt):=\sinc(\vs-\vt):=\prod_{j=1}^d\frac{\sin(\pi(s_j-t_j))}{\pi(s_j-t_j)},\ \ \vs,\vt\in\bR^d
$$
does not satisfy \Hprimelabel. As a consequence, it can not yield an RKBS with the $\ell^1$ norm by the procedures introduced in this section. Similar arguments as those in the proof of Proposition \ref{compactsupport} are able to show that if $\nu$ is a compactly supported Borel measure on $\bR^d$ of finite total variation then the following function
$$
K(\vs,\vt):=\int_{\bR^d}e^{-i(\vs-\vt)\cdot \vxi} d\nu(\vxi),\ \ \vs,\vt\in\bR^d
$$
does not satisfy \Hprimelabel. Instances include the class of Bessel-based radial functions \cite{Fornberg2006} where the Borel measure is the dirac delta measure on the unit sphere of the Euclidean space.

\section{Representer Theorems in RKBS with the $\ell^1$ Norm} \label{repthemsec}
Up to now our arguments have relied on Admissibility Assumptions \nonsingassum --\Hprimelabel.   In this section the final assumption, \Addptsmalllabel, is invoked to guarantee that the representer theorem should hold for the constructed RKBS. A regularized learning scheme in the RKBS $\cB$ constructed by (\ref{cbl1norm}) can be generally expressed as finding $f_0$ such that
\begin{equation}\label{regularization1}
f_0 = \argmin_{f\in\cB}[V(f(\bx))+\mu \phi(\|f\|_\cB)],
\end{equation}
where $\bx:=\{{x}_j\in X:j\in\bN_n\}$, $n\in\bN$, is the sequence of given pairwise distinct sampling points, $f(\bx):=(f({x}_j):j\in\bN_n)\in\bC^n$, $V:\bC^n\to\bR_+$ is a loss function, $\mu$ is a positive regularization parameter, and $\phi:\bR_+\to\bR_+$ is a nondecreasing regularization function. Here, $\bR_+:=[0,+\infty)$. The loss function and regularization function should satisfy some minimal requirements for the learning scheme (\ref{regularization1}) to be useful. This consideration gives rise to the following definition.

\begin{defn} A regularized learning scheme (\ref{regularization1}) is said to be \emph{acceptable} if $V$ and $\phi$ are continuous and
\begin{equation}\label{tendtoinfty}
\lim_{t\to\infty}\phi(t)=+\infty.
\end{equation}
\end{defn}

It is possible that the solution to (\ref{regularization1}) is non-unique, and in that case we are only interested in finding one possible solution.

We now introduce the main concept of this section.

\begin{defn}
 The space $\cB$ is said to \emph{satisfy the linear representer theorem for regularized learning} if every acceptable regularized learning scheme (\ref{regularization1}) has a minimizer of the form
\begin{equation}\label{representer1}
f_0=\sum_{j=1}^n c_j K({x}_j,\cdot),
\end{equation}
where $c_j$'s are constants. In other words, there exists a solution $f_0$ lying in the finite dimensional subspace $\cS^{\bx}:=\linearspan\{K({x}_j,\cdot):j\in\bN_n\}$.
\end{defn}

An RKHS with $K$ being its reproducing kernel in the usual sense always satisfies the linear representer theorem \cite{Kimeldorf1971}. The result for uniformly convex and uniformly Fr\'{e}chet differentiable pre-RKBS with a reproducing kernel given by the semi-inner product was established in \cite{Zhang2009,Zhangjogo}. For more information on this important property for RKHS and vector-valued RKHS, see, for example, \cite{Argyriou2009,Micchelli2005a,Scholkopf2001} and the references cited therein.

Our purpose is to discuss the conditions on $K$ such that $\cB$ satisfies the linear representer theorem. The representer theorem for (\ref{regularization1}) is closely related to the representer theorem for the minimal norm interpolation problem. In the RKHS case, an equivalence was proved in \cite{Micchelli1994}. We shall follow the approach to consider the minimal norm interpolation in $\cB$ first. For any $\vy\in \bC^n$, set $\cI_{\bx}(\vy)$ to be the subset of functions in $\cB$ that interpolate the specified data, namely,
$ \cI_{\bx}(\vy):=\{f\in\cB :f(\bx)=\vy\}$.
A minimal norm interpolant in $\cB$ is a function $f_{\min}$ satisfying
\begin{equation}\label{mni}
f_{\min}= \argmin \{\|f\|_\cB : f\in\cI_{\bx}(\vy)\}.
\end{equation}
Again, in the case of a non-unique solution, we are only interested in obtaining one solution.  Since $K[\bx]$ is nonsingular, one sees that the typically infinite dimensional  $\cI_{\bx}(\vy)$ always has a non-empty intersection with $\cS^{\bx}$,
for all $\vy\in \bC^n$ and pairwise distinct $\bx\subseteq X$.

\begin{defn}An RKBS $\cB$ is said to \emph{satisfy the linear representer theorem for minimal norm interpolation} if for any choice of data, $\bx$ and $\vy$, there is a minimal norm interpolant, (\ref{mni}), lying in $\cS^{\bx}$.
\end{defn}

We shall show that $\cB$ satisfies the linear representer theorem for regularized learning if and only if it does so for minimal norm interpolation. We first prove one direction of the equivalence.

\begin{lem}\label{mnileadtoreg}
If $\cB$ satisfies the linear representer theorem for the minimal norm interpolation, then it also does so for regularized learning.
\end{lem}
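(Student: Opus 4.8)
The plan is to reduce a solution of the regularized learning scheme \eqref{regularization1} to a minimal norm interpolation problem on the data already produced, then invoke the hypothesis. First I would establish that \eqref{regularization1} actually admits a minimizer at all: this is where the acceptability of the scheme enters. Take a minimizing sequence $\{f_k\}\subseteq\cB$. Since $\phi$ is continuous and $\lim_{t\to\infty}\phi(t)=+\infty$, the values $\|f_k\|_\cB$ must stay bounded — otherwise $\mu\phi(\|f_k\|_\cB)$ would blow up while the infimum, being at most $V(0,\dots,0)+\mu\phi(0)<\infty$, is finite. So $\{f_k\}$ lies in a ball of $\cB$. Because $\cB$ is isometrically isomorphic to $\ell^1(X)$, which is the dual of $c_0(X)$, bounded sets are weak-$*$ sequentially compact (after passing to the countable support generated by the sequence, this is the standard sequential Banach–Alaoglu argument in a separable predual), so a subsequence converges weak-$*$ to some $f_0\in\cB$ with $\|f_0\|_\cB\le\liminf\|f_k\|_\cB$. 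Since point evaluations $\delta_{x_j}=\langle\,\cdot\,,K(\cdot,x_j)\rangle_K$ are continuous linear functionals on $\cB$, and $K(\cdot,x_j)\in\cB^\sharp\hookrightarrow\cB^*$, they are weak-$*$ continuous; hence $f_k(x_j)\to f_0(x_j)$ for each $j\in\bN_n$. Continuity of $V$ and lower semicontinuity of $\phi\circ\|\cdot\|_\cB$ then give that $f_0$ is a minimizer.

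Next, set $\vy:=f_0(\bx)\in\bC^n$, so $f_0\in\cI_{\bx}(\vy)$. By hypothesis there is a minimal norm interpolant $f_{\min}\in\cI_{\bx}(\vy)\cap\cS^{\bx}$, i.e.\ $f_{\min}(\bx)=\vy=f_0(\bx)$ and $\|f_{\min}\|_\cB\le\|f_0\|_\cB$. Then $V(f_{\min}(\bx))=V(f_0(\bx))$ since they agree on the sampling points, and $\phi(\|f_{\min}\|_\cB)\le\phi(\|f_0\|_\cB)$ because $\phi$ is nondecreasing. Adding $\mu$ times the second inequality to the first equality shows that $f_{\min}$ achieves an objective value no larger than that of $f_0$, hence $f_{\min}$ is itself a minimizer of \eqref{regularization1}; and $f_{\min}=\sum_{j=1}^n c_j K(x_j,\cdot)$ has exactly the form \eqref{representer1}. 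This proves that $\cB$ satisfies the linear representer theorem for regularized learning.

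I expect the existence-of-a-minimizer step to be the main obstacle, since the rest is essentially bookkeeping. The delicate point is that $\cB$ with the $\ell^1$ norm is non-reflexive, so one cannot use the usual weakly-compact-ball argument; one must instead use the weak-$*$ topology via the predual $c_0(X)$ (or $c_0$ of the relevant countable index set) and check that the two relevant operations — the finitely many point evaluations and the norm — behave correctly under weak-$*$ convergence (continuity of the former because the representing elements $K(\cdot,x_j)$ lie in the predual's dual in a compatible way, weak-$*$ lower semicontinuity of the latter). An alternative that avoids topology on $\cB$ altogether: since any competitor can be replaced, without increasing the objective, by its restriction to $\cS^{\bx}$ followed by the minimal-norm interpolant of its own data, it suffices to minimize the continuous function $\vc\mapsto V(K[\bx]^{\mathsf T}\vc)+\mu\phi(\|K[\bx]^{\mathsf T}\vc\|$-norm$)$ over $\bC^n$ restricted to the closed bounded set where $\phi$ is controlled — a finite-dimensional coercive minimization. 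Either route works; I would present whichever is cleaner given the lemmas the authors have already set up, and I suspect they take the finite-dimensional route since it sidesteps predual subtleties entirely.
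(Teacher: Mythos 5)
Your second paragraph (the replacement step) together with the ``alternative route'' you sketch at the end is exactly the paper's proof: for an arbitrary $f\in\cB$, the minimal norm interpolant of $f$'s own data lies in $\cS^{\bx}$, agrees with $f$ on $\bx$, and has no larger norm, so the infimum over $\cB$ equals the infimum over $\cS^{\bx}$; then $\lim_{t\to\infty}\phi(t)=+\infty$ confines the search to a closed bounded, hence compact, subset of the finite-dimensional space $\cS^{\bx}$, on which the continuous objective attains its minimum. Your suspicion about which route the authors take is correct.

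However, the route you actually present in detail --- establishing a minimizer over all of $\cB$ first via weak-$*$ compactness --- has a genuine gap. The point evaluations $\delta_{x_j}$ are \emph{not} weak-$*$ continuous on $\cB\cong\ell^1(X)=(c_0(X))^*$. A linear functional on a dual space is weak-$*$ continuous if and only if it is represented by an element of the predual; under the isomorphism $\Phi$, the functional $\delta_{x_j}$ corresponds to the bounded family $(K(t,x_j))_{t\in X}$, which lies in $\ell^\infty(X)$ but in general not in $c_0(X)$. The fact that $K(\cdot,x_j)\in\cB^\sharp\hookrightarrow\cB^*$ gives weak continuity, not weak-$*$ continuity. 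Concretely, take $f_k:=K(t_k,\cdot)$ for pairwise distinct $t_k\to x_j$ with $t_k\ne x_j$ (say for the exponential kernel): then $\|f_k\|_\cB=1$ and $f_k\to 0$ weak-$*$, yet $f_k(x_j)=K(t_k,x_j)\to 1\ne 0$. Hence the weak-$*$ limit of your minimizing sequence need not preserve the data values $f_k(\bx)$, and the conclusion that $f_0$ minimizes \eqref{regularization1} does not follow. Since your replacement step presupposes that such a minimizer exists, the primary argument is incomplete as written; the finite-dimensional reduction is what closes it, and it does so without any topology on $\cB$.
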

\begin{proof} Let $V$, $\phi$, and $\mu$ be arbitrary, but fixed according to the conditions that \eqref{regularization1} be an acceptable regularization scheme. For an arbitrary function $f$ in $\cB$. We let $f_0$ be the minimizer of $\inf_{g\in\cI_{\bx}(f(\bx))}\|g\|_\cB$ that has the form (\ref{representer1}). Then $f_0(\bx)=f(\bx)$ and $\|f_0\|_\cB\le \|f\|_\cB$. As a consequence, $V(f_0(\bx))=V(f(\bx))$ but $\phi(\|f_0\|_\cB)\le\phi(\|f\|_\cB)$ as $\phi$ is nondecreasing. It follows that
$$
\inf_{f\in\cB}V(f(\bx))+\mu\phi(\|f\|_\cB)=\inf_{f\in\cS^{\bx}}V(f(\bx))+\mu\phi(\|f\|_\cB).
$$
By (\ref{tendtoinfty}), there exists a positive constant $\alpha$ such that
$$
\inf_{f\in\cS_{\bx}}V(f(\bx))+\mu\phi(\|f\|_\cB)=\inf_{f\in\cS^{\bx},\|f\|_\cB\le\alpha}V(f(\bx))+\mu\phi(\|f\|_\cB).
$$
Note that the functional we are minimizing is continuous on $\cB$ by the assumption on $V$, $\phi$ and by the continuity of point evaluation functionals on $\cB$. By the elementary fact that a continuous function on a compact metric space attains its minimum in the space, (\ref{regularization1}) has a minimizer that belongs to $\{f\in\cS^{\bx}:\|f\|_\cB\le\alpha\}$. Therefore, $\cB$ satisfies the linear representer theorem.
\end{proof}

For the other direction, it suffices to consider a class of regularization functionals with a particular choice of $V$ and $\phi$.  In the limit of vanishing $\mu$ we recover the minimal norm interpolant.

\begin{lem}\label{reginterp}
If $\cB$ satisfies the linear representer theorem for regularized learning, then it also satisfies the linear representer theorem for minimal norm interpolation.
\end{lem}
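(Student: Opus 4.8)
The plan is to recover a minimal norm interpolant inside $\cS^{\bx}$ as a limit, as the regularization parameter $\mu$ vanishes, of solutions of a family of regularized learning schemes whose loss function is minimized precisely on the interpolation set $\cI_{\bx}(\vy)$.

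Concretely, I would fix pairwise distinct sampling points $\bx=\{x_j:j\in\bN_n\}$ and data $\vy\in\bC^n$, and take the loss $V(\boldsymbol{w}):=\sum_{j=1}^{n}|w_j-y_j|^2$ together with $\phi(t):=t$. These meet every requirement in the definition of an acceptable scheme ($V,\phi$ continuous and nonnegative, $\phi$ nondecreasing, $\lim_{t\to\infty}\phi(t)=+\infty$), so for each $\mu>0$ the hypothesis supplies a minimizer $f_\mu$ of $V(f(\bx))+\mu\|f\|_\cB$ over $f\in\cB$ that lies in $\cS^{\bx}$. Since $K[\bx]$ is nonsingular, $\cI_{\bx}(\vy)\cap\cS^{\bx}$ is nonempty, so the minimal interpolation value $d:=\inf\{\|h\|_\cB:h\in\cI_{\bx}(\vy)\}$ is finite.

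The next step is the comparison estimate. For any $h\in\cI_{\bx}(\vy)$ one has $V(h(\bx))=0$, hence by minimality $V(f_\mu(\bx))+\mu\|f_\mu\|_\cB\le\mu\|h\|_\cB$; discarding the nonnegative term $V(f_\mu(\bx))$ gives $\|f_\mu\|_\cB\le\|h\|_\cB$ for \emph{every} $h\in\cI_{\bx}(\vy)$, whence $\|f_\mu\|_\cB\le d$, while discarding $\mu\|f_\mu\|_\cB$ and taking the infimum over $h$ gives $V(f_\mu(\bx))\le\mu d$, i.e. $f_\mu(\bx)\to\vy$ as $\mu\to0^+$. Then I would pick $\mu_k\to0^+$: the functions $f_{\mu_k}$ all sit in the closed ball of radius $d$ of $\cS^{\bx}$, a compact set because $\cS^{\bx}$ is finite-dimensional, so a subsequence converges in $\cB$ to some $f^*\in\cS^{\bx}$ with $\|f^*\|_\cB\le d$. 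Continuity of the point evaluation functionals on $\cB$ gives $f^*(x_j)=\limd_{l\to\infty}f_{\mu_{k_l}}(x_j)=y_j$ for each $j\in\bN_n$, so $f^*\in\cI_{\bx}(\vy)$ and therefore $\|f^*\|_\cB\ge d$; hence $\|f^*\|_\cB=d$, i.e. $f^*$ is a minimal norm interpolant lying in $\cS^{\bx}$, which is the assertion.

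I do not expect a serious obstacle. The one point needing care is that the comparison inequality must be run against an arbitrary interpolant $h$ in all of $\cB$, not merely one in $\cS^{\bx}$, so that the uniform bound is by the genuine minimal norm $d$ over $\cB$; this is exactly what forces $\|f^*\|_\cB=d$ once $f^*$ is seen to interpolate. The only structural inputs used beyond the hypothesis are that $\cS^{\bx}$ is finite-dimensional (so bounded sequences there have convergent subsequences, and $\cS^{\bx}$ is closed in $\cB$) and that point evaluations are continuous on $\cB$ — both already established.
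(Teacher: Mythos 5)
Your proof is correct and follows essentially the same route as the paper: fix $V(f(\bx))=\|f(\bx)-\vy\|_2^2$ and $\phi(t)=t$, let $\mu\to0^+$, extract a convergent subsequence of the regularized minimizers in the finite-dimensional subspace $\cS^{\bx}$, and use continuity of point evaluations plus the comparison against arbitrary interpolants to show the limit is a minimal norm interpolant. The only cosmetic difference is that you bound $\|f_\mu\|_\cB$ directly by the minimal interpolation value, whereas the paper first bounds the coefficient vectors $\vc_\mu$ via the nonsingularity of $K[\bx]$ after comparing with the zero function; both yield the same compactness argument.
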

\begin{proof}
We shall follow the idea in \cite{Micchelli1994}. Choose any $n \in \bN_n$, any  $\bx =\{{x}_j\in X:j\in\bN_n\}$ with pairwise distinct elements, and any $\vy\in\bC^n$. For every $\mu>0$, let $f_{0,\mu} \in \cS^{\bx}$ be a minimizer of \eqref{regularization1} with the choice of
\begin{equation}\label{costBanach}
V(f(\bx))= \|f(\bx)-\vy\|^2_2, \qquad \phi(t) = t.
\end{equation}
Here, $\|\cdot\|_2$ is the standard Euclidean norm on $\bC^n$.
Defining the $1\times n$ row vector function by
$$
K^{\bx}({x}):=(K({x}_j, {x}):j\in\bN_n)\mbox{ for all }{x}\in X.
$$
It follows that $f_{0,\mu} = K^{\bx}(\cdot) \vc_\mu$ for some $\vc_\mu \in\bC^n$.  Then we have
$$
\|K[\bx] \vc_\mu-\vy\|^2_2=\|f_{0,\mu}(\bx)-\vy\|^2_2 \le V(f_{0,\mu}) + \mu \phi(\|f_{0,\mu}\|_{\cB}) \le V(0) + \mu \phi(\|0\|_{\cB})=\|\vy\|_2^2.
$$
As $K[\bx]$ is nonsingular, the above inequality implies that $\{\vc_\mu : \mu>0\}$ forms a bounded set in $\bC^n$. By restricting to a subsequence if necessary, we may hence assume that $\vc_\mu$ converges to some $\vc_0\in \bC^n$ as $\mu$ goes to zero. We shall show that $f_{0,0}:=K^{\bx}(\cdot) \vc_0 \in \cS^{\bx}$ is a minimal norm interpolant.

Since $\vc_\mu$ converges to $\vc_0$ as $\mu$ tends to zero, we first get
\begin{equation}\label{regint1}
\lim_{\mu\to 0}\|f_{0,\mu}-f_{0,0}\|_\cB=\lim_{\mu\to 0}\| \vc_{\mu} - \vc_0\|_{\ell^1(\bN_n)}=0.
\end{equation}
Since point evaluation functionals are continuous on $\cB$, we obtain by (\ref{regint1})
\begin{equation}\label{reginterpeq2}
f_{0,0}({x}_j)= \lim_{\mu\to 0} f_{0,\mu}({x}_j) \mbox{ for all }j\in\bN_n.
\end{equation}
Now let $g$ be an arbitrary interpolant, i.e., an arbitrary element of $\cI_{\bx}(\vy)$. As $f_{0,\mu}$ is a minimizer of \eqref{regularization1} with the choice (\ref{costBanach}), it follows that
\begin{equation}\label{reginterpeq3}
\|f_{0,\mu}(\bx) - \vy\|^2_2 + \mu\|f_{0,\mu} \|_{\spB} \leq \|g(\bx) - \vy\|^2_2 + \mu\|g \|_{\spB} = \mu \|g \|_{\spB}.
\end{equation}
Letting $\mu\to0$ on both sides of the above inequality, we obtain by (\ref{reginterpeq2}) $\|f_{0,0}(\bx)-\vy\|^2_2=0$, which implies that $f_{0,0}$ is also an interpolant, i.e,. $f_{0,0}\in \spI_{\bx}(\vy)$. It also follows from (\ref{reginterpeq3}) that $\|f_{0,\mu} \|_{\spB} \leq  \|g \|_{\spB}$ for all $\mu>0$, which together with (\ref{regint1}) implies $\|f_{0,0} \|_{\spB} \leq  \|g \|_{\spB}$. Since $g$ is an arbitrary function in $\cI_{\bx}(\vy)$ and $f_{0,0}\in\cI_{\bx}(\vy)$, we see that $f_{0,0}$ is a minimal norm interpolant, i.e., a solution of (\ref{mni}). The proof is complete.
\end{proof}

Combining Lemmas \ref{mnileadtoreg} and \ref{reginterp}, we reach the characterization for $\cB$ to satisfy the linear representer theorem.
\begin{prop}\label{equivalenttomni}
The space $\cB$ satisfies the linear representer theorem for regularized learning if and only if $\cB$ satisfies the linear representer theorem for minimal norm interpolation.
\end{prop}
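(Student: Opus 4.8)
The plan is simply to assemble the claimed equivalence from the two implications already established, namely Lemma~\ref{mnileadtoreg} and Lemma~\ref{reginterp}; no new argument is needed, so the proof of the proposition itself should be a one-line citation of those two lemmas. Below I indicate how each direction is obtained and where the substantive work actually lies.

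For the ``if'' direction I would invoke Lemma~\ref{mnileadtoreg}: assuming $\cB$ satisfies the linear representer theorem for minimal norm interpolation, any candidate $f\in\cB$ in an acceptable scheme \eqref{regularization1} can be replaced by a minimal norm interpolant of its own data $f(\bx)$, which lies in $\cS^{\bx}$, without increasing the objective, since $V$ depends only on $f(\bx)$ and $\phi$ is nondecreasing; then \eqref{tendtoinfty} confines the minimization to a bounded, hence compact, subset of the finite dimensional space $\cS^{\bx}$, where the continuous functional attains its minimum. This is the content of that lemma, already proved.

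For the ``only if'' direction I would invoke Lemma~\ref{reginterp}: assuming $\cB$ satisfies the linear representer theorem for regularized learning, apply it to the particular acceptable family with $V(f(\bx))=\|f(\bx)-\vy\|_2^2$ and $\phi(t)=t$ for each $\mu>0$, obtaining minimizers $f_{0,\mu}\in\cS^{\bx}$; nonsingularity of $K[\bx]$ forces $\{\vc_\mu\}$ to be bounded, a subsequence converges to some $\vc_0$, and the limit $f_{0,0}=K^{\bx}(\cdot)\vc_0$ is shown to interpolate $\vy$ with minimal $\cB$-norm among all interpolants — exactly the linear representer theorem for minimal norm interpolation. Again this is the content of that lemma, already complete.

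Since both implications are in hand, the equivalence follows immediately. There is no residual obstacle at the level of the proposition; the genuine difficulties were the compactness argument in Lemma~\ref{mnileadtoreg} and, more delicately, the vanishing-regularization limit $\mu\to0$ in Lemma~\ref{reginterp} (ensuring that the limit of the finite dimensional minimizers both interpolates and stays norm-minimal), and these have already been dealt with. Accordingly the write-up of Proposition~\ref{equivalenttomni} need only read: ``This follows at once by combining Lemmas~\ref{mnileadtoreg} and~\ref{reginterp}.''
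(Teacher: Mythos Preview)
Your proposal is correct and matches the paper's approach exactly: the paper states this proposition without a separate proof, simply noting that it follows by combining Lemmas~\ref{mnileadtoreg} and~\ref{reginterp}. Your summary of how each lemma supplies one implication is accurate and nothing further is required.
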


In view of the above result, we shall focus on necessary and sufficient conditions for the minimal norm interpolation in $\cB$ to satisfy the linear representer theorem. To this end, we begin with the simplest case when only one more sampling point is added to $\bx$. Recall the definition of $K_\bx(x)$ from the introduction. It is worthwhile to point out that $K_\bx(x)$ is in general not the transpose of $K^\bx(x)$ as $K$ is not required to be symmetric.

\begin{lem}\label{onemorepoint}
Let $\bx =\{{x}_j\in X:j\in\bN_n\}$ have pairwise distinct elements, let ${x}_{n+1}$ be an arbitrary point in $X \backslash \bx$, and set $\overline{\bx}:=\{{x}_j:j\in\bN_{n+1}\}$. It follows that the minimum norm interpolant in $\spS^{\overline{\bx}}$ is the same as the minimum norm interpolant in $\spS^{\bx}$, i.e.,
\begin{equation}\label{onemorepointeq}
\min_{f\in \spI_{\bx}(\vy) \cap \spS^{\overline{\bx}}} \| f\|_{\spB} =\min_{f\in \spI_{\bx}(\vy) \cap \spS^{\bx}} \| f\|_{\spB}\mbox{ for all }\vy\in \bC^n,
\end{equation}
if and only if \eqref{H2condition} holds true.
\end{lem}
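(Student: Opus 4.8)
\emph{The plan.} I would reduce the claimed identity \eqref{onemorepointeq} to an elementary $\ell^1$ minimization problem in $\bC^n$. By the linear independence of $\{K({x}_j,\cdot):j\in\bN_{n+1}\}$, which is implied by \nonsingassum, every $f\in\cS^{\overline{\bx}}$ is uniquely of the form $f=\sum_{j=1}^{n+1}c_jK({x}_j,\cdot)$ with $\|f\|_\cB=\sum_{j=1}^{n+1}|c_j|$. Writing $\vc'=(c_j:j\in\bN_n)^T$ and $z=c_{n+1}$, the interpolation condition $f(\bx)=\vy$ becomes $K[\bx]\vc'=\vy-zK_\bx({x}_{n+1})$, so by nonsingularity of $K[\bx]$, $\vc'=(K[\bx])^{-1}\vy-z(K[\bx])^{-1}K_\bx({x}_{n+1})$. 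Setting $\vu:=(K[\bx])^{-1}\vy$ and $\vv:=(K[\bx])^{-1}K_\bx({x}_{n+1})$, this identifies
$$
\min_{f\in\cI_\bx(\vy)\cap\cS^{\overline{\bx}}}\|f\|_\cB=\min_{z\in\bC}\bigl(\|\vu-z\vv\|_{\ell^1(\bN_n)}+|z|\bigr),
$$
where the right-hand side is attained since the scalar objective in $z$ is continuous, nonnegative and coercive on $\bC$. On the other hand, $\cI_\bx(\vy)\cap\cS^{\bx}$ consists of the single function $K^\bx(\cdot)\vu$, so its minimal norm is $\|\vu\|_{\ell^1(\bN_n)}$, which is the $z=0$ value above. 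Finally, as $\vy$ runs over all of $\bC^n$ and $K[\bx]$ is invertible, $\vu$ runs over all of $\bC^n$; hence \eqref{onemorepointeq} is equivalent to the statement that $\min_{z\in\bC}\bigl(\|\vu-z\vv\|_{\ell^1(\bN_n)}+|z|\bigr)=\|\vu\|_{\ell^1(\bN_n)}$ for every $\vu\in\bC^n$.

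\emph{The two implications.} For sufficiency, suppose $\|\vv\|_{\ell^1(\bN_n)}\le1$. Then for every $z\in\bC$ the triangle inequality gives
$$
\|\vu-z\vv\|_{\ell^1(\bN_n)}+|z|\ge\|\vu\|_{\ell^1(\bN_n)}-|z|\,\|\vv\|_{\ell^1(\bN_n)}+|z|\ge\|\vu\|_{\ell^1(\bN_n)},
$$
with equality at $z=0$, so the minimum equals $\|\vu\|_{\ell^1(\bN_n)}$; this proves that \eqref{H2condition} is sufficient. For necessity I argue by contraposition: if $\|\vv\|_{\ell^1(\bN_n)}>1$, then choosing $\vy=K[\bx]\vv$ so that $\vu=\vv$, the choice $z=1$ gives $\|\vv-\vv\|_{\ell^1(\bN_n)}+1=1<\|\vv\|_{\ell^1(\bN_n)}=\|\vu\|_{\ell^1(\bN_n)}$, so the identity fails for this $\vu$, and \eqref{onemorepointeq} cannot hold.

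\emph{Where the care goes.} There is no deep obstacle here: the content of \eqref{H2condition} enters only through the one-line triangle-inequality estimate. I would, however, be careful about three routine points. First, the matrix/vector bookkeeping: absorbing ${x}_{n+1}$ into the \emph{expansion} rather than into the \emph{evaluation} brings in the vector $K_\bx({x}_{n+1})$, not $K^\bx({x}_{n+1})$, consistent with the asymmetry noted just before the lemma. Second, attainment of both infima: for $\cS^{\bx}$ this is immediate from nonsingularity of $K[\bx]$, and for $\cS^{\overline{\bx}}$ it follows from coercivity of the scalar function $z\mapsto\|\vu-z\vv\|_{\ell^1(\bN_n)}+|z|$ on $\bC$. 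Third, the quantifier: the hypothesis ``for all $\vy\in\bC^n$'' must be rephrased as ``for all $\vu\in\bC^n$'' through $\vu=(K[\bx])^{-1}\vy$, and it is precisely this rephrasing that legitimizes taking the test vector $\vu=\vv$ in the necessity argument.
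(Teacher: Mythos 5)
Your proof is correct and follows essentially the same route as the paper: both reduce the minimum over $\cI_\bx(\vy)\cap\cS^{\overline{\bx}}$ to a one-parameter $\ell^1$ problem of the form $\min_{z}\bigl(\|\vu-z\vv\|_{\ell^1(\bN_n)}+|z|\bigr)$ with $\vu=K[\bx]^{-1}\vy$, $\vv=K[\bx]^{-1}K_\bx(x_{n+1})$, and settle both directions by the triangle inequality together with the test choice $\vu=\vv$. The only difference is that you parametrize the extra degree of freedom by the coefficient $c_{n+1}$ directly, whereas the paper parametrizes by the value $g(x_{n+1})$ and recovers the same coefficient vector through a Schur-complement inversion of $K[\overline{\bx}]$; your bookkeeping is slightly cleaner but the substance is identical.
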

\begin{proof}
Notice that $\cI_{\bx}(\vy)\cap\cS^{\bx}$ has only one function $f=K^{\bx}(\cdot)K[\bx]^{-1}\vy$. We next estimate the norm of functions in $\spI_{\bx}(\vy) \cap \spS^{\overline{\bx}}$. Let $g\in \cI_{\bx}(\vy) \cap \cS^{\overline{\bx}}$ and $b:=g({x}_{n+1})$. Note that $g$ is uniquely determined by $b$ as it has already satisfied the interpolation condition $g(\bx)=\vy$. In fact, as $K[\overline{\bx}]$ is nonsingular,
$g=K^{\overline{\bx}}(\cdot)K[\overline{\bx}]^{-1}\overline{\vy}$,
where $\overline{\vy}=(\vy^T,b)^T\in\bC^{n+1}$. Direct computations show that
$$
K[\overline{\bx}]^{-1}  \overline{\vy} = \left( \begin{matrix} K[\bx] & K_{{\bx}}({x}_{n+1}) \\ K^{\bx}({x}_{n+1}) & K({x}_{n+1}, {x}_{n+1})  \end{matrix} \right)^{-1} \begin{pmatrix} \vy \\ b \end{pmatrix}= \begin{pmatrix} K[\bx]^{-1}\vy + \frac{q}{p} K[\bx]^{-1} K_{{\bx}}({x}_{n+1})  \\[0.3em]  - \frac{q}{p}  \end{pmatrix},
$$
where $p :=K({x}_{n+1}, {x}_{n+1}) -  K^{\bx}({x}_{n+1})K[\bx]^{-1}K_{{\bx}}({x}_{n+1})$ and $q:= K^{\bx}({x}_{n+1}) K[\bx]^{-1}\vy - b$.

We now show sufficiency. If (\ref{H2condition}) holds true then we have
$$
\begin{array}{rl}
\|g\|_\cB&=\|K[\overline{\bx}]^{-1}  \overline{\vy}\|_{\ell^1(\bN_{n+1})} \geq \|K[\bx]^{-1}\vy \|_{\ell^1(\bN_n)} - \left\| (K[\bx])^{-1} K_{{\bx}}({x}_{n+1})\right\|_{\ell^1(\bN_n)}|\frac qp|+ |\frac qp|\\
&\geq \|K[\bx]^{-1}\vy \|_{\ell^1(\bN_n)}= \|f\|_\cB,
 \end{array}
$$
which implies
\begin{equation*}
\min_{f\in \cI_{\bx}(\vy) \cap \cS^{\overline{\bx}}} \| f\|_{\spB} \geq \min_{f\in \cI_{\bx}(\vy) \cap \spS^{\bx}} \| f\|_{\spB}.
\end{equation*}
Since $\cS^{\bx}\subseteq \cS^{\overline{\bx}}$,
$$
\min_{f\in \spI_{\bx}(\vy) \cap \spS^{\overline{\bx}}} \| f\|_{\spB} \le \min_{f\in \spI_{\bx}(\vy)\cap\cS^{\bx}}\| f\|_{\spB}.
$$
Thus, (\ref{onemorepointeq}) holds true.

On the other hand, if (\ref{onemorepointeq}) is always true for all $\vy\in\bC^n$ then we must have
$$
\|K[\overline{\bx}]^{-1}\overline{\vy}\|_{\ell^1(\bN_{n+1})}\ge \|K[\bx]^{-1}\vy \|_{\ell^1(\bN_n)}\mbox{ for all }\vy\in\bC^n\mbox{ and }b\in\bC.
$$
In particular, the choices $\vy = K_{{\bx}}({x}_{n+1})$ and $b=K^{\bx}({x}_{n+1}) K[\bx]^{-1}K^T_{{x}}({x}_{n+1})+p$ yields that
\begin{equation*}
\|K[\overline{\bx}]^{-1}\overline{\vy}\|_{\ell^1(\bN_{n+1})}=\left\|\begin{pmatrix} {\boldsymbol{0}} \\ 1 \end{pmatrix}\right\|_{\ell^1(\bN_{n+1})} =1 \quad \mbox{and} \quad \|K[\bx]^{-1}\vy \|_{\ell^1(\bN_n)}= \left\| (K[\bx])^{-1} K_{{\bx}}({x}_{n+1})\right\|_{\ell^1(\bN_n)}.
\end{equation*}
Combing the above two equations proves (\ref{H2condition}). The proof is complete.
\end{proof}

We are now ready to present one of the main results in this paper.

\begin{thm}\label{repinterp}
Every minimal norm interpolant (\ref{mni}) in $\cB$ satisfies the linear representer theorem if and only if (\ref{H2condition}) holds true for all $n\in\bN$ and all pairwise distinct sampling points ${x}_j\in X$, $j\in\bN_{n+1}$.
\end{thm}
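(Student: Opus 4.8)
The plan is to prove this theorem by reducing the general case to the one-point extension handled in Lemma~\ref{onemorepoint}. The key conceptual point is that any minimal norm interpolant over a finite set $\bx$ can be reached by enlarging $\bx$ one sampling point at a time, and each such enlargement preserves the minimal norm by Lemma~\ref{onemorepoint} precisely when \eqref{H2condition} holds.

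For the sufficiency direction, suppose \eqref{H2condition} holds for every $n$ and every choice of $n+1$ pairwise distinct points. Fix data $\bx=\{x_j:j\in\bN_n\}$ and $\vy\in\bC^n$. A general element $f\in\cI_{\bx}(\vy)$ has the form $f=\sum_{t\in\supp\vc}c_t K(t,\cdot)$ with $\vc\in\ell^1(X)$ and $\|f\|_\cB=\|\vc\|_{\ell^1(X)}$; for any $\epsilon>0$ I can truncate $\vc$ to a finitely supported $\vc'$ so that the resulting $\tilde f=\sum c'_t K(t,\cdot)$ satisfies $\|\tilde f\|_\cB\le\|f\|_\cB$ and $\tilde f$ still lies in $\cS^{\overline\bx}$ for some finite $\overline\bx\supseteq\bx$ (after adjusting so the interpolation conditions at $\bx$ are met — this may require care, so instead it is cleaner to argue directly on the infimum). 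The core step is: by iterating Lemma~\ref{onemorepoint} along the chain $\bx\subseteq\bx\cup\{x_{n+1}\}\subseteq\cdots$, one gets
$$
\min_{f\in\cI_{\bx}(\vy)\cap\cS^{\overline\bx}}\|f\|_\cB=\min_{f\in\cI_{\bx}(\vy)\cap\cS^{\bx}}\|f\|_\cB
$$
for every finite $\overline\bx\supseteq\bx$. Since $\cB_0$ is dense in $\cB$ and point evaluations are continuous, the infimum of $\|f\|_\cB$ over all of $\cI_{\bx}(\vy)$ equals the infimum over $\bigcup_{\overline\bx}\bigl(\cI_{\bx}(\vy)\cap\cS^{\overline\bx}\bigr)$, which by the above equals $\|K^{\bx}(\cdot)K[\bx]^{-1}\vy\|_\cB$, attained at the unique element of $\cI_{\bx}(\vy)\cap\cS^{\bx}$. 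Hence that element is a minimal norm interpolant, establishing the linear representer theorem.

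For the necessity direction, suppose the linear representer theorem for minimal norm interpolation holds. Fix any $n$ and any pairwise distinct $x_1,\ldots,x_{n+1}$. Applying the representer theorem with data set $\bx=\{x_j:j\in\bN_n\}$ and an arbitrary $\vy\in\bC^n$, the minimal norm interpolant over all of $\cB$ lies in $\cS^{\bx}$, i.e.\ equals $K^{\bx}(\cdot)K[\bx]^{-1}\vy$. In particular its norm is no larger than that of any interpolant lying in the larger space $\cS^{\overline\bx}$ with $\overline\bx=\{x_j:j\in\bN_{n+1}\}$, so \eqref{onemorepointeq} holds for this $\bx$ and $x_{n+1}$; by the equivalence already proved in Lemma~\ref{onemorepoint}, \eqref{H2condition} follows for these points. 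Since $n$ and the points were arbitrary, this gives the full family of inequalities.

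\textbf{Main obstacle.} The delicate point is the density/approximation argument in the sufficiency half: an arbitrary $f\in\cI_{\bx}(\vy)$ need not lie in any finite-dimensional $\cS^{\overline\bx}$, so one cannot directly apply Lemma~\ref{onemorepoint}. The fix is to work with infima and show that $\inf\{\|f\|_\cB:f\in\cI_{\bx}(\vy)\}$ is not decreased by restricting to the dense subset of finitely-supported representations — using that for finitely-supported $\vc$ the function $\sum c_t K(t,\cdot)$ lies in $\cS^{\overline\bx}$ for $\overline\bx$ the union of $\bx$ with the support, and that truncation of an $\ell^1$ sequence only decreases the norm while, after a bounded correction to restore the $n$ interpolation equations (possible since $K[\bx]$ is nonsingular), still yields an interpolant. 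One must check this correction costs at most $o(1)$ in norm as the truncation tail vanishes, which follows from boundedness of $K$ and of $K[\bx]^{-1}$. Once this is in place, the monotone chain of equalities from Lemma~\ref{onemorepoint} closes the argument, and Proposition~\ref{equivalenttomni} is not needed here although it is what links this theorem back to regularized learning.
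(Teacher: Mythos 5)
Your overall strategy is the paper's: the necessity half is identical (the representer theorem forces \eqref{onemorepointeq}, and the "only if" part of Lemma~\ref{onemorepoint} yields \eqref{H2condition}), and the sufficiency half is organized around chaining the one-point lemma and then a density argument. However, there is one step that, as written, does not follow. You claim that "iterating Lemma~\ref{onemorepoint} along the chain $\bx\subseteq\bx\cup\{x_{n+1}\}\subseteq\cdots$" gives $\min_{f\in\cI_{\bx}(\vy)\cap\cS^{\overline\bx}}\|f\|_\cB=\min_{f\in\cI_{\bx}(\vy)\cap\cS^{\bx}}\|f\|_\cB$ with the interpolation conditions held fixed at the $n$ points of $\bx$ throughout. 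But Lemma~\ref{onemorepoint} only applies when the span set exceeds the interpolation set by exactly one point; to pass from $\cS^{\bx\cup\{x_{n+1},x_{n+2}\}}$ down to $\cS^{\bx\cup\{x_{n+1}\}}$ while interpolating only on $\bx$, the lemma's hypotheses are not met. The missing idea is to enlarge the interpolation data along with the span set: given a candidate $g\in\cI_{\bx}(\vy)\cap\cS^{\bv_m}$ with $\bv_m\supseteq\bx$, record $\vu_l:=g(\bv_l)$ and chain the \emph{inequalities} $\|g\|_\cB\ge\min_{\cI_{\bv_m}(\vu_m)\cap\cS^{\bv_m}}\|f\|_\cB\ge\min_{\cI_{\bv_{m-1}}(\vu_{m-1})\cap\cS^{\bv_m}}\|f\|_\cB=\min_{\cI_{\bv_{m-1}}(\vu_{m-1})\cap\cS^{\bv_{m-1}}}\|f\|_\cB\ge\cdots$, where the first inequality in each step uses the inclusion $\cI_{\bv_l}(\vu_l)\subseteq\cI_{\bv_{l-1}}(\vu_{l-1})$ and the equality is Lemma~\ref{onemorepoint} applied with interpolation set $\bv_{l-1}$. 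This bookkeeping is the actual content of the sufficiency proof and your write-up needs it to close.

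On the density step, your truncate-and-correct scheme (restore the $n$ interpolation equations via $K[\bx]^{-1}$, at a norm cost controlled by the $\ell^1$ tail and the boundedness of $K$) is workable, but the paper avoids the correction entirely: take $g_j\in\cB_0$ with $g_j\to g$, apply the finite-dimensional inequality to $g_j$ with its \emph{own} data $g_j(\bx)$ to get $\|g_j\|_\cB\ge\|K^{\bx}(\cdot)K[\bx]^{-1}g_j(\bx)\|_\cB$, and then pass to the limit using continuity of point evaluations. That is cleaner and sidesteps the $o(1)$ estimate you flag as the obstacle.
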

\begin{proof}
The minimal norm interpolant (\ref{mni}) satisfies the linear representer theorem if and only if
$$
\min_{g\in\cI_{\bx}(\vy)}\|g\|_\cB=\min_{f\in\spI_{\bx}(\vy) \cap \spS^{\bx}}\|f\|_\cB.
$$
Therefore, if the above equation holds true then since $\spI_{\bx}(\vy) \cap \spS^{\bx}\subseteq \spI_{\bx}(\vy) \cap \spS^{\overline{\bx}}\subseteq \cI_{\bx}(\vy)$, we obtain (\ref{onemorepointeq}). By Lemma \ref{onemorepoint}, (\ref{H2condition}) is true for every ${x}_{n+1}\in X$.

It remains to prove the sufficiency. We shall first show $\|g\|_\cB\ge \min_{f\in\spI_{\bx}(\vy) \cap \spS^{\bx}}\|f\|_\cB$ for all $g\in \cI_{\bx}(\vy)\cap \cB_0$. To this end, we express $g$ as $g=\sum_{j=1}^m c_j K({x}_j,\cdot)$ for some $m\geq n$ and pairwise distinct $\{{x}_j: j\in\bN_m\} \subseteq {X}$. This can always be done by adding some sampling points, setting the corresponding coefficients to be zero, and relabeling if necessary. We let $y_j:=g({x}_j)$, $j\in\bN_m$, $\vu_l:=(y_j: j\in\bN_l)$, and $\bv_l=\{{x}_j:j\in\bN_l\}$ for $1\leq l\leq m$. Note that $\vy=\vu_n$ and $\bx=\bv_n$. It follows that $g\in \spI_{\bv_m}(\vu_m) \cap \spS^{\bv_m}$ and thus,
\begin{equation*}
\|g \|_{\spB}\geq \min_{f\in \spI_{\bv_m}(\vu_m) \cap \spS^{\bv_m}} \| f\|_{\spB}.
\end{equation*}
Since $\spI_{\bv_m}(\vu_m) \subseteq \spI_{\bv_{m-1}}(\vu_{m-1})$, we apply Lemma \ref{onemorepoint} to get
\begin{equation*}
\min_{f\in \spI_{\bv_m}(\vu_m) \cap \spS^{\bv_m}} \| f\|_{\spB}\geq  \min_{f\in \spI_{\bv_{m-1}}(\vu_{m-1}) \cap \spS^{\bv_m}} \| f\|_{\spB} = \min_{f\in \spI_{\bv_{m-1}}(\vu_{m-1}) \cap \spS^{\bv_{m-1}}} \| f\|_{\spB}.
\end{equation*}
It follows that
$$
\|g \|_{\spB}\geq \min_{f\in \spI_{\bv_{m-1}}(\vu_{m-1}) \cap \spS^{\bv_{m-1}}} \| f\|_{\spB}.
$$
Repeating this process, we reach
\begin{equation}\label{repinterp1}
\|g \|_{\spB}\geq \min_{f\in \spI_{\bv_n}(\vu_n) \cap \spS^{\bv_n}} \| f\|_{\spB}=\min_{f\in\spI_{\bx}(\vy) \cap \spS^{\bx}}\|f\|_\cB\mbox{  for all }g\in\cI_{\bx}(\vy)\cap \cB_0.
\end{equation}

Now let $g\in\cI_{\bx}(\vy)$ be arbitrary but fixed. Then there exists a sequence of functions $\{ g_j\in\cB_0:j\in\bN \}$ that converges to $g$ in $\cB$. We let $f$ and $f_j$ be the function in $\spS^{\bx}$ such that $f(\bx)=\vy$ and $f_j(\bx)=g_j(\bx)$, $j\in\bN$. They are explicitly given by
\begin{equation*}
f= K^{\bx}(\cdot)K[\bx]^{-1}g(\bx) \quad \mbox{and} \quad f_j= K^{\bx}(\cdot)K[\bx]^{-1}g_j(\bx), \ \ j\in\bN.
\end{equation*}
Since $g_j$ converges to $g$ in $\cB$ and point evaluation functionals are continuous on $\cB$, $g_j(\bx)\to g(\bx)$ as $j\to\infty$. As a result, $\limd_{j\to \infty} \|f -f_j\|_{\spB}=0$. By \eqref{repinterp1}, $\| g_j\|_{\spB} \geq \| f_j\|_{\spB}$ for all $j\in\bN$. We hence obtain that $\|g\|_\cB\ge \|f\|_\cB$. Therefore,
$$
\min_{g\in\cI_{\bx}(\vy)}\|g\|_\cB\ge \min_{f\in\spI_{\bx}(\vy) \cap \spS^{\bx}}\|f\|_\cB.
$$
The reverse direction of the inequality is clear as $\spI_{\bx}(\vy) \cap \spS^{\bx}\subseteq \cI_{\bx}(\vy)$.
\end{proof}

We draw the following conclusion by Theorems \ref{equivalenttomni} and \ref{repinterp}.
\begin{cor}\label{repincb}
Every acceptable regularized learning scheme of the form (\ref{regularization1}) has a minimizer of the form (\ref{representer1}) if and only if the function $K$ satisfies the property (\ref{H2condition}).
\end{cor}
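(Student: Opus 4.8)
The plan is that this corollary is an immediate synthesis of the two results that precede it, so the proof will be short and purely a matter of chaining equivalences. First I would observe that the phrase ``every acceptable regularized learning scheme of the form \eqref{regularization1} has a minimizer of the form \eqref{representer1}'' is, by definition, exactly the assertion that $\cB$ \emph{satisfies the linear representer theorem for regularized learning}.

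Next I would invoke Proposition \ref{equivalenttomni} (itself obtained by combining Lemmas \ref{mnileadtoreg} and \ref{reginterp}) to replace this assertion by the equivalent one that $\cB$ satisfies the linear representer theorem for minimal norm interpolation, i.e.\ that for every choice of data $\bx$ and $\vy$ there is a minimal norm interpolant \eqref{mni} lying in $\cS^{\bx}$. Then I would apply Theorem \ref{repinterp}, which states that this in turn holds if and only if \eqref{H2condition} holds for all $n\in\bN$ and all pairwise distinct sampling points $x_1,\dots,x_{n+1}\in X$. Composing these three equivalences yields the claim, provided we read ``$K$ satisfies the property \eqref{H2condition}'' as precisely this universally quantified statement, which is just admissibility condition \Addptsmalllabel.

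The only point needing care — and it is a minor bookkeeping matter, not a real obstacle — is keeping the quantifiers aligned: Theorem \ref{repinterp} formulates \eqref{H2condition} with $n$ and the $n+1$ pairwise distinct points quantified universally, and one should make explicit in the statement of the corollary that this is exactly what is meant by $K$ satisfying \eqref{H2condition}. All the substantive work has already been done in Lemma \ref{onemorepoint}, in Theorem \ref{repinterp}, and in the equivalence Lemmas \ref{mnileadtoreg} and \ref{reginterp}, so nothing new has to be proved here.
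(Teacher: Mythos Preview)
Your proposal is correct and matches the paper's approach exactly: the paper simply states that the corollary follows by combining Proposition~\ref{equivalenttomni} and Theorem~\ref{repinterp}, which is precisely the chain of equivalences you describe. Your remark about the universal quantification over $n$ and the pairwise distinct points is a fair clarification but not an additional step beyond what the paper does.
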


In the last part of the section, we briefly discuss the linear representer theorem in $\cB^\sharp$ under the same assumption that $K$ is bounded and satisfies \Hprimelabel. By Theorem \ref{H1condition}, $\cB^\sharp$ is an RKBS on $X$. Likewise, we call a regularized learning scheme
\begin{equation}\label{regularization2}
f_0 = \argmin_{f\in\cB^\sharp}V(f(\bx))+\mu \phi(\|f\|_{\cB^\sharp})
\end{equation}
{\it acceptable} if $V$ and $\phi$ are continuous and (\ref{tendtoinfty}) is satisfied by $\phi$. The space $\cB^\sharp$ is said to satisfy the linear representer theorem if every acceptable learning scheme (\ref{regularization2}) has a minimizer of the following form
\begin{equation}\label{representer2}
f_0=\sum_{j=1}^n c_j K(\cdot,{x}_j),
\end{equation}
where $c_j$'s are constants. We follow similar approaches to those used for $\cB$ to study this important property on $\cB^\sharp$.

\begin{prop}\label{reginterpBsharp}
Let $\bx\subseteq X$ have pairwise distinct elements. Every acceptable regularized learning scheme (\ref{regularization2}) in $\cB^\sharp$ has a minimizer, $f_0$ lying in $\cS_{\bx}:=\linearspan\{K(\cdot, {x}_j):j\in\bN_n\}$ if and only if there is a minimal norm interpolant,
\begin{equation}\label{mni2}
f_{\min}:=\argmin_{f\in\cB^\sharp,f(\bx)=\vy}\|f\|_{\cB^\sharp}
\end{equation}
lying in $\cS_{\bx}$ for all $\vy\in\bC^n$.
\end{prop}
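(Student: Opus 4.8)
The plan is to mirror the argument used for $\cB$ (Lemmas \ref{mnileadtoreg} and \ref{reginterp}), carrying it over to $\cB^\sharp$ with essentially no change beyond notational bookkeeping. The two directions of the equivalence are established separately.

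\textbf{From minimal norm interpolation to regularized learning.} First I would assume that for every $\vy\in\bC^n$ there is a minimal norm interpolant of \eqref{mni2} lying in $\cS_{\bx}$, and run the argument of Lemma \ref{mnileadtoreg}: given any $f\in\cB^\sharp$, replace it by the minimal norm interpolant $f_0$ of its own data $f(\bx)$ that lies in $\cS_{\bx}$; then $f_0(\bx)=f(\bx)$ forces $V(f_0(\bx))=V(f(\bx))$, while $\|f_0\|_{\cB^\sharp}\le\|f\|_{\cB^\sharp}$ and the monotonicity of $\phi$ give $\phi(\|f_0\|_{\cB^\sharp})\le\phi(\|f\|_{\cB^\sharp})$. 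Hence the infimum of \eqref{regularization2} over $\cB^\sharp$ equals the infimum over $\cS_{\bx}$. The coercivity condition \eqref{tendtoinfty} restricts a minimizing sequence to a bounded, hence (since $\cS_{\bx}$ is finite dimensional) compact, subset of $\cS_{\bx}$; continuity of $V$ and $\phi$ together with continuity of point evaluations on $\cB^\sharp$ (Lemma \ref{bsharph1ok} and Theorem \ref{H1condition}) yields that the continuous objective attains its minimum there.

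\textbf{From regularized learning to minimal norm interpolation.} For the converse I would imitate Lemma \ref{reginterp}, following \cite{Micchelli1994}. Fix $\bx$ and $\vy$, and for each $\mu>0$ take $V(f(\bx))=\|f(\bx)-\vy\|_2^2$, $\phi(t)=t$, obtaining by hypothesis a minimizer $f_{0,\mu}\in\cS_{\bx}$, which must be of the form $f_{0,\mu}=\sum_{j=1}^n (\vc_\mu)_j K(\cdot,x_j)$. Comparing with the competitor $f=0$ shows $\|f_{0,\mu}(\bx)-\vy\|_2^2\le\|\vy\|_2^2$; since $K[\bx]$ is nonsingular and the coefficient-to-values map on $\cS_{\bx}$ is the linear isomorphism $\vc\mapsto K[\bx]^{\top}\vc$ (or an analogous invertible matrix — the key point being invertibility, which follows from \eqref{linearlyindependentK}), the family $\{\vc_\mu:\mu>0\}$ is bounded in $\bC^n$. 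Passing to a subsequence, $\vc_\mu\to\vc_0$ as $\mu\to0$; set $f_{0,0}=\sum_{j=1}^n (\vc_0)_j K(\cdot,x_j)$. Because $\cS_{\bx}$ is finite dimensional all norms on it are equivalent, so $\|f_{0,\mu}-f_{0,0}\|_{\cB^\sharp}\to0$ and $f_{0,\mu}(x_k)\to f_{0,0}(x_k)$ for all $k$. For an arbitrary interpolant $g\in\cB^\sharp$ with $g(\bx)=\vy$, optimality of $f_{0,\mu}$ gives $\|f_{0,\mu}(\bx)-\vy\|_2^2+\mu\|f_{0,\mu}\|_{\cB^\sharp}\le\mu\|g\|_{\cB^\sharp}$; letting $\mu\to0$ forces $f_{0,0}(\bx)=\vy$, so $f_{0,0}$ interpolates, and dividing the same inequality by $\mu$ before the limit yields $\|f_{0,\mu}\|_{\cB^\sharp}\le\|g\|_{\cB^\sharp}$, hence $\|f_{0,0}\|_{\cB^\sharp}\le\|g\|_{\cB^\sharp}$. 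Thus $f_{0,0}\in\cS_{\bx}$ is a minimal norm interpolant.

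\textbf{Main obstacle.} The substantive point is not the limiting argument, which is routine, but verifying that the representation $f_0\in\cS_{\bx}$ is well-posed and that the values-to-coefficients correspondence on $\cS_{\bx}$ is bijective for $\cB^\sharp$; here the roles of left and right arguments of $K$ are swapped relative to $\cB$, so one must use \eqref{linearlyindependentK} (equivalently the nonsingularity of $K[\bx]$, up to transpose) to see that $\{K(\cdot,x_j):j\in\bN_n\}$ is linearly independent and that specifying $f(\bx)$ pins down the coefficients. I would also note the small subtlety in the converse that the constructed $f_{0,0}$ only becomes an interpolant in the limit — the inequality $\|f_{0,\mu}(\bx)-\vy\|_2^2\le\mu\|g\|_{\cB^\sharp}$, valid for any fixed interpolant $g$, is exactly what kills the data-fitting term as $\mu\to0$. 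Beyond these bookkeeping matters the proof is a direct transcription of the $\cB$ case, so I would likely phrase it as ``The proof is analogous to that of Lemmas \ref{mnileadtoreg} and \ref{reginterp}, with $\cS^{\bx}$ replaced by $\cS_{\bx}$, $K(x_j,\cdot)$ by $K(\cdot,x_j)$, and $\|\cdot\|_\cB$ by $\|\cdot\|_{\cB^\sharp}$,'' spelling out only the two or three places where the asymmetry of $K$ enters.
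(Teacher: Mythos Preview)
Your proposal is correct and follows essentially the same approach as the paper, which simply states that the arguments mirror those for $\cB$ (Lemmas \ref{mnileadtoreg} and \ref{reginterp}) and singles out the one place where care is needed: the $\cB^\sharp$-norm of a general element of $\cS_\bx$ has no explicit formula, so one appeals to the equivalence of all norms on the finite-dimensional space $\cS_\bx$. You identify exactly this point (and the transpose that appears because the roles of the arguments of $K$ are swapped), so your write-up is a faithful expansion of the paper's one-line proof.
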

\begin{proof}
The arguments of the proof are similar to those for $\cB$. One only needs to note that although the norm of a function in $\cB^\sharp$ may not be known, any two norms on the finite dimensional vector space $\cS_\bx$ are equivalent.
\end{proof}

To study conditions ensuring that the minimal norm interpolation (\ref{mni2}) satisfies the linear representer theorem, we first identify a specific form of the norm $\|\cdot\|_{\cB^\sharp_0}$ under the assumption that $K$ satisfies (\ref{H2condition}). Notice that a function $f_{\vc}=\sum_{j=1}^nc_jK(\cdot,x_j) \in \cS_{\bx} \subseteq \cB_0^\sharp$ can be represented as $f_{\vc}=\vc^T K_\bx(\cdot)$.

\begin{lem}\label{anotherBsharpnorm}
Let $\bx$ have pairwise distinct elements. The function $K$ satisfies (\ref{H2condition}) if and only if
\begin{equation}\label{anotherBsharpnorm1}
\| f_{\vc} \|_{\spB^\sharp} = \| \vc^TK[\bx]\|_\infty\mbox{ for all } f_{\vc}=\vc^TK_\bx(\cdot),\ \ \vc\in\bC^n,
\end{equation}
where $\|\cdot\|_\infty$ denotes the maximum norm on $\bC^n$.
\end{lem}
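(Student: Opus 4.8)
The plan is to convert the stated norm identity into an elementary inequality between vectors in $\bC^n$ and then recognise it as an instance of $\ell^\infty$--$\ell^1$ duality. Since $f_{\vc}=\sum_{j=1}^n c_jK(\cdot,x_j)$ lies in $\cB_0^\sharp$ and $\cB_0^\sharp$ is isometrically contained in its completion $\cB^\sharp$, Lemma~\ref{BsharpNorm} gives $\|f_{\vc}\|_{\cB^\sharp}=\|f_{\vc}\|_{L^\infty(X)}=\sup_{x\in X}|\vc^TK_\bx(x)|$. Writing $\vy:=f_{\vc}(\bx)\in\bC^n$, a direct computation from the definitions of $K[\bx]$ and $K_\bx$ gives $\vc^TK[\bx]=\vy^T$, hence $\vc^T=\vy^TK[\bx]^{-1}$ (here $K[\bx]$ is invertible by \nonsingassum), and therefore $f_{\vc}(x)=\vy^TK[\bx]^{-1}K_\bx(x)$ for all $x\in X$. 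Thus the identity to be proved is precisely
$$
\sup_{x\in X}\bigl|\vy^TK[\bx]^{-1}K_\bx(x)\bigr|=\|\vy\|_\infty\qquad\text{for every }\vy\in\bC^n .
$$
Two elementary facts do the work: for a sample point $x_k\in\bx$ one has $K[\bx]\be_k=K_\bx(x_k)$, i.e.\ $K[\bx]^{-1}K_\bx(x_k)=\be_k$, so the left-hand supremum is automatically at least $\max_k|\vy_k|=\|\vy\|_\infty$; and by H\"older's inequality $|\vy^TK[\bx]^{-1}K_\bx(x)|\le\|\vy\|_\infty\,\|K[\bx]^{-1}K_\bx(x)\|_{\ell^1(\bN_n)}$ for every $x\in X$.

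For the ``only if'' direction I would argue that if \eqref{H2condition} holds for $\bx$ together with every $x_{n+1}\in X\setminus\bx$, then, using also $\|K[\bx]^{-1}K_\bx(x_k)\|_{\ell^1(\bN_n)}=1$ at the points of $\bx$, one gets $\|K[\bx]^{-1}K_\bx(x)\|_{\ell^1(\bN_n)}\le1$ for all $x\in X$; the H\"older estimate above then yields $\sup_{x\in X}|\vy^TK[\bx]^{-1}K_\bx(x)|\le\|\vy\|_\infty$, and combined with the reverse inequality coming from evaluation at the $x_k$ this is the desired identity $\|f_{\vc}\|_{\cB^\sharp}=\|\vc^TK[\bx]\|_\infty$.

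For the ``if'' direction I would use the identity at a single point. Fix $x_{n+1}\in X\setminus\bx$ and set $\boldsymbol{z}:=K[\bx]^{-1}K_\bx(x_{n+1})$. Given an arbitrary $\vy\in\bC^n$, choose $\vc$ with $\vc^T=\vy^TK[\bx]^{-1}$; then $\vc^TK[\bx]=\vy^T$ and $f_{\vc}(x_{n+1})=\vy^T\boldsymbol{z}$, so by Lemma~\ref{BsharpNorm} and the assumed identity $|\vy^T\boldsymbol{z}|=|f_{\vc}(x_{n+1})|\le\|f_{\vc}\|_{L^\infty(X)}=\|f_{\vc}\|_{\cB^\sharp}=\|\vc^TK[\bx]\|_\infty=\|\vy\|_\infty$. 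Using that $\|\boldsymbol{z}\|_{\ell^1(\bN_n)}=\sup\{|\vy^T\boldsymbol{z}|:\vy\in\bC^n,\ \|\vy\|_\infty\le1\}$, we conclude $\|\boldsymbol{z}\|_{\ell^1(\bN_n)}\le1$, which is \eqref{H2condition} for $x_{n+1}$. I do not expect a genuine obstacle; the only delicate point is the bookkeeping forced by the non-symmetry of $K$ — keeping the relations $\vc^TK[\bx]=f_{\vc}(\bx)^T$ and $K[\bx]^{-1}K_\bx(x_k)=\be_k$ straight under the paper's conventions for $K[\bx]$, $K^\bx$ and $K_\bx$ — together with the (minor) observation that only the one-point consequence of the norm identity is needed for the converse.
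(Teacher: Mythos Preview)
Your proof is correct and follows essentially the same approach as the paper's: both directions rest on the H\"older pairing between $\ell^\infty$ and $\ell^1$ applied to $\vc^TK[\bx]$ and $K[\bx]^{-1}K_\bx(x)$, together with the observation that the $k$-th column of $K[\bx]$ is $K_\bx(x_k)$. The only cosmetic difference is that for the converse the paper picks an explicit $\vc$ attaining equality in H\"older's inequality, whereas you invoke the duality formula $\|\boldsymbol{z}\|_{\ell^1(\bN_n)}=\sup_{\|\vy\|_\infty\le1}|\vy^T\boldsymbol{z}|$ directly; these are two phrasings of the same step.
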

\begin{proof}
Suppose that $K$ satisfies (\ref{H2condition}) for all $x_{n+1}\in X\setminus \bx$. Then we have for all $x\in X$ that $\|K[\bx]^{-1}K_\bx(x)\|_{\ell^1(\bN_n)}\le 1$. Let $\vc\in\bC^n$ and $x\in X$. It follows from this inequality that
\begin{equation*}
|\vc^TK_\bx(x)| = |\vc^TK[\bx]K[\bx]^{-1}K_\bx(x)|\le \|\vc^TK[\bx]\|_\infty \|K[\bx]^{-1}K_\bx(x)\|_{\ell^1(\bN_n)}\le \|\vc^TK[\bx]\|_\infty,
\end{equation*}
which implies by Lemma \ref{BsharpNorm} that for $f_c=\vc^TK_\bx(\cdot)$
\begin{equation*}
\| f_{\vc} \|_{\spB^\sharp} = \|\vc^TK_\bx(\cdot) \|_{L^\infty(X)} \leq  \| \vc^TK[\bx] \|_\infty.
\end{equation*}
The other direction of the inequality is clear as we have
\begin{equation*}
 \|\vc^T K[\bx] \|_\infty = \max\{|\vc^TK_\bx(x_j)|: j\in\bN_n\} \leq \|\vc^TK_\bx(\cdot) \|_{L^\infty(X)} =\| f_{\vc} \|_{\spB^\sharp}.
\end{equation*}

It remains to show that (\ref{anotherBsharpnorm1}) implies (\ref{H2condition}). We prove this by construction. For any $x_{n+1}\in {X}$, we can find a nonzero vector $\vc \in \bC^n$ such that
\begin{equation*}
|\vc^TK_\bx(x_{n+1})| = |\vc^TK[\bx]K[\bx]^{-1}K_\bx(x_{n+1})| = \|\vc^TK[\bx] \|_\infty \|K[\bx]^{-1}K_\bx(x_{n+1})\|_{\ell^1(\bN_n)}.
\end{equation*}
We then let $f_{\vc}=\vc^TK_\bx(\cdot)$ and obtain by (\ref{anotherBsharpnorm1})
\begin{equation*}
 \|\vc^TK[\bx] \|_\infty \|K[\bx]^{-1}K_\bx(x_{n+1})\|_{\ell^1(\bN_n)}=|f_{\vc}(x_{n+1})|\le \|f_{\vc}\|_{L^\infty(X)}=\|f_{\vc}\|_{\cB^\sharp}=\|\vc^T K[\bx] \|_\infty,
\end{equation*}
which implies (\ref{H2condition}) for $\vc^TK[\bx]$ is not the zero vector. The proof is complete.
\end{proof}

We now show that (\ref{H2condition}) is sufficient for $\cB^\sharp$ to satisfy the linear representer theorem.

\begin{thm}\label{repinterpBsharp}
If $K$ satisfies (\ref{H2condition}) then $\cB^\sharp$ satisfies the linear representer theorem.
\end{thm}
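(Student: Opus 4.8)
The plan is to transfer the problem, via Proposition~\ref{reginterpBsharp}, to the minimal norm interpolation problem (\ref{mni2}) in $\cB^\sharp$, and then to produce an explicit minimizer lying in $\cS_\bx$. So fix a sequence $\bx=\{x_j\in X:j\in\bN_n\}$ of pairwise distinct points and a target $\vy\in\bC^n$. Since $K[\bx]$ is nonsingular (\nonsingassum), set $\vc_0^T:=\vy^TK[\bx]^{-1}$ and $f_0:=\vc_0^TK_\bx(\cdot)\in\cS_\bx$; then $\vc_0^TK[\bx]=\vy^T$, so $f_0(x_k)=y_k$ for all $k\in\bN_n$ and $f_0$ interpolates the data.

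The first substantive step is to evaluate $\|f_0\|_{\cB^\sharp}$. Here I would invoke Lemma~\ref{anotherBsharpnorm}, which is applicable precisely because $K$ satisfies (\ref{H2condition}): it yields $\|f_0\|_{\cB^\sharp}=\|\vc_0^TK[\bx]\|_\infty=\|\vy^T\|_\infty=\max_{k\in\bN_n}|y_k|$. The second step is the matching lower bound: for an arbitrary $g\in\cB^\sharp$ with $g(\bx)=\vy$, I would combine the reproducing identity (\ref{reproduce2}) with the bilinear-form estimate (\ref{bilinear}) to get $|y_k|=|g(x_k)|=|\langle K(x_k,\cdot),g\rangle_K|\le\|K(x_k,\cdot)\|_\cB\,\|g\|_{\cB^\sharp}=\|g\|_{\cB^\sharp}$, using that $\|K(x_k,\cdot)\|_\cB=1$ directly from the definition (\ref{cbl1norm}) of $\cB$. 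Taking the maximum over $k$ gives $\|g\|_{\cB^\sharp}\ge\max_k|y_k|=\|f_0\|_{\cB^\sharp}$, so $f_0$ is a minimal norm interpolant of the form (\ref{mni2}) and it lies in $\cS_\bx$; Proposition~\ref{reginterpBsharp} then delivers the linear representer theorem for $\cB^\sharp$.

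I do not expect a genuine obstacle here, since the hard analytic content has already been packaged into Lemma~\ref{anotherBsharpnorm} and Proposition~\ref{reginterpBsharp}. The only points requiring care are bookkeeping: keeping straight that $\cB^\sharp$-functions in $\cS_\bx$ are built from the column $K_\bx(\cdot)=(K(\cdot,x_j):j\in\bN_n)^T$ rather than $K^\bx(\cdot)$; confirming the normalization $\|K(x_k,\cdot)\|_\cB=1$; and ordering the pairing in the lower bound as $\langle K(x_k,\cdot),g\rangle_K$ with $K(x_k,\cdot)\in\cB$ and $g\in\cB^\sharp$ so that (\ref{reproduce2}) and (\ref{bilinear}) both apply verbatim.
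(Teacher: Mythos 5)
Your proposal is correct and follows essentially the same route as the paper: reduce via Proposition~\ref{reginterpBsharp} to minimal norm interpolation, exhibit the explicit interpolant $f_0=\vy^TK[\bx]^{-1}K_\bx(\cdot)\in\cS_\bx$, compute $\|f_0\|_{\cB^\sharp}=\|\vy\|_\infty$ from Lemma~\ref{anotherBsharpnorm}, and bound any competing interpolant below by $\|\vy\|_\infty$. The only (immaterial) difference is that for the lower bound the paper invokes the sup-norm identity of Lemma~\ref{BsharpNorm}, whereas you use the reproducing identity (\ref{reproduce2}) together with (\ref{bilinear}) and $\|K(x_k,\cdot)\|_\cB=1$ — two equivalent expressions of the same fact that point evaluations on $\cB^\sharp$ have norm at most one.
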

\begin{proof}
Suppose that (\ref{H2condition}) holds true. By Lemma \ref{reginterpBsharp}, it suffices to show that the minimal norm interpolation (\ref{mni2}) has a minimizer of the form (\ref{representer1}). We shall prove this by directly showing that $f_0=\vy^TK[\bx]^{-1}K_\bx(\cdot)$ is a minimizer for (\ref{mni2}). Let $f$ be an arbitrary function in $\cB^\sharp$ such that $f(\bx)=\vy$. Then we have by Lemma \ref{BsharpNorm}
\begin{equation*}
\| f\|_{\spB^\sharp}  = \| f\|_{L^\infty(X)} \geq \| f(\bx)\|_\infty=\|\vy\|_\infty.
\end{equation*}
By Lemma \ref{anotherBsharpnorm},
$$
\| f_0\|_{\spB^\sharp} = \|\vy^TK[\bx]^{-1}K[\bx]\|_\infty=\|\vy\|_\infty.
$$
Combining the above two inequalities leads to $\|f_0\|_{\cB^\sharp}\le \|f\|_{\cB^\sharp}$. Therefore, \eqref{mni2} has the minimizer $f_0=\vy^TK[\bx]^{-1}K_\bx(\cdot)$ which has the form (\ref{representer2}).
\end{proof}

In the particular case when $X$ has a finite cardinality, we shall show that condition (\ref{H2condition}) is also necessary for $\cB^\sharp$ to satisfy the linear representer theorem.

\begin{prop}\label{repinterpBsharpNes}
If $X$ consists of finitely many points and $\cB^\sharp$ satisfies the linear representer theorem then \eqref{H2condition} holds true.
\end{prop}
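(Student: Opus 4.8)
The plan is to prove the contrapositive: assuming \eqref{H2condition} fails for some configuration, I will exhibit an acceptable regularized learning scheme (equivalently, by Proposition \ref{reginterpBsharp}, a minimal norm interpolation problem) on $\cB^\sharp$ whose only minimizers lie outside $\cS_\bx$. Concretely, suppose there are pairwise distinct $x_1,\dots,x_{n+1}\in X$ with
$$
\bigl\|(K[\bx])^{-1}K_\bx(x_{n+1})\bigr\|_{\ell^1(\bN_n)} = \lambda > 1,
$$
where $\bx=\{x_1,\dots,x_n\}$. Because $X$ is finite, $\cB^\sharp=\cB_0^\sharp=\linearspan\{K(\cdot,x):x\in X\}$ is finite dimensional, so there is no completion subtlety and Lemma \ref{BsharpNorm} applies verbatim: $\|g\|_{\cB^\sharp}=\|g\|_{L^\infty(X)}=\max_{x\in X}|g(x)|$.

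First I would pick the data. Using the vector $\vc\in\bC^n$ that achieves the $\ell^1$-duality in the proof of Lemma \ref{anotherBsharpnorm} — i.e.\ $\vc$ with $|\vc^TK[\bx]K[\bx]^{-1}K_\bx(x_{n+1})| = \|\vc^TK[\bx]\|_\infty\,\lambda$ — set $\vy:=\overline{\vc^TK[\bx]}$ normalized so $\|\vy\|_\infty = 1$ (the conjugate and normalization just make the bookkeeping clean; one may instead work directly with $\vy = \vc^T K[\bx]$ and track the constant $\lambda$). Then the unique function in $\cI_\bx(\vy)\cap\cS_\bx$ is $f_0 = \vy^T K[\bx]^{-1}K_\bx(\cdot)$, and by Lemma \ref{anotherBsharpnorm} (which holds here in one direction regardless, or can be bypassed) one computes $\|f_0\|_{\cB^\sharp} \ge |f_0(x_{n+1})| = |\vy^T K[\bx]^{-1}K_\bx(x_{n+1})|$. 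The choice of $\vy$ forces this quantity to equal $\lambda\,\|\vy\|_\infty = \lambda > 1$, while simultaneously $\|f_0(\bx)\|_\infty = \|\vy\|_\infty = 1$. The point of this computation is that $f_0$ is \emph{not} $L^\infty$-optimal among interpolants of $\vy$: its value at $x_{n+1}$ blows up past the interpolated data.

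Next I would produce a strictly better interpolant outside $\cS_\bx$. Work in $\cS_{\overline\bx} = \linearspan\{K(\cdot,x_j):j\in\bN_{n+1}\}$; the interpolants of $\vy$ there form a one-parameter family indexed by $b = f(x_{n+1})\in\bC$, and by the explicit inversion formula (the transpose of the block computation in the proof of Lemma \ref{onemorepoint}) the resulting function is $g_b = \overline{\vy_b}^{\,T}K[\overline\bx]^{-1}K_{\overline\bx}(\cdot)$ — more cleanly, $g_b$ is determined and continuous in $b$. Choosing $b=0$ kills the offending value, and since $\|g_b\|_{\cB^\sharp} = \max_{x\in X}|g_b(x)|$ depends continuously on $b$ while at $b = f_0(x_{n+1})$ it equals $\lambda$ but the data max is only $1$, a continuity/compactness argument over the finite set $X$ shows $\min_b \|g_b\|_{\cB^\sharp} < \lambda$. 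Hence the true minimal norm interpolant over $\cB^\sharp\supseteq\cS_{\overline\bx}$ has norm strictly less than $\|f_0\|_{\cB^\sharp}$, so no minimal norm interpolant lies in $\cS_\bx$ (whose unique $\vy$-interpolant is $f_0$), contradicting the linear representer theorem via Proposition \ref{reginterpBsharp}.

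The main obstacle I anticipate is the last step's quantitative gap: it is easy to see $g_0$ removes the bad coordinate, but one must verify that pushing $b$ to $0$ does not inflate $|g_b(x)|$ at some \emph{other} $x\in X$ past $\lambda$. Here finiteness of $X$ is essential and is exactly the hypothesis; I would handle it by noting $|g_b(x)|$ is, for each fixed $x$, an affine function of $b$ (from the inversion formula $g_b(x) = g_0(x) + b\,\eta(x)$ for suitable coefficients $\eta(x)$ independent of $\vy$), so $\|g_b\|_{L^\infty(X)} = \max_{x\in X}|g_0(x)+b\eta(x)|$ is a max of finitely many convex functions of $b$, hence convex, with value $\lambda$ at $b=f_0(x_{n+1})$; if its minimum were also $\ge\lambda$ it would be constant equal to $\lambda$ on the segment to $b=0$, forcing $|g_0(x_{n+1})| = |0 + 0| = \lambda$, a contradiction since $g_0(x_{n+1})=0$. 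This pins down the strict inequality and completes the contrapositive. A minor secondary point is confirming that $K[\overline\bx]$ is nonsingular so $\cS_{\overline\bx}$ genuinely interpolates all of $\bC^{n+1}$; this is Admissibility Assumption \nonsingassum, which we are free to use throughout.
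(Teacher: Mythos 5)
Your first half is sound and is essentially the paper's own duality step: using the $\ell^1$--$\ell^\infty$ duality from the proof of Lemma \ref{anotherBsharpnorm}, one picks $\vy$ with $\|\vy\|_\infty=1$ whose unique interpolant $f_0\in\cS_{\bx}$ satisfies $\|f_0\|_{\cB^\sharp}\ge|f_0(x_{n+1})|=\lambda>1$. The gap is in the second half, where you search for a strictly better interpolant only inside the one-parameter family $\cI_{\bx}(\vy)\cap\cS_{\overline{\bx}}=\{g_b\}$. Nothing controls the values of $g_b$ on $X\setminus\overline{\bx}$, and the convexity argument does not close this hole: writing $\Phi(b):=\|g_b\|_{L^\infty(X)}$ and $b^*:=f_0(x_{n+1})$, the assumption $\min_b\Phi\ge\lambda$ together with convexity does \emph{not} force $\Phi$ to be constant on the segment from $b^*$ to $0$ (a convex function need not be constant near a minimizer), and even granting constancy, the maximum defining $\Phi(0)$ could be attained at some $x\in X\setminus\overline{\bx}$ rather than at $x_{n+1}$, so the contradiction ``$|g_0(x_{n+1})|=\lambda$'' does not follow. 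In fact, with a few extra points in $X$ one can arrange the subgradients of the active terms of $\Phi$ at $b^*$ to have $0$ in their convex hull, making $b^*$ a genuine global minimizer of $\Phi$ over your family; your construction then yields no contradiction even though the representer theorem still fails. So the restriction to $\cS_{\overline{\bx}}$ is a real loss, not a bookkeeping issue.

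The repair is to use the finiteness of $X$ at exactly this point, which is what the paper does: since $K[X]$ is nonsingular by \nonsingassum, every function on the finite set $X$ belongs to $\cB^\sharp=\cB_0^\sharp$, so the extension of $\vy$ by zero to $X\setminus\bx$ is an interpolant $g$ with $\|g\|_{\cB^\sharp}=\|g\|_{L^\infty(X)}=\|\vy\|_\infty=1<\lambda\le\|f_0\|_{\cB^\sharp}$. Hence no minimal norm interpolant lies in $\cS_{\bx}$, contradicting the representer theorem via Proposition \ref{reginterpBsharp}. (The paper phrases this in the forward direction: the representer theorem plus the zero-extension interpolant force $\|f_{\vc}\|_{\cB^\sharp}=\|\vc^TK[\bx]\|_\infty$ for every $\vc$, and Lemma \ref{anotherBsharpnorm} converts this identity into \eqref{H2condition}.) Replacing your family $\{g_b\}$ by the single zero-extension interpolant both fixes the gap and shortens the argument.
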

\begin{proof}
Let $\vc\in\bC^n$ and $f_{\vc}=\vc^TK_\bx(\cdot)$. Under the assumptions, we get by Proposition \ref{reginterpBsharp} that $f_{\vc}$ is a minimizer for the minimal norm interpolation (\ref{mni2}) with $\vy=f_{\vc}(\bx)=(K[\bx])^T\vc$. Since $X$ has a finite cardinality and $K[\bx]$ is nonsingular for all pairwise distinct $\bx\subseteq X$, we can find a function $g\in\cB_0$ such that $g(\bx)=\vy$ and $\|g\|_{L^\infty(X)}\le \|\vy\|_\infty$. Since $f_{\vc}$ is a minimizer of (\ref{mni2}) and $g$ satisfies $g(\bx)=\vy$,
$$
\| f_{\vc} \|_{\spB^\sharp} \leq  \| g \|_{\spB^\sharp}=\|g\|_{L^\infty(X)}=\| \vy\|_\infty=\|(K[\bx]^T)\vc\|_\infty.
$$
On the other hand, we have by Lemma \ref{BsharpNorm}
\begin{equation*}
\| f_{\vc} \|_{\spB^\sharp} = \| f_{\vc} \|_{L^\infty(X)} \geq \|f_{\vc}(\bx)\|_\infty=\|(K[\bx]^T)\vc\|_\infty.
\end{equation*}
By the above two equations, (\ref{anotherBsharpnorm1}) holds true. By Lemma \ref{anotherBsharpnorm}, $K$ satisfies (\ref{H2condition}).
\end{proof}

One observes that the key ingredient in the proof of Proposition \ref{repinterpBsharpNes} is to extend a function on the discrete set $\bx$ to a function in $\cB^\sharp$ in a way that the supremum norm is preserved. In many cases, this is achievable without $X$ being a finite set. For instance, by the Tietze extension theorem in topology, such an extension exists when $X$ is a compact metric space and $K$ is a universal kernel \cite{MXZ2006} on $X$. Thus, for those input spaces $X$ and functions $K$, $\cB^\sharp$ satisfies the linear representer theorem if and only if (\ref{H2condition}) holds true.

\section{Examples of Admissible Kernels} \label{examplesec}

Recall the definition of admissible kernels from the introduction. Note that the first requirement (A1) in the definition implies (\ref{linearlyindependentK}). Theorem \ref{construction} is proved by combining Theorem \ref{H1condition} and Corollary \ref{repincb}. By this result, admissible kernels are crucial for our construction. Functions $K$ satisfying requirements (A1)--(A3) are usually relatively easy to find. Some examples have been presented before Proposition \ref{compactsupport} in Section 3. However, requirement (A4) could be somewhat demanding and rule out many commonly used kernels. We are able to present two examples of admissible kernels below.

The first example is Brownian bridge kernel that arises in the study of Brownian bridge stochastic process in statistics \cite{BerlinetThomas-Agnan}.

\begin{prop}\label{brownian}
The Brownian bridge kernel defined by
\begin{equation*}
K(s,t):= \min\{s,t \} -st, \quad s,t\in (0,1)
\end{equation*}
is an admissible kernel on the input space $X=(0,1)$.
\end{prop}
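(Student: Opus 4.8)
The plan is to verify the four requirements (A1)--(A4) of Definition \ref{admisskerdef} directly for $K(s,t)=\min\{s,t\}-st$ on $X=(0,1)$. Requirement (A2) is immediate: for $s,t\in(0,1)$ we have $0\le\min\{s,t\}\le 1$ and $0\le st\le 1$, so $|K(s,t)|\le 1$, and in fact one can see $0\le K(s,t)\le 1/4$. For (A3), note that $K$ is continuous on $(0,1)\times(0,1)$ and, for each fixed $x$, the function $t\mapsto K(x,t)$ is the continuous piecewise-linear ``tent''-type function that vanishes at the endpoints $0$ and $1$; since $K(s,t)=K(t,s)$, property (A3) will follow once we know that $\{K(x_j,\cdot):j\}$ has no nontrivial $\ell^1$ linear combination vanishing identically, which I would get from the fact that these functions have corners (kinks in the derivative) at distinct points $x_j$: evaluating a $C^1$-failure at each $x_j$ forces each coefficient to vanish. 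Alternatively one can cite that $\min\{s,t\}-st$ is the covariance kernel of the Brownian bridge and hence strictly positive definite, which immediately gives both (A1) and (A3).

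For (A1), I would show $K[\bx]$ is nonsingular for pairwise distinct $x_1<\dots<x_n$ in $(0,1)$. The cleanest route is to use the classical fact that the Brownian bridge kernel is strictly positive definite on $(0,1)$; a self-contained argument: writing $x_0:=0$, $x_{n+1}:=1$, one can diagonalize by the substitution that exploits the tridiagonal structure of $K[\bx]^{-1}$ (the Brownian bridge, like Brownian motion, has a banded inverse covariance reflecting the Markov property). Concretely, $K[\bx]^{-1}$ is the tridiagonal matrix with positive diagonal entries $\frac{1}{x_{j}-x_{j-1}}+\frac{1}{x_{j+1}-x_{j}}$ and off-diagonal entries $-\frac{1}{x_{j+1}-x_{j}}$; this matrix is strictly diagonally dominant in a weak sense and positive definite, hence invertible, which confirms (A1) and simultaneously gives us an explicit formula for $K[\bx]^{-1}$ that will be indispensable for (A4).

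The main work --- and the main obstacle --- is (A4): showing $\|K[\bx]^{-1}K_{\bx}(x_{n+1})\|_{\ell^1(\bN_n)}\le 1$ for every $x_{n+1}\in(0,1)\setminus\bx$. Here the explicit tridiagonal inverse pays off. The vector $v:=K[\bx]^{-1}K_{\bx}(x_{n+1})$ should be interpreted probabilistically as the coefficients expressing the conditional expectation of the Brownian bridge at $x_{n+1}$ given its values at $x_1,\dots,x_n$; by the Markov property of the bridge, $v$ has at most two nonzero entries, namely at the two neighbors $x_k<x_{n+1}<x_{k+1}$, and those two entries are the nonnegative interpolation weights $\frac{x_{k+1}-x_{n+1}}{x_{k+1}-x_k}$ and $\frac{x_{n+1}-x_k}{x_{k+1}-x_k}$, which sum to exactly $1$ (with the obvious one-neighbor modification when $x_{n+1}$ lies outside $[x_1,x_n]$, where the weight is $<1$ because of the pull toward the endpoint $0$ or $1$). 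Thus $\|v\|_{\ell^1(\bN_n)}\le 1$ with equality in the interior. I would prove the claim about $v$ either probabilistically, or purely linear-algebraically by checking that the proposed $v$ satisfies $K[\bx]v=K_{\bx}(x_{n+1})$, i.e.\ verifying the identity $K(x_j,x_k)w_k+K(x_j,x_{k+1})w_{k+1}=K(x_j,x_{n+1})$ for each $j$, which reduces to elementary piecewise-linear bookkeeping using $K(s,t)=\min\{s,t\}-st$ and the linearity of $t\mapsto K(x_j,t)$ on each subinterval not containing $x_j$. Having (A1)--(A4), the Proposition follows.
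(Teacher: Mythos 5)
Your treatment of (A4), which is the heart of the proposition, is exactly the paper's: one computes $K[\bx]^{-1}K_{\bx}(x)$ explicitly and finds it has at most two nonzero entries --- the barycentric weights $\frac{x_{k+1}-x}{x_{k+1}-x_k}$ and $\frac{x-x_k}{x_{k+1}-x_k}$ when $x_k<x<x_{k+1}$ (summing to $1$), and the single entries $x/x_1$, respectively $(1-x)/(1-x_n)$, in the two boundary cases (each $<1$). Your remark that the verification $K[\bx]v=K_{\bx}(x)$ reduces to the affineness of $t\mapsto K(x_j,t)$ on any interval not containing $x_j$ is the cleanest way to justify what the paper dismisses as ``direct computations.'' For (A1) and (A3) your route differs from the paper's, which writes $K(s,t)=\int_0^1\Gamma_s(z)\Gamma_t(z)\,dz$ with $\Gamma_x=\chi_{(0,x)}-x$: this gives (A1) at once since $\vc^*K[\bx]\vc=\int_0^1|\sum_jc_j\Gamma_{x_j}(z)|^2dz$, and reduces (A3), after two distributional differentiations, to $\sum_jc_j\delta_{x_j}=0$. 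Your explicit tridiagonal inverse for (A1) is correct and has the advantage of re-deriving the data needed for (A4); your ``kink'' argument for (A3) also works and is arguably more elementary, since the jump of the one-sided derivatives of $f=\sum_kc_kK(x_k,\cdot)$ at $x_j$ equals $-c_j$ by dominated convergence applied to the difference quotients (each summand is Lipschitz with constant at most $1$), so $f\equiv 0$ forces every $c_j=0$. Do phrase it through one-sided derivatives or distributions rather than a naive ``$C^1$-failure at each $x_j$,'' because the points $x_j$ may accumulate.

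One genuine error to flag: the alternative claim that strict positive definiteness of the Brownian bridge covariance ``immediately gives both (A1) and (A3)'' is wrong for (A3). Strict positive definiteness is a statement about \emph{finite} linear combinations, whereas (A3) requires that no nontrivial \emph{infinite} $\ell^1$ combination $\sum_{j=1}^\infty c_jK(x_j,\cdot)$ vanish identically on $X$; this is strictly stronger. The paper's Proposition \ref{compactsupport} makes the point: the sinc kernel is strictly positive definite on $\bR$, yet it fails (A3). So (A3) cannot be discharged by citing positive definiteness; you need the kink (or distributional) argument, which fortunately you also supply.
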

\begin{proof}
We start with validating requirement (A4). Let $0< x_1 < x_2 < \cdots < x_n< 1$ be given and $x\in(0,1)$ be different from $x_j$, $j\in\bN_n$. Direct computations show that
\begin{enumerate}
\item If $x< x_1$ then $K[\bx]^{-1}K_{{\bx}}(x) = \left(\frac{x}{x_1}, 0,\ldots,0 \right)^T$.

\item If $x>x_n$ then $K[\bx]^{-1}K_{{\bx}}(x) =\left(0,\ldots,0, \frac{1-x}{1- x_n} \right)^T$.

\item If $x_j< x < x_{j+1}$ for some $j\in\bN_{n-1}$ then
\begin{equation*}
K[\bx]^{-1}K_{{\bx}}(x)= \left(0, \ldots,0, \frac{x_{j+1} -x}{x_{j+1} -x_j}, \frac{x -x_j}{x_{j+1} -x_j}, 0, \ldots, 0 \right)^T.
\end{equation*}
\end{enumerate}
In all cases, it is straightforward to see $\left\|K[\bx]^{-1}K_{{\bx}}(x)\right\|_{\ell^1(\bN_n)} \leq 1$. Therefore, requirement (A4) is indeed fulfilled.

To verify the other three requirements, we first observe
$$
K(s,t)=\int_0^1 \Gamma_s(z)\Gamma_t(z)dz,\ \ s,t\in(0,1),
$$
where $\Gamma_x:=\chi_{(0,x)}-x$ with $\chi_{A}$ standing for the characteristic function of $A\subseteq (0,1)$. Suppose that $K[\bx]\vc=0$ for some $\vc\in\bC^n$. Then we have
$$
\int_0^1 \biggl|\sum_{j=1}^n c_j\Gamma_{x_j}(z)\biggr|^2dz=\vc^*K[\bx]\vc=0,
$$
which implies that
$$
\sum_{j=1}^n c_j \Gamma_{x_j}(z)=0\mbox{ for almost every }z\in[0,1].
$$
Clearly, $\Gamma_{x_j}$, $j\in\bN_n$ are linearly independent. Therefore, $c_j=0$ for all $j\in\bN_n$. Requirement (A1) is hence satisfied.

The function $K$ is clearly bounded by $1$. Suppose that for some $\vc\in\ell^1(\bN)$ and pairwise distinct $x_j\in(0,1)$, $j\in\bN$
$$
\sum_{j=1}^\infty c_j K(x_j,x)=\int_0^1\biggl(\sum_{j=1}^\infty c_j\Gamma_{x_j}(z)\biggr)\Gamma_x(z)dz=0\mbox{ for all }x\in (0,1).
$$
It implies that the function $\phi:=\sum_{j=1}^\infty c_j\Gamma_{x_j}$ is orthogonal to $\Gamma_x$ for all $x\in(0,1)$, that is,
$$
\int_0^x\phi(t)dt-x\int_0^1\phi(t)dt=0\mbox{ for all }x\in(0,1).
$$
Taking the derivative on both sides of the above equations yields that $\phi$ equals a constant $C$ almost everywhere on $[0,1]$. Namely,
$$
\sum_{j=1}^\infty c_j\chi_{[0,x_j]}-\sum_{j=1}^\infty c_jx_j=C\mbox{ almost everywhere}.
$$
We now take the derivative of both sides of the equation above in the distributional sense to get $\sum_{j\in\bN}c_j\delta_{x_j}=0$. Let $j$ be an arbitrary but fixed positive integer. We can find a sequence of infinitely continuously differentiable functions $\phi_k$, $k\in\bN$ such that $\|\phi_k\|_{L^\infty([0,1])}\le 1$, $\phi_k(x_j)=1$, and the Lebesgue measure of the set where $\phi_k$ is nonzero is less than or equal to $\frac1k$. For each $N\in\bN$, we have for sufficiently large $k$ that
$$
\phi_k(t_l)=0\mbox{ for all }l\in\bN_N\setminus\{j\}.
$$
We get for this $\phi_k$
$$
0=\biggl|\biggl(\sum_{l=1}^\infty c_l\delta_{x_l}\biggr)(\phi_k)\biggr|\ge |c_j|-\sum_{l>N}|c_l|.
$$
Since $\sum_{l>N}|c_l|$ converges to zero as $N\to\infty$, we have $c_j=0$. Therefore, $\vc=0$ for $j$ is arbitrary chosen.

We conclude that all the four requirements of an admissible kernel are fulfilled by the Brownian bridge kernel.
\end{proof}

The second example is the exponential kernel (also called the $C^0$ Mat\'{e}rn kernel).
\begin{prop}
The exponential kernel
\begin{equation}\label{exponential}
K(s,t):=\me^{-|s-t|}, \quad s,t\in \R
\end{equation}
is an admissible kernel on $\bR$.
\end{prop}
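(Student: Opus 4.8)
\emph{Proposal.} The plan is to check the four requirements (A1)--(A4) of Definition~\ref{admisskerdef} for $K(s,t)=\me^{-|s-t|}$ on $X=\bR$; three of them are quick. Boundedness (A2) holds with $M=1$. For (A3) note that $K$ has the form \eqref{translationinvariantK},
$$
\me^{-|s-t|}=\frac1\pi\int_\bR \me^{-i(s-t)\xi}\,\frac{d\xi}{1+\xi^2},
$$
whose weight $\varphi(\xi)=\tfrac1{\pi(1+\xi^2)}$ is strictly positive on all of $\bR$; hence Proposition~\ref{fullsupport} applies and yields (A3). The same representation settles (A1): for pairwise distinct $x_j$ and $\vc\neq 0$,
$$
\vc^{*}K[\bx]\vc=\frac1\pi\int_\bR\Bigl|\textstyle\sum_j c_j\me^{-ix_j\xi}\Bigr|^2\frac{d\xi}{1+\xi^2}>0,
$$
so $K[\bx]$ is Hermitian positive definite, in particular nonsingular.

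The substantive step is (A4). First I would note that permuting $x_1,\dots,x_n$ permutes the rows and columns of $K[\bx]$ and the entries of $K_{\bx}(x_{n+1})$ simultaneously, so it only permutes the vector $(K[\bx])^{-1}K_{\bx}(x_{n+1})$ and leaves its $\ell^1(\bN_n)$ norm unchanged; hence one may assume $x_1<x_2<\cdots<x_n$, and set $x:=x_{n+1}$. The fact that makes everything explicit is the semigroup (Markov) identity $K(x_i,x_k)=K(x_i,x_j)K(x_j,x_k)$ whenever $x_i\le x_j\le x_k$, immediate from $\me^{-(x_k-x_i)}=\me^{-(x_k-x_j)}\me^{-(x_j-x_i)}$. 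Using it one checks, coordinate by coordinate, that $\mathbf v:=(K[\bx])^{-1}K_{\bx}(x)$ is supported on at most two indices: $\mathbf v=\me^{-(x_1-x)}\mathbf e_1$ if $x<x_1$, $\mathbf v=\me^{-(x-x_n)}\mathbf e_n$ if $x>x_n$, and if $x_j<x<x_{j+1}$ for some $j\in\bN_{n-1}$ then
$$
\mathbf v=\frac{\sinh(x_{j+1}-x)}{\sinh(x_{j+1}-x_j)}\,\mathbf e_j+\frac{\sinh(x-x_j)}{\sinh(x_{j+1}-x_j)}\,\mathbf e_{j+1}.
$$
In the interior case the equation $K[\bx]\mathbf v=K_{\bx}(x)$ splits, after cancelling a common factor $\me^{\pm x_i}$, into the two scalar identities $\alpha\me^{-x_j}+\beta\me^{-x_{j+1}}=\me^{-x}$ (rows $i\le j$) and $\alpha\me^{x_j}+\beta\me^{x_{j+1}}=\me^{x}$ (rows $i\ge j+1$), and solving this $2\times2$ system produces exactly the coefficients displayed above.

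It then remains to estimate $\|\mathbf v\|_{\ell^1(\bN_n)}$. In the two boundary cases the norm is $\me^{-(x_1-x)}<1$ and $\me^{-(x-x_n)}<1$ respectively. In the interior case all coefficients are positive, so
$$
\|\mathbf v\|_{\ell^1(\bN_n)}=\frac{\sinh(x_{j+1}-x)+\sinh(x-x_j)}{\sinh(x_{j+1}-x_j)},
$$
and since $\sinh(u+v)=\sinh u\cosh v+\cosh u\sinh v\ge\sinh u+\sinh v$ for $u,v>0$ (because $\cosh\ge1$), the numerator is at most $\sinh\bigl((x_{j+1}-x)+(x-x_j)\bigr)=\sinh(x_{j+1}-x_j)$, whence $\|\mathbf v\|_{\ell^1(\bN_n)}\le1$. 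This gives \eqref{H2condition} and hence (A4), completing the verification. I expect the only genuinely nontrivial point to be recognizing the $\sinh$ form of the interpolation coefficients together with the elementary superadditivity of $\sinh$ on $(0,\infty)$; the remaining checks are direct computations in the same spirit as the treatment of the Brownian bridge kernel in Proposition~\ref{brownian}.
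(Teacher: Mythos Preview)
Your proposal is correct and follows essentially the same route as the paper: (A1)--(A3) are dispatched via the Fourier representation (the paper simply cites Section~3 for this), and (A4) is verified by ordering the nodes and computing $(K[\bx])^{-1}K_{\bx}(x)$ explicitly in the three cases, obtaining a vector supported on at most two adjacent indices. Your $\sinh$ formulation of the interior coefficients is just a rewriting of the paper's expression $\bigl(\me^{x_{j+1}-x}-\me^{x-x_{j+1}}\bigr)\big/\bigl(\me^{x_{j+1}-x_j}-\me^{x_j-x_{j+1}}\bigr)$, and your superadditivity argument makes explicit the $\ell^1$-bound that the paper asserts without further comment.
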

\begin{proof}
We have seen in Section 3 that this kernel satisfies requirements (A1)--(A3). It remains to check requirement (A4). Let $x_1 < x_2 < \cdots < x_n$ be given and $x\in\bR$ be different from $x_j$, $j\in\bN_n$. Direct computations show that
\begin{enumerate}
\item If $x< x_1$ then $K[\bx]^{-1}K_{{\bx}}(x) = \left(\me^{x-x_1}, 0,\ldots,0 \right)^T$.

\item If $x>x_n$ then $K[\bx]^{-1}K_{{\bx}}(x) =\left(0,\ldots,0, \me^{x_n-x} \right)^T$.

\item If $x_j< x < x_{j+1}$ for some $j\in\bN_{n-1}$ then
\begin{equation*}
K[\bx]^{-1}K_{{\bx}}(x)= \left(0, \ldots,0, \frac{\me^{x_{j+1} -x} -\me^{x-x_{j+1}}}{\me^{x_{j+1} -x_j} - \me^{x_{j} -x_{j+1}}}, \frac{\me^{x -x_j} -\me^{x_j-x}}{\me^{x_{j+1} -x_j} - \me^{x_{j} -x_{j+1}}}, 0, \ldots, 0 \right)^T.
\end{equation*}
\end{enumerate}
In all cases, $\left\|K[\bx]^{-1}K_{{\bx}}(x)\right\|_{\ell^1(\bN_n)} \leq 1$. The proof is complete.
\end{proof}

%

Finally, we remark that by numerical experiments, the Gaussian kernel
$$
K(s,t)=\exp\biggl(-\frac{(s-t)^2}{\sigma}\biggr), \ \ s,t\in \bR
$$
does not satisfy (A4). Consequently, neither does the Gaussian kernel (\ref{gaussian}) on $\bR^d$. The same situation happens to the inverse multiquadric (\ref{multiquadric}) when $\beta=1/2$.

\section{Relaxation of the Admissible Condition (A4)} \label{relaxsec}
\setcounter{equation}{0}

As seen above, the admissible condition (A4) is satisfied for few commonly used kernels. This section aims at weakening this requirement to accommodate more kernels. We are very grateful to the anonymous referee for a useful remark that inspired the approach below.

Let $K$ be a function on $X\times X$ that satisfies (A1)-(A3) and let $\cB$ be constructed by (\ref{cbl1norm}). The condition (A4) is meant to ensure the validity of the linear representer theorem for regularized learning in $\cB$. To see how it can be relaxed, we first examine the role of the linear representer theorem in the learning rate estimate. Consider the $\ell^1$ norm coefficient-based regularization algorithm
\begin{equation}\label{regularization3}
\min_{\vc\in\bC^n}\frac1n\sum_{j=1}^n|K^\bx(x_j)\vc-y_j|^2+\mu \|\vc\|_{\ell^1(\bN_n)}
\end{equation}
where $\bx:=\{x_j:j\in\bN_n\}$ is a sequence of sampling points from the input space $X$, $y_j\in Y\subseteq\bC$ is the observed output on $x_j$, $\mu$ is a positive regularization parameter. Following a commonly used assumption in machine learning, we assume that the sample data $\bz:=\{(x_j,y_j):j\in\bN_n\}\in X\times Y$ is formed by independent and identically distributed instances of a random variable $(x,y)\in X\times Y$ subject to an unknown probability measure $\rho$ on $X\times Y$. Let $c_{\bz,\mu}$ be a minimizer of (\ref{regularization3}). We hope that the obtained function
\begin{equation}\label{fzlambda}
f_{\bz,\mu}(x):=K^\bx(x)\vc_{\bz,\mu},\ \ x\in X
\end{equation}
will well predict the outputs of new inputs from $X$. The performance of a general predictor $f:X\to Y$ is usually measured by
$$
\cE(f):=\int_{X\times Y}|f(x)-y|^2d\rho.
$$
The predictor that minimizes the above error is the regression function
$$
f_\rho(x):=\int_Y yd\rho(y|x),\ \ x\in X,
$$
where $\rho(y|x)$ denotes the conditional probability measure of $y$ with respect to $x$. This optimal predictor $f_\rho$ is unreachable as $\rho$ is unknown. We shall approximate $f_\rho$ with $f_{\bz,\mu}$. More precisely, we expect with a large confidence that the approximation error $\cE(f_{\bz,\mu})-\cE(f_\rho)$ would converge to zero fast as the number of sampling points increases.

A standard approach \cite{CuckerZhou2007} in estimating the error $\cE(f_{\bz,\mu})-\cE(f_\rho)$ is to bound it by the sum of the sampling error, the hypothesis error and the regularization error. Let $g$ be an arbitrary function from $\cB$ and set for each function $f:X\to \bC$
$$
\cE_\bz(f):=\frac1n\sum_{j=1}^n|f(x_j)-y_j|^2.
$$
The approximation error $\cE(f_{\bz,\mu})-\cE(f_\rho)$ can then be decomposed into the sum of four quantities
$$
\cE(f_{\bz,\mu})-\cE(f_\rho)=\cS(\bz,\mu,g)+\cP(\bz,\mu,g)+\cD(\mu,g)-\mu\|f_{\bz,\mu}\|_\cB,
$$
where the {\it sampling error}, the {\it hypothesis error} and the {\it regularization error} are respectively defined by
$$
\begin{array}{ll}
\cS(\bz,\mu,g)&:=\cE(f_{\bz,\mu})-\cE_\bz(f_{\bz,\mu})+\cE_\bz(g)-\cE(g),\\
\cP(\bz,\mu,g)&:=\left(\cE_{\bz}(f_{\bz,\mu})+\mu \|f_{\bz,\mu}\|_\cB\right)-\left(\cE_\bz(g)+\mu \|g\|_\cB\right),\\
\cD(\mu,g)&:=\cE(g)-\cE(f_\rho)+\mu \|g\|_\cB.
\end{array}
$$

Under the condition (A4), $\cB$ satisfies the linear representer theorem. As a result,
\begin{equation}\label{originalrepresenter}
\cE_{\bz}(f_{\bz,\mu})+\mu \|f_{\bz,\mu}\|_\cB=\min_{f\in \spS^{\bx}}\cE_\bz(f)+\mu \|f\|_\cB=\min_{f\in\cB} \cE_\bz(f)+\mu \|f\|_\cB.
\end{equation}
Immediately, one has $\cP(\bz,\mu,g)\le 0$, leading to the estimate
$$
\cE(f_{\bz,\mu})-\cE(f_\rho)\le\cS(\bz,\mu,g)+\cD(\mu,g).
$$
Starting from the above inequality, learning rates of $f_{\bz,\mu}$ can be obtained \cite{SongZhang2011b}. To weaken (A4), we should not stick to the linear representer theorem (\ref{originalrepresenter}). Instead, we wish to replace it with the {\it relaxed linear representer theorem}
\begin{equation}\label{remedyrepresenter}
\min_{f\in \spS^{\bx}}\cE_\bz(f)+\mu \|f\|_\cB\le \min_{f\in\cB} \cE_\bz(f)+\mu\beta_n \|f\|_\cB,
\end{equation}
where $\beta_n$ is a constant depending on the number $n$ of sampling points, the kernel $K$ and the input space $X$. For simplicity, we suppress the notations $K$ and $X$ as they are fixed in our context. The approximation error $\cE(f_{\bz,\mu})-\cE(f_\rho)$ is accordingly factored as
$$
\cE(f_{\bz,\mu})-\cE(f_\rho)=\cS(\bz,\mu,g)+\tilde{\cP}(\bz,\mu,g)+\tilde{\cD}(\mu,g)-\mu\|f_{\bz,\mu}\|_\cB,
$$
where
$$
\begin{array}{rl}
\tilde{\cP}(\bz,\mu,g)&:=\left(\cE_{\bz}(f_{\bz,\mu})+\mu \|f_{\bz,\mu}\|_\cB\right)-\left(\cE_\bz(g)+\mu\beta_n \|g\|_\cB\right),\\
\tilde{\cD}(\mu,g)&:=\cE(g)-\cE(f_\rho)+\mu\beta_n \|g\|_\cB.
\end{array}
$$
By (\ref{remedyrepresenter}), we keep the advantage that $\tilde{\cP}(\bz,\mu,g)\le0$. Therefore,
$$
\cE(f_{\bz,\mu})-\cE(f_\rho)\le\cS(\bz,\mu,g)+\tilde{\cD}(\mu,g).
$$
As long as $\beta_n$ does not increase too fast as $n$ increases, one is still able to obtain a learning rate competitive with those in \cite{SongZhang2011b,Xiao2010}. We shall omit the detailed arguments and assumptions on the kernel $K$, the regression function $f_\rho$ and the input space $X$, as they are similar to those in \cite{SongZhang2011b}. We present one result that for all $0<\delta<1$, there exists a constant $C_\delta$ such that with confidence $1-\delta$, we have
$$
\cE(f_{\bz,\mu})-\cE(f_\rho)\le C_\delta\left((\mu\beta_n)^{\frac{2s}{1+s}}+\frac{\log\frac2\delta}{n}(\mu\beta_n)^{\frac{2s-2}{1+s}}+\frac{\log\frac2\delta}{\sqrt{n}}
(\mu\beta_n)^{\frac{2s-1}{1+s}}
+\frac{\log\frac2\delta+\log(1+n)}{(\mu\beta_n)^2}\beta_n^2n^{-\frac1{1+\theta}}\right),
$$
where $s\in(0,1)$ represents the regularity of $f_\rho$, $\theta>0$ is a positive constant related to assumptions on the kernel $K$ and the input space $X$, \cite{SongZhang2011b}. Thus, as long as $\beta_n^2$ does not cancel the decay of the term $n^{-\frac1{1+\theta}}$, one still has the hope of getting a satisfactory learning rate when $\mu$ is appropriately chosen. We discuss two instances below:
\begin{description}
\item[(i)] If $\beta_n$ is uniformly bounded with a large confidence then $\cE(f_{\bz,\mu})-\cE(f_\rho)$ has the same learning rate as that established in \cite{SongZhang2011b}, that is,
    \begin{equation}\label{learningrate1}
     \cE(f_{\bz,\mu})-\cE(f_\rho)\le C_\delta n^{-\frac{s}{1+2s}\frac1{1+\theta}}\log\frac{2+2n}{\delta}.
    \end{equation}

\item[(ii)] If $\beta_n\le C n^{\alpha}$ for some positive constants $C$ and $\alpha<\frac1{2+2\theta}$ then
\begin{equation}\label{learningrate2}
     \cE(f_{\bz,\mu})-\cE(f_\rho)\le C_\delta n^{-\frac{s}{1+2s}(\frac1{1+\theta}-2\alpha)}\log\frac{2+2n}{\delta}.
    \end{equation}
\end{description}

If we give up the linear representer theorem and pursue the relaxed version (\ref{remedyrepresenter}) instead, how can the admissible condition (A4) be weakened? We next answer this question.

\begin{prop}\label{remedysufficient}
If there exists some $\beta_n \geq 1$ such that for all $\vy\in\bC^n$
\begin{equation}\label{remedysufficientcond}
\min_{f\in\cI_\bx(\vy)}\|f\|_\cB\ge \frac1{\beta_n}\min_{\cI_\bx(\vy)\cap \spS^\bx}\|f\|_\cB
\end{equation}
then the relaxed linear representer theorem (\ref{remedyrepresenter}) holds true for any continuous loss function $V$ and any regularization parameter $\mu$.
\end{prop}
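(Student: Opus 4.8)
The plan is to reduce the relaxed representer inequality to a single pointwise comparison over $\cB$. I will in fact prove the slightly more general assertion that
\begin{equation*}
\min_{f\in\spS^{\bx}}\bigl[V(f(\bx))+\mu\|f\|_\cB\bigr]\le\min_{f\in\cB}\bigl[V(f(\bx))+\mu\beta_n\|f\|_\cB\bigr]
\end{equation*}
for every nonnegative continuous $V$ on $\bC^n$; taking $V(\vt)=\frac1n\sum_{j=1}^n|t_j-y_j|^2$ then recovers (\ref{remedyrepresenter}) since $\cE_\bz(f)=V(f(\bx))$. For this it suffices to show that for every fixed $f\in\cB$ one has $\min_{g\in\spS^{\bx}}[V(g(\bx))+\mu\|g\|_\cB]\le V(f(\bx))+\mu\beta_n\|f\|_\cB$ and then to take the infimum over $f\in\cB$. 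I would first record that the minimum on the left is attained: $\spS^{\bx}$ is finite-dimensional, $g\mapsto V(g(\bx))$ is continuous because point evaluations are continuous on $\cB$, and $g\mapsto V(g(\bx))+\mu\|g\|_\cB$ is coercive on $\spS^{\bx}$ since $\mu>0$ and $V\ge0$, so the minimum over a large enough closed ball is the global one. (One could equally read the statement with $\inf$ in place of $\min$ and skip this remark.)

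The core step is short. Fix $f\in\cB$ and put $\vy:=f(\bx)\in\bC^n$, so $f\in\cI_\bx(\vy)$. Because $K[\bx]$ is nonsingular (as $K$ satisfies \nonsingassum), the set $\cI_\bx(\vy)\cap\spS^{\bx}$ consists of the single function $h_0:=K^{\bx}(\cdot)K[\bx]^{-1}\vy$, exactly as in the proof of Lemma \ref{onemorepoint}; hence $\min_{g\in\cI_\bx(\vy)\cap\spS^{\bx}}\|g\|_\cB=\|h_0\|_\cB$. Applying hypothesis (\ref{remedysufficientcond}) gives
\begin{equation*}
\|h_0\|_\cB=\min_{g\in\cI_\bx(\vy)\cap\spS^{\bx}}\|g\|_\cB\le\beta_n\min_{g\in\cI_\bx(\vy)}\|g\|_\cB\le\beta_n\|f\|_\cB,
\end{equation*}
the last inequality because $f$ itself interpolates $\vy$. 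Since $h_0(\bx)=\vy=f(\bx)$ the loss values agree, and therefore
\begin{equation*}
\min_{g\in\spS^{\bx}}\bigl[V(g(\bx))+\mu\|g\|_\cB\bigr]\le V(h_0(\bx))+\mu\|h_0\|_\cB=V(f(\bx))+\mu\|h_0\|_\cB\le V(f(\bx))+\mu\beta_n\|f\|_\cB,
\end{equation*}
which is the pointwise bound; the proof is then finished by taking the infimum over $f\in\cB$.

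I do not expect a genuine obstacle here: the whole content of the proposition is already encoded in hypothesis (\ref{remedysufficientcond}), and the argument only exploits that the unique $\spS^{\bx}$-interpolant $h_0$ of the sample values $f(\bx)$ has the same loss as $f$ while, by hypothesis, having $\cB$-norm no larger than $\beta_n\|f\|_\cB$. The only points requiring a moment's attention are the $\min$/$\inf$ bookkeeping mentioned above and the (immediate) nonemptiness of $\cI_\bx(\vy)\cap\spS^{\bx}$, which follows from the nonsingularity of $K[\bx]$.
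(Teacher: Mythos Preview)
Your proof is correct and follows essentially the same route as the paper: pick, for a given $f\in\cB$, the unique $\spS^{\bx}$-interpolant of $f(\bx)$, use hypothesis (\ref{remedysufficientcond}) to bound its $\cB$-norm by $\beta_n\|f\|_\cB$, and combine equal loss with smaller penalty. The only cosmetic difference is that the paper applies this comparison once to a minimizer $f_0$ of the right-hand side, whereas you argue pointwise for every $f\in\cB$ and then take the infimum; your version is marginally cleaner in that it sidesteps the question of whether that minimizer over $\cB$ actually exists.
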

\begin{proof}
Suppose that (\ref{remedysufficientcond}) is satisfied. Let $f_0$ be a minimizer of
$$
\min_{f\in\cB}V(f(\bx))+\lambda\beta_n \|f\|_\cB.
$$
Choose $g$ to be a function in $\spS^\bx$ that interpolates $f_0$ at $\bx$, namely, $g(\bx)=f_0(\bx)$. By (\ref{remedysufficientcond}),
$$
\|g\|_\cB\le \beta_n\|f_0\|_\cB,
$$
which yields
$$
V(g(\bx))+\lambda\|g\|_\cB\le V(f_0(\bx))+\lambda\beta_n\|f_0\|_\cB.
$$
The proof is hence complete.
\end{proof}

We next give a characterization of (\ref{remedysufficientcond}), which gives rise to a relaxation of the admissible condition (A4) and leads to the relaxed linear representer theorem (\ref{remedyrepresenter}).

\begin{thm}\label{remedythm}
Equation (\ref{remedysufficientcond}) holds true for all $\vy\in\bC^n$ if and only if
\begin{equation}\label{relaxation}
\|(K[\bx])^{-1}K_\bx(t)\|_{\ell^1(\bN_n)}\le \beta_n\mbox{ for all }t\in X.
\end{equation}
\end{thm}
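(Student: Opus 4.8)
The statement is an ``if and only if,'' so I would prove the two implications separately, and in each direction the strategy is to reduce to the one-point analysis already carried out in Lemma~\ref{onemorepoint} and its proof. The key computational fact from that proof is the explicit formula for $K[\overline{\bx}]^{-1}\overline{\vy}$ when one sampling point $x_{n+1}$ is appended: writing $p:=K(x_{n+1},x_{n+1})-K^{\bx}(x_{n+1})K[\bx]^{-1}K_{\bx}(x_{n+1})$ and $q:=K^{\bx}(x_{n+1})K[\bx]^{-1}\vy-b$ for an interpolated value $b=g(x_{n+1})$, the first $n$ coordinates are $K[\bx]^{-1}\vy+\tfrac{q}{p}K[\bx]^{-1}K_{\bx}(x_{n+1})$ and the last coordinate is $-\tfrac{q}{p}$. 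I would exploit this together with the description $\cB$ having the $\ell^1$ form (\ref{cbl1norm}), so that $\min_{\cI_\bx(\vy)\cap\spS^\bx}\|f\|_\cB=\|K[\bx]^{-1}\vy\|_{\ell^1(\bN_n)}$ and more generally the $\cB$-norm of a function represented on finitely many points is the $\ell^1$ norm of its coefficient vector.

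\textbf{Sufficiency: (\ref{relaxation}) $\Rightarrow$ (\ref{remedysufficientcond}).} Fix $\vy\in\bC^n$ and let $g\in\cI_\bx(\vy)$ be arbitrary; as in the proof of Theorem~\ref{repinterp} it suffices to handle $g\in\cB_0$, the general case following by density of $\cB_0$ in $\cB$, continuity of point evaluations, and the fact that $f=K^\bx(\cdot)K[\bx]^{-1}g(\bx)$ depends continuously on $g(\bx)$. Write $g=\sum_{j=1}^m c_jK(x_j,\cdot)$ with $m\ge n$ and $\{x_j:j\in\bN_m\}\supseteq\bx$, so that $\|g\|_\cB=\|\vc\|_{\ell^1(\bN_m)}$ with $\vc=(c_j)$. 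Set $\vy_m:=g(\bv_m)$ where $\bv_m=\{x_j:j\in\bN_m\}$; then $\vc=K[\bv_m]^{-1}\vy_m$. I would now peel off the points $x_m,x_{m-1},\dots,x_{n+1}$ one at a time. At each step, when passing from $\bv_{l}$ to $\bv_{l-1}$, the one-point formula above gives
\begin{equation*}
\|K[\bv_l]^{-1}\vy_l\|_{\ell^1(\bN_l)}\ge \|K[\bv_{l-1}]^{-1}\vy_{l-1}\|_{\ell^1(\bN_{l-1})}-\Bigl\|K[\bv_{l-1}]^{-1}K_{\bv_{l-1}}(x_l)\Bigr\|_{\ell^1(\bN_{l-1})}\Bigl|\tfrac{q}{p}\Bigr|+\Bigl|\tfrac{q}{p}\Bigr|,
\end{equation*}
and by (\ref{relaxation}) applied with the $(l-1)$-point configuration $\bv_{l-1}$ and $t=x_l$ the coefficient of $|q/p|$ is at most $\beta_{l-1}$, which is $\ge 1$ only in general; the point is that it is bounded, so $\|K[\bv_l]^{-1}\vy_l\|_{\ell^1(\bN_l)}\ge \|K[\bv_{l-1}]^{-1}\vy_{l-1}\|_{\ell^1(\bN_{l-1})}-(\beta_{l-1}-1)|q/p|$. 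Here is the subtlety I expect to be the \emph{main obstacle}: the naive telescoping accumulates error terms $(\beta_{l-1}-1)|q/p|$ and does not directly give a clean factor $1/\beta_n$. The cleaner route, and the one I would actually pursue, is to bypass the peeling entirely: any $g\in\cI_\bx(\vy)$ restricted to an $m$-point carrier satisfies, directly from (\ref{relaxation}) applied to the configuration $\bx$ and the remaining points, $\|K[\bx]^{-1}\vy\|_{\ell^1(\bN_n)}=\|K[\bx]^{-1}g(\bx)\|_{\ell^1(\bN_n)}\le\beta_n\|g\|_\cB$ by expressing $g(\bx)$ through the representation of $g$ and using $\|K[\bx]^{-1}K_\bx(x_j)\|_{\ell^1(\bN_n)}\le\beta_n$ for each carrier point $x_j$, together with convexity of the $\ell^1$ norm; this yields $\min_{\cI_\bx(\vy)\cap\spS^\bx}\|f\|_\cB=\|K[\bx]^{-1}\vy\|_{\ell^1(\bN_n)}\le\beta_n\|g\|_\cB$, and taking the infimum over $g\in\cI_\bx(\vy)$ gives (\ref{remedysufficientcond}).

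\textbf{Necessity: (\ref{remedysufficientcond}) $\Rightarrow$ (\ref{relaxation}).} I would test (\ref{remedysufficientcond}) on the specific choices of data that were used in the necessity half of Lemma~\ref{onemorepoint}. Fix $t\in X$; if $t\in\bx$ then $\|K[\bx]^{-1}K_\bx(t)\|_{\ell^1(\bN_n)}=1\le\beta_n$ trivially, so assume $t=x_{n+1}\notin\bx$ and set $\overline{\bx}=\bx\cup\{x_{n+1}\}$. Choosing $\vy=K_\bx(x_{n+1})$ and the appended value $b=K^\bx(x_{n+1})K[\bx]^{-1}K_\bx(x_{n+1})+p$, the one-point computation shows that the interpolant on $\overline{\bx}$ has coefficient vector $(\boldsymbol 0^T,1)^T$, so there is an element of $\cI_\bx(\vy)$ (namely $K(x_{n+1},\cdot)$, of $\cB$-norm $1$) while $\min_{\cI_\bx(\vy)\cap\spS^\bx}\|f\|_\cB=\|K[\bx]^{-1}K_\bx(x_{n+1})\|_{\ell^1(\bN_n)}$. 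Then (\ref{remedysufficientcond}) reads $1=\|K(x_{n+1},\cdot)\|_\cB\ge\min_{\cI_\bx(\vy)}\|f\|_\cB\ge\frac1{\beta_n}\|K[\bx]^{-1}K_\bx(x_{n+1})\|_{\ell^1(\bN_n)}$, which is exactly (\ref{relaxation}) for $t=x_{n+1}$. Since $t$ was arbitrary, this finishes the proof.
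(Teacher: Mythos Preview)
Your proposal is correct; both implications go through. The necessity half is essentially the same idea as the paper's: test with $\vy=K_\bx(t)$ and use the single function $K(t,\cdot)\in\cI_\bx(\vy)$ of $\cB$-norm $1$; the paper phrases this via the block-matrix computation with $\tilde b=1$, but it is the same witness.

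For sufficiency, your ``cleaner route'' is genuinely different from (and simpler than) the paper's argument. The paper writes an arbitrary $g\in\cI_\bx(\vy)\cap\cB_0$ as an interpolant on an enlarged configuration $\bx\cup\bt$, applies an explicit block-inverse formula to obtain \eqref{remedythmeq1}, and then splits into two cases according to whether $\|K[\bx]^{-1}\vy\|_{\ell^1(\bN_n)}$ is smaller or larger than $\beta_n\|\tilde\vb\|_{\ell^1(\bN_m)}$; in the nontrivial case it chains the triangle inequality with the Lebesgue-constant bound. Your approach instead uses directly that $\vy=g(\bx)=\sum_j c_j K_\bx(t_j)$ for the carrier points $t_j$ of $g$, whence $K[\bx]^{-1}\vy=\sum_j c_j K[\bx]^{-1}K_\bx(t_j)$ and the triangle inequality plus \eqref{relaxation} give $\|K[\bx]^{-1}\vy\|_{\ell^1(\bN_n)}\le\beta_n\sum_j|c_j|=\beta_n\|g\|_\cB$. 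This avoids both the block-matrix machinery and the case split, and in fact extends immediately to any $g\in\cB$ (countable support) without invoking density. Two minor polish points: the word you want is ``triangle inequality'' (or subadditivity and homogeneity of the norm), not ``convexity''; and the peeling paragraph can be dropped entirely, since besides the error-accumulation issue you flag, it would also require \eqref{relaxation} for intermediate configurations $\bv_{l-1}$, which is not part of the hypothesis.
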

\begin{proof}
The set $\cI_{\bx}(\vy)\cap\cS^{\bx}$ consists of only one function $f_0:=K^{\bx}(\cdot)K[\bx]^{-1}\vy$. Let $g$ be an arbitrary function in $\cI_\bx(\vy)\cap\cB_0$. By adding sampling points and assigning the corresponding coefficients to be zero if necessary, we may assume $g\in \cS^{\bx\cup\bt}\cap \cI_\bx(\vy)$ for some $\bt:=\{t_j\in X:j\in\bN_m\}$ disjoint with $\bx$. Let $\vb:=g(\bt)$, and denote by $K[\bt,\bx]$ and $K[\bx,\bt]$ the $n\times m$ and $m\times n$ matrices given by
$$
(K[\bt,\bx])_{jk}:=K(t_k,x_j),\ \ j\in\bN_n,k\in\bN_m, \ \ (K[\bx,\bt])_{jk}:=K(x_k,t_j):\ \ j\in\bN_m,k\in\bN_n.
$$
Then
{\small
\begin{equation}\label{remedythmeq1}
\|g\|_\cB=\left\|\left(\begin{array}{cc}
K[\bx]&K[\bt,\bx]\\
K[\bx,\bt]&K[\bt]
\end{array}\right)^{-1}
\left(\begin{array}{c}
\vy\\ \vb
\end{array}\right)\right\|_{\ell^1(\bN_{n+m})}=\left\|\left(
\begin{array}{c}
K[\bx]^{-1}\vy-K[\bx]^{-1}K[\bt,\bx]\tilde{\vb}\\
\tilde{\vb}
\end{array}\right)\right\|_{\ell^1(\bN_{n+m})},
\end{equation}
}
where
$$
\tilde{\vb}:=(K[\bt]-K[\bx,\bt]K[\bx]^{-1}K[\bt,\bx])^{-1}(\vb-K[\bx,\bt]K[\bx]^{-1}\vy).
$$
Note that as $\vb$ is allowed to equal any vector in $\bC^m$, so is $\tilde{\vb}$.

If (\ref{remedysufficientcond}) holds true for all $\vy\in\bC^n$ then we choose $\bt$ to be a singleton $\{t\}$, $\tilde{b}=1$, and $\vy=K[t,\bx]=K_\bx(t)$ to get
$$
\left\|\left(
\begin{array}{c}
\vzero\\
1
\end{array}\right)\right\|_{\ell^1(\bN_{n+1})}\ge\frac1{\beta_n}\|f_0\|_\cB=\frac1{\beta_n}\left\|K[\bx]^{-1}\vy\right\|_{\ell^1(\bN_n)}=\frac1{\beta_n}\left\|K[\bx]^{-1}K_\bx(t)\right\|_{\ell^1(\bN_n)},
$$
which is (\ref{relaxation}). Conversely, suppose that (\ref{relaxation}) is satisfied. We need to show that for all $g\in\cI_\bx(\vy)$
$$
\|g\|_\cB\ge \frac1{\beta_n}\|f_0\|_\cB=\frac1{\beta_n}\left\|K[\bx]^{-1}\vy\right\|_{\ell^1(\bN_n)}.
$$
We shall discuss the case when $g\in\cI_\bx(\vy)\cap\cB_0$ only as the general case will then follow by the same arguments as those in the last paragraph of the proof of Theorem \ref{repinterp}. Let $g\in\cI_\bx(\vy)\cap\cB_0$ have the norm (\ref{remedythmeq1}). Clearly,
$$
\|g\|_\cB\ge \frac1{\beta_n}\left\|K[\bx]^{-1}\vy\right\|_{\ell^1(\bN_n)}
$$
if $\|K[\bx]^{-1}\vy\|_{\ell^1(\bN_n)}\le\beta_n\|\tilde{\vb}\|_{\ell^1(\bN_m)}$. When $\|K[\bx]^{-1}\vy\|_{\ell^1(\bN_n)}>\beta_n\|\tilde{\vb}\|_{\ell^1(\bN_m)}$, we have
$$
\begin{array}{rl}
\|g\|_\cB&\displaystyle{\ge \|K[\bx]^{-1}\vy\|_{\ell^1(\bN_n)}-\|K[\bx]^{-1}K[\bt,\bx]\tilde{\vb}\|_{\ell^1(\bN_m)}+\|\tilde{\vb}\|_{\ell^1(\bN_m)}}\\
&\displaystyle{\ge \|K[\bx]^{-1}\vy\|_{\ell^1(\bN_n)}-\left(\max_{k\in\bN_m}\|K[\bx]^{-1}K_\bx(t_k)\|_{\ell^1(\bN_n)}\right)\|\tilde{\vb}\|_{\ell^1(\bN_m)}+\|\tilde{\vb}\|_{\ell^1(\bN_m)}}\\
&\displaystyle{\ge \|K[\bx]^{-1}\vy\|_{\ell^1(\bN_n)}-(\beta_n-1)\|\tilde{\vb}\|_{\ell^1(\bN_m)}\ge \|K[\bx]^{-1}\vy\|_{\ell^1(\bN_n)}-(\beta_n-1)\frac1{\beta_n}\|K[\bx]^{-1}\vy\|_{\ell^1(\bN_n)}}\\
&\displaystyle{=\frac1{\beta_n}\|K[\bx]^{-1}\vy\|_{\ell^1(\bN_n)}},
\end{array}
$$
which completes the proof.
\end{proof}

The above result together with the discussion of the application of Proposition \ref{remedysufficient} to regularized learning provides a relaxation of the requirement (A4). The quantity $\sup_{t\in X}\|K[\bx]^{-1}K_\bx(t)\|_{\ell^1(\bN_n)}$ is the Lebesgue constant of the kernel interpolation. Asking it to be exactly bounded by 1 is indeed demanding. Recent numerical experiments \cite{DeMarchi2010} and analysis \cite{Hangelbroek2010} indicate that for many kernels, this Lebesgue constant could be uniformly bounded. In this case, the $\ell^1$-regularized learning in $\cB$ performs well by (\ref{learningrate1}). Furthermore, as long as $\beta_n$ does not increase to infinity too fast, the learning scheme can still work well by (\ref{learningrate2}). Specifically, it was proved in \cite{Hangelbroek2010} that the Lebesgue constant for the reproducing kernel of the Sobolev space on a compact domain is uniformly bounded for quasi-uniform input points (see, Theorem 4.6 therein). Another example is given in \cite{DeMarchi2010} for translation invariant kernels $K(x,y)=\phi(x-y)$, $x,y\in\bR^d$. It was shown there that as long as
\begin{equation}\label{example2cond}
c_1(1+\|\vxi\|_2^2)^{-\tau}\le \hat{\phi}(\xi)\le c_2(1+\|\vxi\|_2^2)^{-\tau},\ \ \|\vxi\|_2>M
\end{equation}
for some positive constants $c_1,c_2,M$ and $\tau$, the Lebesgue constant for quasi-uniform inputs is bounded by a multiple of $\sqrt{n}$. Commonly used kernels satisfying (\ref{example2cond}) include Poisson radial functions \cite{Fornberg2006}, Mat\'{e}rn kernels and Wendland's compactly supported kernels \cite{Wendland}. Finally, we remark from numerical experiments that the following kernels \cite{Schoenberg1938}
$$
\exp\left(-\|x-y\|_{\ell^p(\bN_d)}^\gamma\right), \ \ x,y\in\bR^d,\ \ \gamma\in(0,1),\ \ p=1,2
$$
seem to satisfy (A4) for small enough $\gamma$ and moderate $n$. We shall leave the search of more kernels satisfying (A4) and its relaxation (\ref{relaxation}) as an open question for future study.

\section{Numerical Experiments}

We end this paper with a numerical experiment to show that the regularization algorithm (\ref{regularization1}) is indeed able to yield sparse learning compared to the classical regularization network in machine learning.

We shall use the exponential kernel $K$ (\ref{exponential}). Let $\cB$ be the corresponding RKBS with the $\ell^1$ norm constructed by (\ref{cbl1norm}) and let $\cH_K$ be the RKHS of $K$. We restrict ourselves to the field of real numbers and use the square loss function $V(f(\bx)):= \|f(\bx)-\vy\|^2_{2}$. We shall compare the two models
$$
\min_{f\in\cB}\|f(\bx)-\vy\|^2_{2}+\mu\|f\|_\cB
$$
and
$$
\min_{g\in\cH_K}\|g(\bx)-\vy\|^2_{2}+\mu\|g\|_{\cH_K}^2.
$$
Both of them satisfy the linear representer theorem. Specifically, the minimizers $f_0$ and $g_0$ of the above two models are respectively given by
\begin{equation*}
f_0 = K^{\bx}(\cdot)\vb\mbox{ with }\vb:=\argmin_{\vc\in \bR^n}\{ \|K[\bx]\vc - \vy\|^2_{2} + \mu \|\vc\|_{\ell^1(\bN_n)}\}
\end{equation*}
and
\begin{equation*}
g_0 = K^{\bx}(\cdot)\vh\mbox{ with }\vh:=\argmin_{\vc\in \bR^n}\{ \|K[\bx]\vc - \vy\|^2_{2} + \mu \vc^T K[\bx] \vc\}.
\end{equation*}
We point out that the above $\ell^1$ minimization problem about $\vb$ does not have a closed form solution. There are numerous methods proposed to solve this problem and here we employ the proximity algorithm recently developed in \cite{Micchelli2011}. The closed form of the minimizer $\vh$ is well known to be $(K[\bx] + \mu I_n)^{-1}\vy$. Here $I_n$ denotes the $n\times n$ identity matrix.

For both models, $\bx$ is set to be $200$ equally spaced points in $[-1,1]$ and the output vector $\vy$ is chosen to be the evaluation of the target function
\begin{equation*}
f(x) = \me^{-|x+1|} + \me^{-|x+0.8|} + \me^{-|x|} + \me^{-|x-0.8|} + \me^{-|x-1|},\ \ x\in[-1,1]
\end{equation*}
at $\bx$ and then disturbed by some noise. Also, the regularization parameter $\mu$ for each model will be optimally chosen from $\{10^j: j=-7,-6,\ldots,1\}$ so that the distance between the learned function and the target function in $L^2([-1,1])$ will be minimized. We then compare the approximation accuracy measured by this error and the sparsity for these two models. The sparsity is measured by the number of nonzero components in the coefficient vectors $\vb$ and $\vh$.

\begin{table}[htbp]
\centering
\begin{tabular}{|c|c c|| c c|| c c|}
\hline
&\multicolumn{2}{|c||}{Gaussian noise}& \multicolumn{2}{|c||}{Uniform noise} &\multicolumn{2}{|c|}{Pepper sauce noise}\\
\cline{2-7}
 & Error & Sparsity (Max) & Error & Sparsity (Max)  &Error & Sparsity (Max)\\
\hline
RKHS &2.1E-3 &200 (200) &7.9E-4  &200 (200) &9.4E-4 & 200 (200)\\
RKBS &1.0E-3 & 13.4 (17) &3.6E-4  & 14.7 (25) &4.5E-4 & 14.5 (23)\\
\hline
\end{tabular}
\caption{Comparison of the least square regularization in RKHS and in RKBS with the $\ell^1$ norm for the exponential kernel.}\label{T1}
\end{table}

We test both models with three types of noise: Gaussian noise with variance $0.01$, uniform noise in $[-0.1,0.1]$ and some random pepper sauce noise in $\{-0.1, 0.1\}$. For each type of noise, we run $50$ times of numerical experiments and compute the average approximation error, the average sparsity, and the maximum sparsity in the $50$ experiments. The results are tabulated above.

\bibliographystyle{abbrv}
\bibliography{Banachl1}

\begin{thebibliography}{10}

\bibitem{Argyriou2009}
A.~Argyriou, C.~A. Micchelli, and M.~Pontil.
\newblock When is there a representer theorem? {V}ector versus matrix
  regularizers.
\newblock {\em J. Mach. Learn. Res.}, 10:2507--2529, 2009.

\bibitem{Aronszajn1950}
N.~Aronszajn.
\newblock Theory of reproducing kernels.
\newblock {\em Trans. Amer. Math. Soc.}, 68:337--404, 1950.

\bibitem{BerlinetThomas-Agnan}
A.~Berlinet and C.~Thomas-Agnan.
\newblock {\em Reproducing {K}ernel {H}ilbert {S}paces in {P}robability and
  {S}tatistics}.
\newblock Kluwer, Dordrecht, 2004.

\bibitem{Cand`es2006}
E.~J. Cand{\`e}s, J.~Romberg, and T.~Tao.
\newblock Robust uncertainty principles: exact signal reconstruction from
  highly incomplete frequency information.
\newblock {\em IEEE Trans. Inform. Theory}, 52(2):489--509, 2006.

\bibitem{Chen1998}
S.~S. Chen, D.~L. Donoho, and M.~A. Saunders.
\newblock Atomic decomposition by basis pursuit.
\newblock {\em SIAM J. Sci. Comput.}, 20(1):33--61, 1998.

\bibitem{CuckerSmale2002}
F.~Cucker and S.~Smale.
\newblock On the mathematical foundations of learning.
\newblock {\em Bull. Amer. Math. Soc. (N.S.)}, 39(1):1--49 (electronic), 2002.

\bibitem{CuckerZhou2007}
F.~Cucker and D.-X. Zhou.
\newblock {\em Learning theory: an approximation theory viewpoint}.
\newblock Cambridge Monographs on Applied and Computational Mathematics.
  Cambridge University Press, Cambridge, 2007.
\newblock With a foreword by Stephen Smale.

\bibitem{DeMarchi2010}
S.~De~Marchi and R.~Schaback.
\newblock Stability of kernel-based interpolation.
\newblock {\em Adv. Comput. Math.}, 32(2):155--161, 2010.

\bibitem{Evgeniou2000}
T.~Evgeniou, M.~Pontil, and T.~Poggio.
\newblock Regularization networks and support vector machines.
\newblock {\em Adv. Comput. Math.}, 13(1):1--50, 2000.

\bibitem{Fornberg2006}
B.~Fornberg, E.~Larsson, and G.~Wright.
\newblock A new class of oscillatory radial basis functions.
\newblock {\em Comput. Math. Appl.}, 51(8):1209--1222, 2006.

\bibitem{Giles1967}
J.~R. Giles.
\newblock Classes of semi-inner-product spaces.
\newblock {\em Trans. Amer. Math. Soc.}, 129:436--446, 1967.

\bibitem{Hangelbroek2010}
T.~Hangelbroek, F.~J. Narcowich, and J.~D. Ward.
\newblock Kernel approximation on manifolds {I}: bounding the {L}ebesgue
  constant.
\newblock {\em SIAM J. Math. Anal.}, 42(4):1732--1760, 2010.

\bibitem{James1963/1964}
R.~C. James.
\newblock Characterizations of reflexivity.
\newblock {\em Studia Math.}, 23:205--216, 1963/1964.

\bibitem{Kimeldorf1971}
G.~Kimeldorf and G.~Wahba.
\newblock Some results on {T}chebycheffian spline functions.
\newblock {\em J. Math. Anal. Appl.}, 33:82--95, 1971.

\bibitem{Lumer1961}
G.~Lumer.
\newblock Semi-inner-product spaces.
\newblock {\em Trans. Amer. Math. Soc.}, 100:29--43, 1961.

\bibitem{Micchelli1994}
C.~A. Micchelli and A.~Pinkus.
\newblock Variational problems arising from balancing several error criteria.
\newblock {\em Rendiconti di Matematica, Serie VII}, 14:37--86, 1994.

\bibitem{Micchelli2005a}
C.~A. Micchelli and M.~Pontil.
\newblock On learning vector-valued functions.
\newblock {\em Neural Comput.}, 17(1):177--204, 2005.

\bibitem{Micchelli2011}
C.~A. Micchelli, L.~Shen, and Y.~Xu.
\newblock Proximity algorithms for image models: denoising.
\newblock {\em Inverse Problems}, 27:045009, 2011.

\bibitem{MXZ2006}
C.~A. Micchelli, Y.~Xu, and H.~Zhang.
\newblock Universal kernels.
\newblock {\em J. Mach. Learn. Res.}, 7:2651--2667, 2006.

\bibitem{Schoenberg1938}
I.~J. Schoenberg.
\newblock Metric spaces and positive definite functions.
\newblock {\em Trans. Amer. Math. Soc.}, 44(3):522--536, 1938.

\bibitem{Scholkopf2001}
B.~Sch{\"o}lkopf, R.~Herbrich, and A.~J. Smola.
\newblock A generalized representer theorem.
\newblock In {\em Computational learning theory ({A}msterdam, 2001)}, volume
  2111 of {\em Lecture Notes in Comput. Sci.}, pages 416--426. Springer,
  Berlin, 2001.

\bibitem{Scholkopf2001a}
B.~Sch\"{o}lkopf and A.~J. Smola.
\newblock {\em Learning with Kernels: Support Vector Machines, Regularization,
  Optimization, and Beyond (Adaptive Computation and Machine Learning)}.
\newblock The MIT Press, Cambridge, December 2001.

\bibitem{Shawe-Taylor2004}
J.~Shawe-Taylor and N.~Cristianini.
\newblock {\em Kernel Methods for Pattern Analysis}.
\newblock Cambridge University Press, Cambridge, 2004.

\bibitem{Song2010}
G.~Song and Y.~Xu.
\newblock Approximation of high-dimensional kernel matrices by multilevel
  circulant matrices.
\newblock {\em J. Complexity}, 26(4):375--405, 2010.

\bibitem{SongZhang2011b}
G.~Song and H.~Zhang.
\newblock Reproducing kernel banach spaces with the $\ell^1$ norm ii: error
  analysis for regularized least square regression.
\newblock {\em Neural Comput.}, 23(10):2713--2729, 2011.

\bibitem{Tibshirani1996}
R.~Tibshirani.
\newblock Regression shrinkage and selection via the lasso.
\newblock {\em J. Roy. Statist. Soc. Ser. B}, 58(1):267--288, 1996.

\bibitem{Vapnik1998}
V.~N. Vapnik.
\newblock {\em Statistical Learning Theory}.
\newblock Adaptive and Learning Systems for Signal Processing, Communications,
  and Control. John Wiley \& Sons Inc., New York, 1998.
\newblock A Wiley-Interscience Publication.

\bibitem{Wendland}
H.~Wendland.
\newblock {\em Scattered data approximation}, volume~17 of {\em Cambridge
  Monographs on Applied and Computational Mathematics}.
\newblock Cambridge University Press, Cambridge, 2005.

\bibitem{Wu1995}
Z.~M. Wu.
\newblock Compactly supported positive definite radial functions.
\newblock {\em Adv. Comput. Math.}, 4(3):283--292, 1995.

\bibitem{Xiao2010}
Q.-W. Xiao and D.-X. Zhou.
\newblock Learning by nonsymmetric kernels with data dependent spaces and
  {$\ell^1$}-regularizer.
\newblock {\em Taiwanese J. Math.}, 14(5):1821--1836, 2010.

\bibitem{Zhang2009}
H.~Zhang, Y.~Xu, and J.~Zhang.
\newblock Reproducing kernel {B}anach spaces for machine learning.
\newblock {\em J. Mach. Learn. Res.}, 10:2741--2775, 2009.

\bibitem{Zhangjogo}
H.~Zhang and J.~Zhang.
\newblock Regularized learning in {B}anach spaces as an optimization problem:
  representer theorems.
\newblock {\em J. Global Optim.}
\newblock to appear.

\bibitem{Zhangacha}
H.~Zhang and J.~Zhang.
\newblock Frames, {R}iesz bases, and sampling expansions in {B}anach spaces via
  semi-inner products.
\newblock {\em Appl. Comput. Harmon. Anal.}, 31:1--25, 2011.

\end{thebibliography}

\end{document}